\documentclass{article}

\usepackage[numbers,sort]{natbib}


    \usepackage[preprint]{neurips_2020}



\usepackage[utf8]{inputenc} 
\usepackage[T1]{fontenc}    
\usepackage{hyperref}       
\usepackage{url}            
\usepackage{booktabs}       
\usepackage{amsfonts}       
\usepackage{nicefrac}       
\usepackage{microtype}      
\usepackage{stmaryrd}

\newcommand{\st}{\text{ subject to }}

\newcommand{\prob}[1]{\Pr\left(\left\{#1\right\}\right)}
\newcommand{\proband}[2]{\Pr\left(\left\{#1\right\}\cap\left\{#2\right\}\right)}
\newcommand{\probor}[2]{\Pr\left(\left\{#1\right\}\cup\left\{#2\right\}\right)}

\newcommand{\probcond}[2]{\Pr\left(\left\{#1\right\}\Big|\left\{#2\right\}\right)}

\newcommand{\probandcond}[3]{\Pr\left(#3\left\{#1\right\}\Big|\left\{#2\right\}\right)}
\renewcommand{\forall}{\text{for all }}
\newcommand{\defeq}{\vcentcolon=}

\newcommand{\Xtr}{{\mathbf{X}_{\text{train}}}}
\newcommand{\Ytr}{\mathbf{Y}_{\text{train}}}
\newcommand{\etr}{\epsilon_{\text{train}}}
\newcommand{\X}{\mathbf{X}}

\newcommand{\ea}{\epsilon}
\newcommand{\betaBP}{\hat{\beta}^{\text{BP}}}
\newcommand{\wBP}{w^{\text{BP}}}

\newcommand{\ith}[1]{#1[i]}
\newcommand{\hi}{h_i}
\newcommand{\mycap}{\mathop{\cap}}

\newenvironment{myproof}[1] {\textbf{Proof of {#1}: }}{\hfill$\blacksquare$}

\usepackage{hyperref}
\usepackage{xcolor}
\hypersetup{
    colorlinks,
    linkcolor={red!50!black},
    citecolor={blue!50!black},
    urlcolor={blue!80!black}
}

\newcommand{\underbeta}{\underline{\beta}}
\newcommand{\hatunderbeta}{\hat{\underline{\beta}}}
\newcommand{\underw}{\underline{w}}

\newcommand{\underbetaBP}{\hat{\underline{\beta}}^{\text{BP}}}
\newcommand{\underwBP}{\underline{w}^{\text{BP}}}
\newcommand{\wltwo}{\underline{w}^{\ell_2}}
\newcommand{\ex}{\mathds{E}}
\newcommand{\SNR}{\mathrm{SNR}}

\renewcommand{\citep}{\cite}
\usepackage{mathrsfs}
\usepackage{comment}
\usepackage{amsmath, bm}
\usepackage{dsfont}
\usepackage{mathtools}
\usepackage{wrapfig}
\usepackage{amsthm}
\usepackage{amssymb}

\newtheorem{theorem}{Theorem}
\newtheorem{corollary}[theorem]{Corollary}
\newtheorem{lemma}[theorem]{Lemma}

\newtheorem{proposition}[theorem]{Proposition}

\DeclareMathOperator*{\argmax}{arg\,max}
\DeclareMathOperator*{\argmin}{arg\,min}

\newcommand{\abs}[1]{|#1|}

\title{Overfitting Can Be Harmless for Basis Pursuit,\\
But Only to a Degree}

%

\author{%
  Peizhong Ju\\
  School of ECE\\
  Purdue University\\
  West Lafayette, IN 47906\\
  \texttt{jup@purdue.edu}\\
  \And
  Xiaojun Lin\\
  School of ECE\\
  Purdue University\\
  West Lafayette, IN 47906\\
  \texttt{linx@purdue.edu}\\
  \And
  Jia Liu\\
  Department of ECE\\
  The Ohio State University\\
  Columbus, OH 43210\\
  \texttt{liu@ece.osu.edu}
}

\begin{document}

\maketitle

\begin{abstract}


Recently, there have been significant interests in studying
the so-called ``double-descent'' of the 
	generalization error of linear regression models under the
	overparameterized and overfitting regime, with the hope that
	such analysis may provide the first step towards understanding
	why overparameterized
	deep neural networks (DNN) still generalize well.
	However, to date most of these
	studies focused on the min $\ell_2$-norm solution that overfits
	the data. 
	In contrast, in this paper we study the overfitting
	solution that minimizes the $\ell_1$-norm, which is known as
	Basis Pursuit (BP) in the compressed sensing literature. Under a
	sparse true linear regression model with $p$ \emph{i.i.d.}
	Gaussian features, we show that for a large range of $p$ up to a
	limit that grows
	exponentially with the number of samples $n$, with high
	probability the model error of BP is upper bounded by a value
	that decreases with $p$.
	To the best of our
	knowledge, this is the first analytical result in the literature 
	establishing the double-descent of overfitting BP for finite $n$
	and $p$. Further, our results reveal significant differences
	between the double-descent of BP and min $\ell_2$-norm solutions. 
	Specifically, the double-descent upper-bound of BP is independent of the
	signal strength, and for high $\SNR$ and sparse models the
	descent-floor of BP can be much lower and wider than that of min
	$\ell_2$-norm solutions. 

\end{abstract}

\section{Introduction}

One of the mysteries of deep neural networks (DNN) is that they are not
only heavily over-parameterized so that they can fit the training data (even
perturbed with noise) nearly perfectly \cite{cybenko1989approximation,telgarsky2016benefits,barron1993universal,barron1994approximation}, but still produce models that
generalize well to new test data \cite{zhang2016understanding,
advani2017high}. This combination of overfitting and good
generalization challenges the classical wisdom in statistical learning
(e.g., the well-known bias-variance tradeoff) 
\cite{bishop2006pattern, hastie2009elements, stein1956inadmissibility, james1992estimation,lecun1991second,tikhonov1943stability}. 
%
%
As a first step towards
understanding why overparameterization and overfitting may be
harmless, a line of recent work has focused on linear regression models 
\cite{belkin2018understand, belkin2019two,
bartlett2019benign,
hastie2019surprises, mei2019generalization,muthukumar2019harmless}. Indeed, such
results have demonstrated an interesting ``double descent'' phenomenon
for linear models. Roughly
speaking, let $n$ be the number of training samples, and $p$ be the
number of parameters of a linear regression model. As $p$ approaches $n$
from below, the test error of the model (that tries to best fit the
training data) first decreases and then increases to infinity, which is
consistent with the well-understood bias-variance trade-off. As $p$
further increases beyond $n$, overfitting starts to occur (i.e., the
training error will always be zero). However, if one chooses the
overfitting solution that minimizes the $\ell_2$-norm, the test error
decreases again as $p$ increases further. 
Such observations, although for models very different from DNN, provide
some hint why overfitting solutions may still generalize well. 

To date most studies along this direction have focused on the minimum
$\ell_2$-norm overfitting solutions \cite{belkin2019two,
muthukumar2019harmless,hastie2019surprises,mei2019generalization,mitra2019understanding}. One possible motivation is that, at
least for linear regression problems, 
Stochastic Gradient Descent (SGD), which is often used to
train DNNs, is believed to produce the min $\ell_2$\nobreakdash-\hspace{0pt}norm overfitting
solutions
\cite{zhang2016understanding}.
%
%
However, it is unclear whether SGD will still produce the min $\ell_2$-norm
overfitting solutions for more general models, such as DNN. 
Therefore, it is important to understand whether and how double-descent
occurs for other types of overfitting solutions. Further, the min
$\ell_2$-norm solution usually does not promote sparsity. Instead, it
tends to yield small weights spread across nearly all features, which
leads to distinct characteristics of its double-descent curve (see
further comparisons below). The double-descent of other overfitting
solutions will likely have different characteristics. By understanding
these differences, we may be able to discern which type of overfitting
solutions may approximate the generalization power of DNN better.

In this paper, 
we focus on the overfitting
solution with the minimum $\ell_1$-norm. This is known as Basis Pursuit
(BP) in the compressed sensing literature \cite{chen2001atomic, chen1995examples}. 
There are several reasons why we are interested in BP with overfitting.
First, similar to $\ell_2$-minimization, it does not involve any explicit
regularization parameters, and thus can be used even if we do not know
the sparsity level or the
noise level. Second, it is well-known that using $\ell_1$-norm
promotes sparse solutions \cite{donoho2005stable,zhao2006model, meinshausen2009lasso, bickel2009simultaneous, tibshirani1996regression,chen1995examples}, which is useful in the
overparameterized regime. 
Third, it is known that the $\ell_1$-norm of the model is
closely related to its ``fat-shattering dimension,'' which is also
related to the Vapnik-Chervonenkis (V-C) dimension and the model capacity 
\cite{bartlett1998sample}. Thus, BP seems to have the
appealing flavor of ``Occam's razor'' \cite{blumer1987occam}, i.e., to
use the simplest explanation that matches the training data.
However, until now the double descent of BP has not been well studied. 
The numerical results in
\cite{muthukumar2019harmless} suggest that, for a wide range of $p$, BP
indeed exhibits double-descent and produces low test-errors.
However, no analysis is provided
in \cite{muthukumar2019harmless}. In the compressed sensing literature,
test-error bounds for BP were provided for the
overparameterized regime, see, e.g., \cite{donoho2005stable}.
However, the notion of BP therein is different as it requires that the
model does
\emph{not} overfit the training data. Hence, such results 
cannot be used to explain the 
``double-descent'' of BP in the overfitting regime. For classification problems, minimum $\ell_1$-norm solutions that interpolate categorical data were studied in \cite{liang2020precise}. However, the notion of ``overfitting'' in \cite{liang2020precise} for classification problems is quite different from that for the regression problems studied in this paper and \cite{muthukumar2019harmless}.
To the best of our knowledge, the only work that analyzed the
double-descent of overfitting BP in a regression setting is \cite{mitra2019understanding}. However, 
this work mainly 
studies the setting where both $n$ and $p$ grow to infinity at a fixed
ratio. As we show later, BP exhibits interesting dynamics of
double-descent when $p$ is exponentially larger than $n$, which
unfortunately collapsed into
a single point of $p/n \to \infty$ in the setting of \cite{mitra2019understanding}. 
%
In summary, a thorough study of the double-descent of BP for finite $n$
and $p$ is still missing. 

The main contribution of this paper is thus to provide new analytical
bounds on the model error of BP in the overparameterized regime. As in
\cite{belkin2019two}, we
consider a simple linear regression model with $p$ \emph{i.i.d.}
Gaussian features. We assume that the true model is sparse, and the
sparsity level is $s$. BP is used to train the model by exactly fitting
$n$ training samples. For a range of $p$ up to a value that grows
exponentially with
$n$, we show an upper bound on the model error that decreases with $p$,
which explains the ``double descent'' phenomenon observed for BP in the
numerical results in \cite{muthukumar2019harmless}. 
To the best of our knowledge, this is the first analytical result in the 
literature establishing the double-descent of min $\ell_1$-norm
overfitting solutions for finite $n$ and $p$. 

Our results reveal significant differences between the double-descent of
BP and min $\ell_2$-norm solutions. First, our upper bound for
the model error of BP
is independent of the signal strength (i.e., $\|\beta\|_2$ in the model in
Eq.~(\ref{eq.data})). In contrast, the double descent of min $\ell_2$-norm
overfitting solutions usually increases with $\|\beta\|_2$, suggesting
that some signals are ``spilled'' into the model error.
Second, the double descent of BP is much slower (polynomial in $\log
p$) than that of min $\ell_2$-norm overfitting solutions (polynomial in
$p$). On the other hand, this also means that the double-descent of BP
manifests over a
larger range of $p$ and is easier to observe than that of min
$\ell_2$-norm overfitting solutions. Third, with both $\ell_1$-norm and
$\ell_2$-norm minimization, there is a ``descent floor'' where the model
error reaches the lowest level. However, at high signal-to-noise ratio
($\SNR$) and for sparse models, the descent floor of BP is both lower (by a factor proportional to
$\sqrt{\frac{1}{s}\sqrt{\SNR}}$) and wider (for a range of $p$ exponential
in $n$) than that of min $\ell_2$-norm solutions. 

One additional insight revealed by our proof is the 
connection between the model error of BP and the ability for an
overparameterized model to 
fit \emph{only} the noise. Roughly speaking, as long as the model is able to 
fit only the noise with small $\ell_1$-norm solutions, BP will also
produce small model errors (see Section~\ref{sect:proof}). This
behavior also appears to be unique to BP. 

Finally, our results also reveal certain limitations of
overfitting, i.e., the descent floor of either BP or min $\ell_2$-norm
solutions cannot be as low as regularized (and non-overfitting)
solutions such as LASSO~\cite{tibshirani1996regression}. However, the type of explicit
regularization used in LASSO is usually not used in DNNs. Thus, it
remains an open question how to find practical overfitting solutions
that can achieve even lower generalization errors without any  explicit regularization.

\section{Problem setting}\label{sec.system_model}
Consider a linear model as follows: 
\begin{align}\label{eq.model}
    y=x^T{\underbeta}+\ea,
\end{align}
where $x\in\mathds{R}^{p}$ is a vector of $p$ features, $y\in\mathds{R}$ denotes the output, $\ea\in \mathds{R}$ denotes the noise, and ${\underbeta}\in\mathds{R}^p$ denotes the regressor vector. We assume that each element of $x$ follows \emph{i.i.d.} standard Gaussian distribution, and $\ea$ follows independent Gaussian distribution with zero mean and variance $\sigma^2$. Let $s$ denote the sparsity of ${\underbeta}$, i.e., ${\underbeta}$ has at most $s$ non-zero elements. Without loss of generality, we assume that all non-zero elements of ${\underbeta}$ are in the first $s$ elements. For any $p\times 1$ vector $\alpha$ (such as ${\underbeta}$), we use $\ith{\alpha}$ to denote its $i$-th element, use $\alpha_0$ to denote the $s\times 1$ vector that consists of the first $s$ elements of $\alpha$, and use $\alpha_1$ to denote the $(p-s)\times 1$ vector that consists of the remaining elements of $\alpha$. With this notation, we have ${\underbeta}=\left[\begin{smallmatrix}{\underbeta}_0\\ \mathbf{0}\end{smallmatrix}\right]$.

Let ${\underbeta}$ be the true regressor and let ${{\hatunderbeta}}$ be an estimate of ${\underbeta}$ obtained from the training data. Let $\underline{w}\defeq \hat{{\underbeta}}-{\underbeta}$. According to our model setting, the expected test error satisfies
\begin{align}\label{eq.testError_w}
    &\mathds{E}_{x,\ea}\left[\left(x^T{{\hatunderbeta}}-(x^T{\underbeta}+\ea)\right)^2\right]
    =\mathds{E}_{x,\ea}\left[(x^T \underline{w}-\ea)^2\right]
    =\|\underline{w}\|_2^2+\sigma^2.
\end{align}
Since $\sigma^2$ is given, in the rest of the paper we will mostly focus
on the model error $\|\underline{w}\|_2$. Note that if ${{\hatunderbeta}}=0$, we have $\|\underw\|_2^2=\|{\underbeta}\|_2^2$, which is also the average strength of the signal $x^T
{\underbeta}$ and is referred to as the ``null risk'' in
\citep{hastie2019surprises}. We define the signal-to-noise ratio as $\SNR\defeq \|\underbeta\|_2^2/\sigma^2$.

We next describe how BP computes ${{\hatunderbeta}}$ from training data $(\Xtr$,  $\Ytr)$, where $\Xtr\in\mathds{R}^{n\times p}$ and $\Ytr\in\mathds{R}^n$. For ease of analysis, we normalize each column of $\Xtr$ as follows.
Assume that we have $n$ \emph{i.i.d.} samples in the form of Eq.~(\ref{eq.model}). For each sample $k$, we first divide both sides of Eq. (\ref{eq.model}) by $\sqrt{n}$, i.e.,
\begin{align}\label{eq.normalize_n}
    \frac{y_k}{\sqrt{n}}=\left(\frac{x_k}{\sqrt{n}}\right)^T{\underbeta} + \frac{\ea_k}{\sqrt{n}}.
\end{align}

We then form a matrix $\mathbf{H}\in\mathds{R}^{n\times p}$ so that each row $k$ is the sample $x_k^T$. Writing $\mathbf{H}=[\mathbf{H}_1\ \mathbf{H}_2\ \cdots\ \mathbf{H}_p]$,
we then have $\mathds{E}[\|\mathbf{H}_i\|_2^2]=n$, for each column $ i\in\{1,2,\cdots,p\}$. Now, let $\Xtr=[\X_1\ \X_2\ \cdots\ \X_p]$ be normalized in such a way that
\begin{align}\label{def.norm_X}
    \X_i=\frac{\mathbf{H}_i}{\|\mathbf{H}_i\|_2},\ \forall i\in\{1,2,\cdots,p\},
\end{align}
and let each row $k$ of $\Ytr$ and $\etr$ be the corresponding values of $y_k/\sqrt{n}$ and $\ea_k/\sqrt{n}$ of the sample. Then, each column $\X_i$ will have a unit $\ell_2$-norm. We can then write the training data as
\begin{align}\label{eq.data}
    \Ytr= \Xtr\beta+\etr,
\end{align}
where
\begin{align}
    \ith{\beta} = \frac{\|\mathbf{H}_i\|_2}{\sqrt{n}}\ith{\underbeta}\ \forall i\in \{1,2,\cdots, p\}.\label{eq.scale_beta}
\end{align}
Note that the above normalization of $\Xtr$ leads to a small distortion
of the ground truth from ${\underbeta}$ to $\beta$, but it eases our
subsequent analysis. When $n$ is large, the distortion is small, which will be made precise below (in Lemma~\ref{lemma:distortion}). Further, we have $\mathds{E}[\|\etr\|_2^2]=\sigma^2$.


In the rest of this paper, we focus on the situation of overparameterization, i.e., $p>n$. Among many different estimators of $\beta$, we are interested in those that perfectly fit the training data, i.e., 
\begin{align}\label{eq.BP_constraint}
    \Xtr\hat{\beta}=\Ytr.
\end{align}
When $p>n$, there are infinitely many $\hat{\beta}$'s that satisfy Eq. \eqref{eq.BP_constraint}. In BP \citep{chen2001atomic}, $\hat{\beta}$ is chosen by solving the following problem
\begin{align}\label{eq.BP_origin}
    \min_{\tilde{\beta}}\|\tilde{\beta}\|_1,\ \st \Xtr\tilde{\beta}=\Ytr.
\end{align}
In other words, given $\Xtr$ and $\Ytr$, BP finds the overfitting solution with the minimal $\ell_1$-norm. Note that as long as $\Xtr$ has full row-rank (which occurs almost surely), Eq.~\eqref{eq.BP_origin} always has a solution. Further, when Eq.~\eqref{eq.BP_origin} has one or multiple solutions, there must exist one with at most $n$ non-zero elements (we prove this fact as Lemma~\ref{le.solution_at_most_n_non_zero} in Appendix~\ref{app:proof_n_non_zero} of Supplementary Material).
We thus use $\betaBP$ to denote any such solution with at most $n$ non-zero elements.
Define $\wBP\defeq \betaBP-\beta$. In the rest of our paper, we will show how to estimate the model error $\|\wBP\|_2$ of BP as a function of the system parameters such as $n$, $p$, $s$, and $\sigma^2$. 
Note that before we apply the solution of BP in Eq.~(\ref{eq.BP_origin}) to new test data, we should re-scale it back to (see Eq.~(\ref{eq.scale_beta}))
\begin{align*}
    \ith{\underbetaBP}=\frac{\sqrt{n}\ith{\betaBP}}{\|\mathbf{H}_i\|_2},\ \forall i\in \{1,2,\cdots, p\},
\end{align*}
and measure the generalization performance by the unscaled version of $\wBP$, i.e.,  ${\underwBP}={\underbetaBP}-{\underbeta}$. The following lemma shows that, when $n$ is large, the difference between $\wBP$ and ${\underwBP}$ in term of either $\ell_1$-norm or $\ell_2$-norm is within a factor of $\sqrt{2}$ with high probability.
\begin{lemma}
\label{lemma:distortion}
For both $d=1$ and $d=2$, we have
\begin{align*}
    &\prob{\|{\underwBP}\|_d\leq \sqrt{2}\|\wBP\|_d}\geq 1-\exp\left(-\frac{n}{16}+\ln (2n)\right),\\
    &\prob{\|\wBP\|_d\leq \sqrt{2}\|{\underwBP}\|_d}\geq 1-\exp\left(-\frac{2-\sqrt{3}}{2}n+\ln (2n)\right).
\end{align*}
\end{lemma}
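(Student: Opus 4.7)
The plan is to reduce the lemma to a per-column tail bound on $\|\mathbf{H}_i\|_2^2$ and then union bound only over the (small) support of $\wBP$, exploiting an independence argument to avoid paying an extra $\ln p$ factor that a naive union bound would incur.

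First, inverting Eq.~(\ref{eq.scale_beta}) coordinate-wise gives
\begin{align*}
\ith{\underwBP} = \ith{\underbetaBP}-\ith{\underbeta} = \frac{\sqrt{n}}{\|\mathbf{H}_i\|_2}\,\ith{\wBP},
\end{align*}
so $\|\underwBP\|_d^d = \sum_{i}(\sqrt{n}/\|\mathbf{H}_i\|_2)^d\,|\ith{\wBP}|^d$. A sufficient condition for $\|\underwBP\|_d \leq \sqrt{2}\|\wBP\|_d$ (for either $d=1$ or $d=2$) is therefore $\|\mathbf{H}_i\|_2^2 \geq n/2$ for every $i\in\mathrm{supp}(\wBP)$; symmetrically, $\|\wBP\|_d \leq \sqrt{2}\|\underwBP\|_d$ is implied by $\|\mathbf{H}_i\|_2^2 \leq 2n$ for every such $i$. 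Since $\|\mathbf{H}_i\|_2^2\sim\chi_n^2$, the Laurent--Massart inequalities yield the per-coordinate bounds $\Pr(\|\mathbf{H}_i\|_2^2<n/2)\leq e^{-n/16}$ (taking $x=n/16$ in the lower-tail inequality) and $\Pr(\|\mathbf{H}_i\|_2^2>2n)\leq e^{-(2-\sqrt{3})n/2}$ (solving $n+2\sqrt{nx}+2x=2n$ gives $x=(2-\sqrt{3})n/2$), exactly matching the two exponents in the lemma.

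The main step is the union bound restricted to $\mathrm{supp}(\wBP)\subseteq\{1,\ldots,s\}\cup\mathrm{supp}(\betaBP)$, which has cardinality at most $n+s$. For $i\leq s$ the $s$ coordinates can be summed directly, contributing $s\cdot e^{-n/16}$. For $i>s$, the crucial observation is that the indicator $\mathbf{1}_{\{i\in\mathrm{supp}(\betaBP)\}}$ is \emph{independent} of $\|\mathbf{H}_i\|_2$: $\betaBP$ is a measurable function of $(\Xtr,\Ytr)$; the columns $\X_j=\mathbf{H}_j/\|\mathbf{H}_j\|_2$ are independent of the norms $\|\mathbf{H}_j\|_2$ by rotational invariance of the Gaussian; and $\Ytr$ depends only on $\mathbf{H}_1,\ldots,\mathbf{H}_s$ because $\ith{\underbeta}=0$ for $i>s$. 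Hence
\begin{align*}
\sum_{i>s}\Pr\!\left(i\in\mathrm{supp}(\betaBP),\,\|\mathbf{H}_i\|_2^2<n/2\right) \;=\; \sum_{i>s}\Pr\!\left(i\in\mathrm{supp}(\betaBP)\right)\cdot e^{-n/16} \;\leq\; n\,e^{-n/16},
\end{align*}
using $\sum_{i>s}\Pr(i\in\mathrm{supp}(\betaBP))\leq \E|\mathrm{supp}(\betaBP)|\leq n$. Combining with the $i\leq s$ contribution gives failure probability at most $(n+s)e^{-n/16}\leq 2n\,e^{-n/16}=e^{-n/16+\ln(2n)}$ (using $s\leq n$). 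The second inequality follows by the identical argument with the upper chi-squared tail substituted.

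The crux is the independence observation in the last paragraph: without it, a naive union bound over all $p$ columns would replace $\ln(2n)$ by $\ln p$, which would be vacuous in the overparameterized regime $p\gg n$ that drives the paper. The remaining ingredients -- the chi-squared tails and the support-size bound $|\mathrm{supp}(\betaBP)|\leq n$ -- are either standard or already established in the preceding text.
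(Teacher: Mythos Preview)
Your proposal is correct and follows essentially the same approach as the paper: both proofs hinge on the key observation that, for $i>s$, the event $\{i\in\mathrm{supp}(\betaBP)\}$ is independent of $\|\mathbf{H}_i\|_2$ (since $\betaBP$ depends only on $(\Xtr,\Ytr)$, which in turn depend only on the normalized directions $\X_j$ and on $\mathbf{H}_1,\ldots,\mathbf{H}_s$), so the union bound need only be taken over at most $n+s\leq 2n$ terms rather than all $p$. The only cosmetic difference is that the paper conditions on the realization $\mathcal{A}=\mathfrak{A}$ of the support and multiplies $(1-e^{-n/16})^{2n}$, whereas you factor each joint probability and sum via $\sum_{i>s}\Pr(i\in\mathrm{supp}(\betaBP))\leq \E|\mathrm{supp}(\betaBP)|\leq n$; both routes yield the same $2n\,e^{-n/16}$ bound.
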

Therefore, in the rest of the paper we will focus on bounding $\|\wBP\|_2$.

Note that the model error of BP was also studied in \citep{donoho2005stable}.
However, the notion of BP therein is different in that the
estimator $\hat{\beta}$ only needs to satisfy
$\|\Ytr-\Xtr\hat{\beta}\|_2\leq\delta$. The main result 
(Theorem 3.1) of \citep{donoho2005stable} requires $\delta$ to be greater
than the noise level $\|\etr\|_2$,
and thus cannot be zero.  
(An earlier version \cite{donohostable} of \citep{donoho2005stable}
incorrectly claimed that $\delta$ can be 0 when $\|\etr\|_2>0$, which is
later corrected in \cite{donoho2005stable}.) Therefore, the result of
\citep{donoho2005stable} does not capture the performance of BP for the
overfitting setting of Eq.~(\ref{eq.BP_constraint}).
Similarly, the analysis of BP in \cite{donoho2001uncertainty} assumes no
observation noise, which is also different from
Eq.~(\ref{eq.BP_constraint}).

\section{Main results}\label{sec.main_results}
Our main result is the following upper bound on the model error of BP with overfitting.
\begin{theorem}[Upper Bound on $\|\wBP\|_2$]\label{th.main}
When $s\leq \sqrt{\frac{n}{7168\ln (16n)}}$, if $p\in \left[(16n)^4,\ \exp\left(\frac{n}{1792s^2}\right)\right]$, then
with probability at least $1-6/p$, we have
\begin{align}\label{eq.main_bound}
    \frac{\|\wBP\|_2}{\|\etr\|_2}\leq 2+8\left(\frac{7n}{\ln p}\right)^{1/4}.
\end{align}
\end{theorem}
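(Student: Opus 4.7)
The plan is to establish the theorem via a two-stage argument. First I reduce the $\ell_{2}$ model error of BP to an easier quantity: the smallest $\ell_{1}$-norm of any vector $w^{\star}$ satisfying $\Xtr w^{\star}=\etr$, which I call the \emph{noise-fitting $\ell_{1}$-norm}. Then I exhibit such a $w^{\star}$ explicitly by exploiting the abundance of columns available when $p\gg n$. Throughout, the Gaussian structure of $\Xtr$ is invoked through concentration and restricted-isometry inequalities, which is where the hypotheses $s\leq\sqrt{n/(7168\ln(16n))}$ (very sparse truth) and $p\leq\exp(n/(1792 s^{2}))$ (keeping $s\ln p\ll n$) come into play.

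For the reduction, start from optimality of BP in \eqref{eq.BP_origin}: for any $w^{\star}$ with $\Xtr w^{\star}=\etr$, the vector $\beta+w^{\star}$ is feasible, so $\|\betaBP\|_{1}\leq\|\beta\|_{1}+\|w^{\star}\|_{1}$. Writing $\betaBP=\beta+\wBP$ and splitting into the on-support part $\wBP_{0}\in\mathds{R}^{s}$ and the off-support part $\wBP_{1}\in\mathds{R}^{p-s}$, the reverse triangle inequality on the on-support part yields the key envelope
\[
\|\wBP_{1}\|_{1}\;\leq\;\|\wBP_{0}\|_{1}+\|w^{\star}\|_{1}.
\]
Bounding $\|\wBP_{0}\|_{1}\leq\sqrt{s}\,\|\wBP\|_{2}$, combining this envelope with the exact constraint $\Xtr\wBP=\etr$, invoking restricted isometry on the $n\times s$ Gaussian block associated with the true support (valid because $s\ln p\ll n$), and controlling $\|\Xtr v\|_{2}$ for off-support vectors $v$ via a Gaussian concentration estimate of the form $\|\Xtr v\|_{2}\lesssim\|v\|_{2}+\|v\|_{1}\sqrt{\ln p/n}$, the argument should chase through to a bound of the shape
\[
\|\wBP\|_{2}\;\lesssim\;\|\etr\|_{2}+\sqrt{\|w^{\star}\|_{1}\,\|\etr\|_{2}}.
\]

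For the construction, I build $w^{\star}$ greedily, by a matching-pursuit procedure on the $p$ columns of $\Xtr$. Because the maximum inner product of a fixed unit vector with $p$ i.i.d.\ unit-norm Gaussian columns concentrates around $\sqrt{2\ln p/n}$, each iteration shrinks the residual's $\ell_{2}$-norm by a factor $1-\Theta(\ln p/n)$ while contributing $\Theta(\sqrt{\ln p/n}\cdot\|\mathrm{residual}\|_{2})$ to $\|w^{\star}\|_{1}$. After $\Theta(n/\ln p)$ iterations, followed by a short low-dimensional clean-up that attains exact fit, telescoping gives $\|w^{\star}\|_{1}\lesssim\|\etr\|_{2}\sqrt{n/\ln p}$. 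The lower bound $p\geq(16n)^{4}$ ensures enough columns for each per-step concentration event to fail with probability only $O(1/p)$; a union bound over all iterations and the reduction-stage concentration events keeps the total failure probability at $O(6/p)$. Substituting back gives
\[
\|\wBP\|_{2}\;\lesssim\;\|\etr\|_{2}\bigl(1+(n/\ln p)^{1/4}\bigr),
\]
matching the form $2+8(7n/\ln p)^{1/4}$ of the theorem; the explicit constants then come from careful tracking of the Gaussian concentration inequalities.

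The hardest step is the $\ell_{1}$-to-$\ell_{2}$ conversion in the reduction. The trivial bound $\|\wBP_{1}\|_{2}\leq\|\wBP_{1}\|_{1}$ is far too weak, and standard compressed-sensing decoder bounds are unavailable because BP here imposes the \emph{exact} constraint $\Xtr\wBP=\etr$ rather than a noise-tolerant inequality. The interpolated estimate $\|\wBP\|_{2}\lesssim\|\etr\|_{2}+\sqrt{\|w^{\star}\|_{1}\|\etr\|_{2}}$ --- which is the source of the otherwise unexpected $1/4$-exponent in the final bound --- must simultaneously exploit the exact linear constraint, the envelope inequality above, and sharp Gaussian-width estimates for the image of the off-support columns of $\Xtr$. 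Finding the right interpolation inequality and making it quantitative with small explicit constants is the technically most delicate part.
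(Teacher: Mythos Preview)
Your high-level skeleton matches the paper's: both reduce $\|\wBP\|_2$ to the minimum $\ell_1$-norm of a noise-fitting vector (what the paper calls $w^I$), then bound that quantity by roughly $\sqrt{n/\ln p}\,\|\etr\|_2$. The envelope inequality $\|\wBP_1\|_1\leq\|\wBP_0\|_1+\|w^\star\|_1$ you derive is exactly the paper's Eq.~\eqref{eq.lemma_second}. Where you diverge is in both substeps, and one of them has a real gap.

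For the $\ell_1$-to-$\ell_2$ conversion, the paper does \emph{not} prove your interpolation inequality $\|\wBP\|_2\lesssim\|\etr\|_2+\sqrt{\|w^\star\|_1\|\etr\|_2}$ directly. Instead it works through the incoherence $M=\max_{i\neq j}|\X_i^T\X_j|$: the identity $\|\etr\|_2^2=(\wBP)^T\Xtr^T\Xtr\wBP\geq(1+M)\|\wBP\|_2^2-M\|\wBP\|_1^2$ gives the clean bound $\|\wBP\|_2\leq\|\etr\|_2+\sqrt{M}\,\|\wBP\|_1$ (Proposition~\ref{prop.wB2_wB1}), and a separate quadratic-inequality argument (Proposition~\ref{prop.bound_WB1_KM}) shows $\|\wBP\|_1\lesssim\|w^I\|_1$ once $K=\frac{1+M}{sM}-4\geq 4$, which is precisely where the hypothesis $p\leq\exp(n/(1792s^2))$ enters. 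Your proposed route instead invokes an estimate of the form $\|\Xtr v\|_2\lesssim\|v\|_2+\|v\|_1\sqrt{\ln p/n}$, but that is an \emph{upper} bound on $\|\Xtr v\|_2$; to control $\|\wBP\|_2$ from above via $\|\Xtr\wBP\|_2=\|\etr\|_2$ you need the opposite direction, a \emph{lower} bound on $\|\Xtr w\|_2$ in terms of $\|w\|_2$. The paper's incoherence inequality provides exactly that. As stated, your reduction does not close; the ``right interpolation inequality'' you flag as the hardest step is in fact the incoherence-based one above, and the $1/4$-exponent arises because $\sqrt{M}\cdot\|w^I\|_1\asymp(\ln p/n)^{1/4}\cdot\sqrt{n/\ln p}\,\|\etr\|_2=(n/\ln p)^{1/4}\|\etr\|_2$.

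For the noise-fitting bound, your matching-pursuit construction is a genuinely different route from the paper's. The paper passes to the LP dual of \eqref{eq.def_WI}, interprets it geometrically as random hyperplanes tangent to the unit sphere wrapping the point $-\etr/\|\etr\|_2$, and uses Wendel's lemma on random hemispheres to locate the dual optimum (Lemmas~\ref{le.from_B_to_C}--\ref{le.estimate_Aq}). Your greedy primal construction is plausible and would give the same $\sqrt{n/\ln p}$ scaling, but you would need to handle the dependence between the residual at step $k$ and the columns chosen earlier (e.g., by pre-partitioning the $p-s$ columns into fresh batches), and the ``short low-dimensional clean-up'' to achieve exact equality $\Xtr w^\star=\etr$ needs a quantitative bound showing it does not inflate $\|w^\star\|_1$. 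The dual approach sidesteps both issues.
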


It is well known that the model error of the minimum MSE (mean-square-error) solution has a peak when $p$ approaches $n$ from below \cite{belkin2019two}. Further, when $p=n$, $\betaBP$ coincides with the min-MSE solution with high probability. Thus, Theorem~\ref{th.main} shows that the model error of overfitting BP must decrease from that peak (i.e., may exhibit double descent) when $p$ increases beyond $n$, up to a value exponential in $n$.
Note that the assumption $s\leq \sqrt{\frac{n}{7168\ln (16n)}}$, which states that the true model is sufficiently sparse\footnote{Such sparsity requirements are not uncommon, e.g., Theorem~3.1 of \cite{donoho2005stable} also requires the sparsity to be below some function of $M$ (incoherence of $\mathbf{X}_{\text{train}}$), which is related to $n$ according to Proposition~\ref{prop.M} in our paper.}, implies that the interval $\left[(16n)^4,\ \exp\left(\frac{n}{1792s^2}\right)\right]$ is not empty. We note that the constants in Theorem~\ref{th.main} may be loose (e.g., the values of $n$ and $p$ need to be quite large for $p$ to fall into the above interval). These large constants are partly due to our goal to obtain high-probability results, and they could be further optimized. Nonetheless, our numerical results below suggest that the predicted trends (e.g., for double descent) hold for much smaller $n$ and $p$.

The upper limit of $p$ in Theorem \ref{th.main} suggests a descent floor
for BP, which is stated below. 
Please refer to Supplementary Material (Appendix \ref{app.proof_s}) for
the proof.
\begin{corollary}\label{coro.only_s}
If $1\leq s\leq \sqrt{\frac{n}{7168\ln (16n)}}$, then by setting $p=\left\lfloor\exp\left(\frac{n}{1792s^2}\right)\right\rfloor$, we have
\begin{align}\label{eq.main_bound_s}
    \frac{\|\wBP\|_2}{\|\etr\|_2}\leq 2+32\sqrt{14}\sqrt{s}
\end{align}
with probability at least $1-6/p$.
\end{corollary}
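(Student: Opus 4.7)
The plan is to derive this corollary as a direct substitution into Theorem~\ref{th.main} with the specific choice $p=\lfloor\exp(n/(1792 s^2))\rfloor$, so most of the work is bookkeeping with constants and the floor function rather than any new probabilistic argument.

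First I would verify that the chosen $p$ lies in the admissible interval $[(16n)^4,\,\exp(n/(1792 s^2))]$ of Theorem~\ref{th.main}. The upper bound is automatic because $\lfloor\exp(n/(1792 s^2))\rfloor\leq\exp(n/(1792 s^2))$. For the lower bound, I would rearrange the sparsity hypothesis $s\leq\sqrt{n/(7168\ln(16n))}$ into $n/(1792 s^2)\geq 4\ln(16n)$, which gives $\exp(n/(1792 s^2))\geq(16n)^4\geq 16^4$, so after flooring, $p$ still exceeds $(16n)^4$ comfortably.

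Next I would lower-bound $\ln p$. Since $P\defeq\exp(n/(1792 s^2))$ is at least $(16n)^4\geq 2$, we have $p\geq P-1\geq P/2$, hence $\ln p\geq n/(1792 s^2)-\ln 2$. The target is $\ln p\geq n/(7168 s^2)$, which rearranges to $3n/(4\cdot 1792\, s^2)\geq \ln 2$; this follows from the sparsity assumption, since $n/s^2\geq 7168\ln(16n)\geq 7168\ln 16$, and $7168\ln 16$ exceeds $(4\cdot 1792\ln 2)/3$ by a wide margin. So Step 2 ends with the clean estimate $\ln p\geq n/(7168 s^2)$.

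Substituting into the bound of Theorem~\ref{th.main},
\begin{align*}
\left(\frac{7n}{\ln p}\right)^{1/4}\leq\left(7\cdot 7168\, s^2\right)^{1/4}=\left(2^{10}\cdot 49\cdot s^2\right)^{1/4}=4\sqrt{14}\,\sqrt{s},
\end{align*}
so that $2+8(7n/\ln p)^{1/4}\leq 2+32\sqrt{14}\,\sqrt{s}$, which is exactly the right-hand side of \eqref{eq.main_bound_s}. The probability statement $1-6/p$ is inherited directly from Theorem~\ref{th.main}.

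There is essentially no conceptual obstacle here; the only thing to watch is that the floor operation and the $-\ln 2$ slack do not eat up the target constant. That is handled safely because the sparsity hypothesis forces $n/s^2$ to be orders of magnitude larger than what the ``$\ln 2$'' correction would demand, so all the constants match cleanly (the factor $7168$ is precisely $4\cdot 1792$, and $7\cdot 7168=2^{10}\cdot 49$ produces the $4\sqrt{14}$ on the nose).
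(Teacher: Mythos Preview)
Your proposal is correct and follows essentially the same approach as the paper: verify that the chosen $p$ satisfies the hypotheses of Theorem~\ref{th.main}, control $\ln p$ after the floor, and substitute. The only cosmetic difference is in the floor bookkeeping---the paper uses the inequality $\lfloor e^a\rfloor\ge e^{a/2}$ for $a\ge 1$ to obtain $\ln p\ge n/(3584s^2)$, whereas you use $\lfloor P\rfloor\ge P-1\ge P/2$ to obtain $\ln p\ge n/(7168s^2)$; your weaker estimate is exactly what is needed to hit the constant $32\sqrt{14}$ on the nose.
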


To the best of our knowledge, 
Theorem~\ref{th.main} and Corollary~\ref{coro.only_s} are the first in
the literature to quantify the double-descent of overfitting BP for
finite $n$ and $p$. Although these results mainly focus on large $p$, they reveal several important insights,
highlighting the significant differences (despite some similarity)
between the
double-descent of BP and min $\ell_2$-norm solutions. (The codes for the following numerical experiments can be found in our Github page\footnote{\href{https://github.com/functionadvanced/basis_pursuit_code}{https://github.com/functionadvanced/basis$\_$pursuit$\_$code}}.)

\begin{figure}
    \centering
    \includegraphics[width=5in]{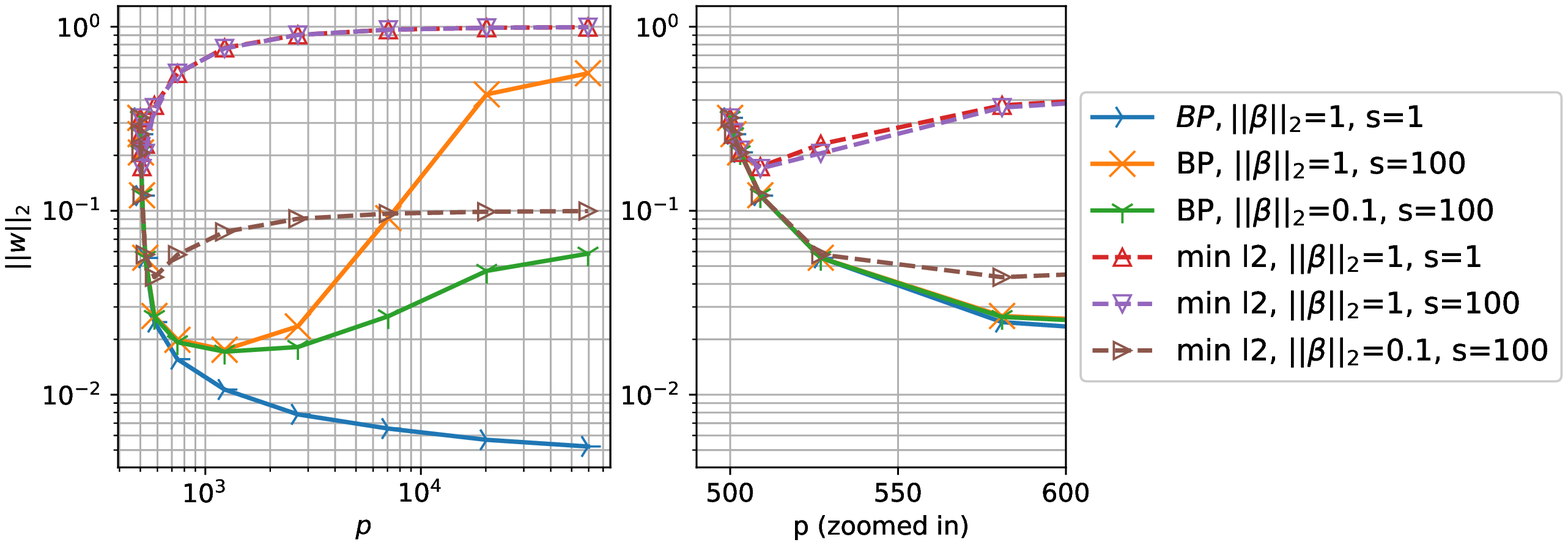}
        \caption{Compare BP with min $\ell_2$-norm for different values of $\|\beta\|_2$ and $s$, where
	$\|\etr\|_2=0.01$, $ n=500$. The right figure is a zoomed-in
	version of the left figure for $p \in [500,600]$.}
        \label{fig.compare_BP_minL2}
\end{figure}
    
\textbf{(i) The double-descent upper-bound of BP is independent of $\|\beta\|_2$. In
contrast, the descent of min $\ell_2$-norm solutions is raised by
		$\|\beta\|_2$.} 
The upper bounds in both Theorem~\ref{th.main} and Corollary~\ref{coro.only_s} 
	do not depend on the signal strength $\|\beta\|_2$. For min
		$\ell_2$-norm overfitting solutions, however, if we let $\wltwo$
		denote the model error, for comparable Gaussian models we can
		obtain (see, e.g., Theorem~2 of \cite{belkin2019two}):
\begin{eqnarray}
\label{eqn:double-descent:l2}
	\ex[\|\wltwo\|_2^2] &=& \|\beta\|_2^2 \left(1-\frac{n}{p}\right) 
		+ \frac{\sigma^2 n}{p-n-1}, \mbox{ for $p \ge n+2$.}
\end{eqnarray}
To compare with Eq.~\eqref{eq.main_bound}, recall that $\mathds{E}[\|\etr\|_2^2]=\sigma^2$. (In fact, we can show that, when $n$ is large, $\|\etr\|_2$ is close to $\sigma$ with high probability. See Supplementary Material (Appendix~\ref{ap.sec.noise}).) From Eq.~\eqref{eqn:double-descent:l2}, we can see that the right-hand-side increases
		with $\|\beta\|_2$. This difference between Eq.~\eqref{eq.main_bound} and Eq.~\eqref{eqn:double-descent:l2} is confirmed by our
		numerical results in
		Fig.~\ref{fig.compare_BP_minL2}\footnote{The
		direction of $\beta$ is chosen uniformly at random in all
		simulations.}:
		When we compare $\|\beta\|_2=1$ and $\|\beta\|_2=0.1$
		(both with $s=100$), the descent of min $\ell_2$-norm
		solutions (dashed $\triangledown$ and $\rhd$) varies significantly with $\|\beta\|_2$, while the
		descent of BP (solid $\times$ and $\Ydown$) does not vary much with $\|\beta\|_2$ within the region where the model error decreases with $p$.
Intuitively, when the signal strength is strong, it should be easier to
		detect the true model. The deterioration with
		$\|\beta\|_2$ suggests that some of the signal is
		``spilled'' into the model error of min $\ell_2$-norm
		solutions, which does not occur for BP in the descent
		region.

\emph{Remark:} Although the term ``double descent'' is often used in the literature \cite{belkin2019two}, as one can see from Fig.~\ref{fig.compare_BP_minL2}, the model error will eventually increase again when $p$ is very large. For BP, the reason is that, when $p$ is very large, there will be some columns of $\mathbf{X}_{\text{train}}$ very similar to those of true features. Then, BP will pick those very similar but wrong features, and the error will approach the null risk. As a result, the whole double-descent behavior becomes ``descent-ascent($p<n$)-descent($p>n$)-ascent''. Note that similar behavior also arises for the model error of min $\ell_2$-norm overfitting solutions (see Fig.~\ref{fig.compare_BP_minL2}). Throughout this paper, we will use the term ``descent region'' to refer to the second descent part ($p>n$), before the model error ascends for much larger $p$.

\textbf{(ii) The descent of BP is slower but easier to observe.}
The right-hand-side of Eq. (\ref{eq.main_bound}) is inversely
proportional to $\ln^{1/4} p$, which is slower than the
inverse-in-$p$ descent in Eq.~(\ref{eqn:double-descent:l2}) for min $\ell_2$-norm solutions. This
slower descent can again be verified by the numerical result in
Fig.~\ref{fig.compare_BP_minL2}. On the other hand, this property also
means that the descent of BP is much easier to observe as it holds for an
exponentially-large range of $p$, which is in sharp contrast to min
$\ell_2$-norm solutions, whose descent quickly stops and bounces back to the
null risk (see Fig.~\ref{fig.compare_BP_minL2}).

%

Readers may ask whether the slow descent in $\ln p$ is fundamental to
BP, or is just because Theorem~\ref{th.main} is an upper bound. In fact,
we can obtain the following (loose) lower bound:

\begin{proposition}[lower bound on $\|\wBP\|_2$]
\label{prop:lower:BP}
When $p\leq e^{(n-1)/16}/n$, $n \ge s$, and $n\geq 17$, we have
\begin{align*}
	\frac{\|\wBP\|_2}{\|\etr\|_2} 
	\geq \frac{1}{3\sqrt{2}}\sqrt{\frac{1}{\ln p}}
\end{align*}
with probability at least $1-3/n$.
\end{proposition}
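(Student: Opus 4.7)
The plan is to exploit the overfitting equation $\Xtr\wBP = \etr$ (which follows from $\Xtr\betaBP = \Ytr = \Xtr\beta + \etr$) together with the sparsity of $\wBP$. Since $\betaBP$ is chosen to have at most $n$ non-zero entries and $\beta$ has at most $s$ non-zero entries, $\wBP$ is supported on some index set $S$ of size at most $n+s\le 2n$ (using $n\ge s$). Restricting the action of $\Xtr$ to this support yields
\begin{align*}
\|\etr\|_2 \;=\; \|\Xtr|_S\, \wBP|_S\|_2 \;\le\; \sigma_{\max}(\Xtr|_S)\, \|\wBP\|_2,
\end{align*}
so the desired lower bound reduces to proving a uniform upper bound $\sigma_{\max}(\Xtr|_S) \le 3\sqrt{2\ln p}$ for every $S\subset\{1,\dots,p\}$ with $|S|\le 2n$, with probability at least $1-3/n$.

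To establish this uniform operator-norm bound, I would factor $\Xtr|_S = \mathbf{H}|_S D_S$, where $\mathbf{H}$ has i.i.d.\ $N(0,1)$ entries and $D_S$ is the diagonal matrix with entries $1/\|\mathbf{H}_i\|_2$, so that
\begin{align*}
\sigma_{\max}(\Xtr|_S) \;\le\; \sigma_{\max}(\mathbf{H}|_S)\cdot\max_{i\in S}\frac{1}{\|\mathbf{H}_i\|_2}.
\end{align*}
For the scaling factor, a $\chi^2_n$ lower-tail estimate gives $\|\mathbf{H}_i\|_2^2\ge n/c$ simultaneously for every $i$, with failure probability at most $p\,e^{-\Omega(n)}$; the hypothesis $p \le e^{(n-1)/16}/n$ is precisely the condition that drives this failure probability to $O(1/n)$. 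For the Gaussian factor, I would run an $\varepsilon$-net argument on the unit sphere of $\mathbb{R}^{|S|}$ combined with Gaussian concentration of $\|\mathbf{H}|_S v\|_2$, then take a union bound over $\binom{p}{k}\le(ep/k)^k$ subsets with $k=2n$; choosing the deviation parameter $t\asymp\sqrt{k\ln(ep/k)}$ yields $\sigma_{\max}(\mathbf{H}|_S)\le\sqrt{n}+\sqrt{k}+t = O(\sqrt{n\ln p})$ uniformly in $S$, with total failure probability $O(1/n)$.

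Putting the two bounds together gives $\sigma_{\max}(\Xtr|_S)\le C\sqrt{\ln p}$ for a constant $C$, and hence $\|\wBP\|_2/\|\etr\|_2 \ge 1/(C\sqrt{\ln p})$. The main obstacle is constant-tracking: because the proposition states the explicit constant $1/(3\sqrt{2})$, every intermediate constant (the $\chi^2_n$ threshold, the net parameter $\varepsilon$, and the subgaussian deviation $t$) must be tuned so that, under the assumptions $n\ge 17$ and $p\le e^{(n-1)/16}/n$, their combined effect meets $C\le 3\sqrt{2}$. Those hypotheses are exactly what makes the union-bound step succeed while also ensuring the $t/\sqrt{n}$ contribution, rather than the $1+\sqrt{k/n}$ one, dominates $\sigma_{\max}(\Xtr|_S)$ for large $p$, producing the $\sqrt{\ln p}$ scaling claimed in the proposition.
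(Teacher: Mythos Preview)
Your plan is a genuinely different route from the paper's, and it does yield the correct $\sqrt{1/\ln p}$ rate. The paper, however, avoids random-matrix theory entirely by going through $\|\wBP\|_1$ and the dual of the $\ell_1$ problem. Since $\wBP$ satisfies $\Xtr w=\etr$, one has $\|\wBP\|_1\ge\|w^J\|_1$ where $w^J$ is the min-$\ell_1$ solution of $\Xtr w=\etr$; by weak duality, $\|w^J\|_1\ge -\lambda^T\etr$ for any $\lambda$ with $|\lambda^T\X_i|\le1$ for all $i$, and the choice $\lambda=-\etr/\max_i|\X_i^T\etr|$ gives $\|\wBP\|_1\ge\|\etr\|_2\big/\max_i|\X_i^T(\etr/\|\etr\|_2)|$. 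The denominator is a maximum of $p$ \emph{one-dimensional} quantities (first coordinates of uniform points on $S^{n-1}$), and Gaussian plus chi-square tail bounds give $\|\wBP\|_1\ge\frac{1}{3}\sqrt{n/\ln p}\,\|\etr\|_2$ with probability exactly $1-3/n$ under the stated hypotheses. The same sparsity step you use, $\|\wBP\|_2\ge\|\wBP\|_1/\sqrt{n+s}\ge\|\wBP\|_1/\sqrt{2n}$, then delivers precisely $1/(3\sqrt 2)$.

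Your operator-norm route never invokes the $\ell_1$-minimality of BP, so it would give the same lower bound for \emph{any} interpolant supported on at most $2n$ indices --- more general, but you pay for it in the combinatorics. You replace a maximum over $p$ scalars by a maximum over $\binom{p}{2n}$ operator norms of $n\times 2n$ Gaussian blocks; the resulting deviation $t\approx 2\sqrt{n\ln p}$ sits on top of an additive $(1+\sqrt 2)\sqrt n$ from the mean singular value. With the standard constants and $\min_i\|\mathbf{H}_i\|_2^2\ge n/2$ one gets roughly $\sigma_{\max}(\Xtr|_S)\le\sqrt 2(1+\sqrt 2)+2\sqrt{2\ln p}$, which meets $3\sqrt{2\ln p}$ only once $\ln p\gtrsim(1+\sqrt 2)^2\approx 5.8$; the proposition's hypotheses permit smaller $p$, so the exact constant is not attained over the full stated range. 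The paper's dual argument has no such additive slack and produces the precise constant and probability directly from one-dimensional tails --- that is the ingredient your proposal is missing if the exact $1/(3\sqrt 2)$ and $1-3/n$ are required, though your approach is perfectly adequate for the rate.
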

Although there is still a significant gap from the upper bound in
Theorem~\ref{th.main}, Proposition~\ref{prop:lower:BP} does imply that the
descent of BP cannot be faster than $1/\sqrt{\ln p}$. The proof of
Proposition~\ref{prop:lower:BP} is based on a lower-bound on
$\|\wBP\|_1$, which is actually tight.  For details, please refer to
Supplementary Material (Appendix~\ref{app:lower_bounds}).

\begin{figure}
    
\end{figure}

\begin{figure}
    \centering
    \begin{minipage}[t]{0.47\textwidth}
        \centering
        \includegraphics[width=2.5in]{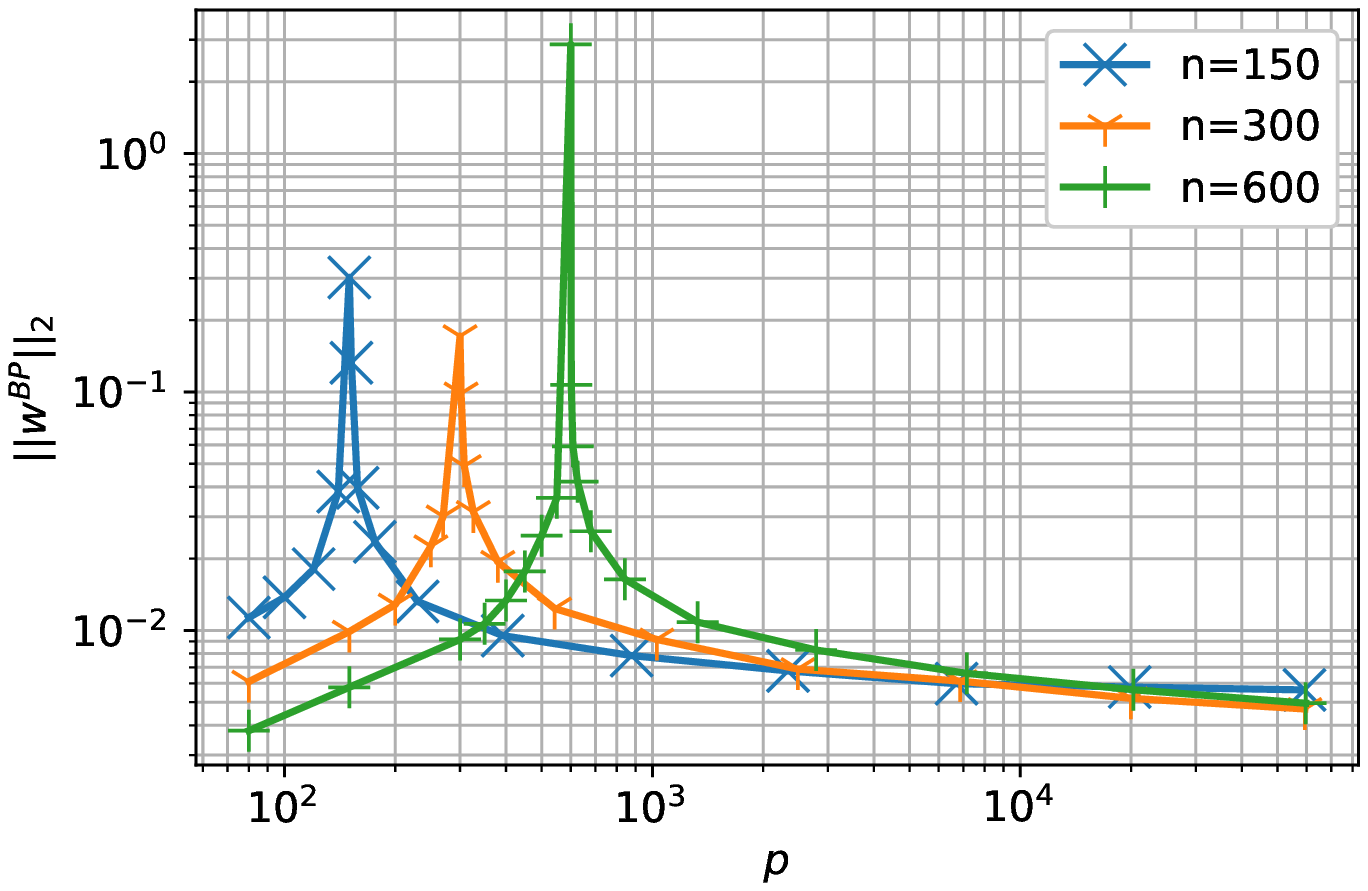}
        \caption{Curves of $\|\wBP\|_2$ with different $n$, where $\|\beta\|_2=1$, $\|\etr\|_2=0.01$, $s=1$. (Note that for $p<n$ we report the model error of the min-MSE solutions.)}
        \label{fig.change_n}
    \end{minipage}\hfill
    \begin{minipage}[t]{0.47\textwidth}
        \centering
    \includegraphics[width=2.2in]{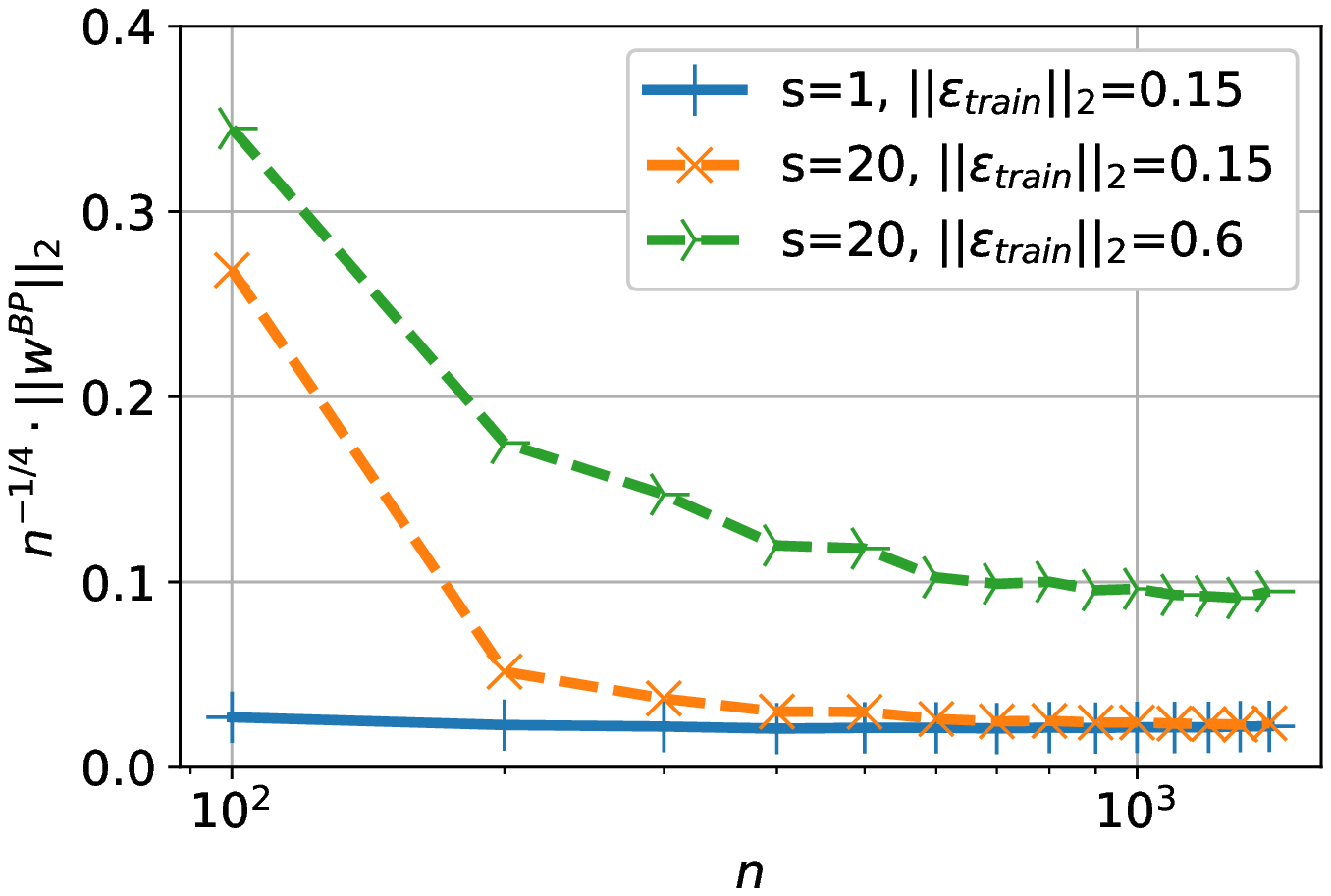}
    \caption{Curves of $n^{-1/4}\|\wBP\|_2$, where $p=5000$, $\|\beta\|_2=1$.}
    \label{fig.validate_n}
    \end{minipage}
\end{figure}

\textbf{(iii) Large $n$ increases the model error of BP, but large $p$
compensates it in such a way that the descent floor is independent of
$n$.} From Eq.~(\ref{eq.main_bound}), we can observe that, for a fixed $p$
inside the descent region, larger $n$ makes the performance of BP worse.
This is
confirmed by our numerical results in Fig.~\ref{fig.change_n}: 
when $p$ is relatively small but larger than $n$ (i.e., in the descent region), 
$\|\wBP\|_2$ is larger when $n$ increases from $n=150$ to $n=600$. (Note that when $p<n$, with high probability overfitting cannot happen, i.e., no solution can satisfy Eq.~\eqref{eq.BP_constraint}. Instead, for $p<n$ we plot the model error of the min-MSE solutions, which ascends to the peak at $p=n$ right before the descent region.)
%
While surprising, this behavior is however
reasonable. As $n$ increases, the null space corresponding to Eq.
(\ref{eq.BP_constraint}) becomes smaller. Therefore, more data means
that BP has to ``work harder" to fit the noise in those data, and
consequently BP introduces larger model errors.  On the positive side,
while larger $n$ degrades the performance of BP,
the larger $p$ (i.e.,
overparameterization) helps in a non-trivial way that cancels out the
additional increase in model error, so that ultimately the descent floor
is independent of $n$ (see Corollary~\ref{coro.only_s}). This is again
confirmed by Fig.~\ref{fig.change_n} where the lowest points of the overfitting regime ($p>n$) of all curves for
different $n$ are very close.
We note that
for the min $\ell_2$-norm solution, one can easily verify that its descent
in Eq.~(\ref{eqn:double-descent:l2}) stops at $p =
\frac{(n+1)\|\beta\|_2}{\|\beta\|_2 - \sigma}$, which leads to a descent
floor for $\|\wltwo\|_2$ at the level around $\sqrt{2 \|\beta\|_2 \sigma -
\sigma^2}$, independently of $n$. Further, for $p$ smaller
than the above value, its model error will also increase with $n$. Thus,
from this aspect both BP and min $\ell_2$-norm solutions have similar
dependency on $n$. However, as we comment below, the descent floor of BP
can be much wider and deeper.

\textbf{(iv) For high $\SNR$ and small $s$, the descent floor of BP is
lower and wider.} Comparing the above-mentioned descent floors between BP and min
$\ell_2$-norm solutions, we can see, when $\SNR=\|\beta\|_2^2/\sigma^2
\gg 1$, the
descent floor of BP is about $\Theta(\sqrt{\frac{1}{s}\sqrt{\SNR}})$
lower than that of min $\ell_2$-norm solutions. Further, since the
model error of BP decreases in $\ln p$, we expect the descent floor
of BP to be significantly wider than that of min $\ell_2$-norm
solutions. Both are confirmed numerically in Fig.
\ref{fig.compare_BP_minL2}, suggesting that BP may produce more
accurate solutions more easily for higher-$\SNR$ training data and
sparser models. 

Finally, Fig.~\ref{fig.validate_n} verifies that the quantitative
dependency of the model error on $n$, $s$ and $\|\etr\|_2$ predicted by
our upper bound in Theorem~\ref{th.main} is actually quite tight.  For
large $n$, we can see that all 
curves of $n^{-1/4}\|\wBP\|_2$ approach horizontal lines, suggesting
that 
$\|\wBP\|_2$ is indeed proportional to $n^{1/4}$ in the descent region.
(The deviation at small $n$ is because, 
for such a small $n$, $p=5000$ has already passed the descent region.)
Further, we observe that the horizontal part of the blue curve
``$s=1,\|\etr\|_2=0.15$'' almost overlaps with that of the orange curve
``$s=20,\|\etr\|_2=0.15$'', which matches with
Theorem~\ref{th.main} that the descent is independent of $s$
in the descent region. 
Finally, the horizontal part
of the green curve ``$s=20,\|\etr\|_2=0.6$'' are almost 4 times as high
as that
of the orange curve ``$s=20,\|\etr\|_2=0.15$'', which matches with
Theorem~\ref{th.main} that the descent is proportional to
$\|\etr\|_2$.

We note that with a careful choice of regularization parameters, LASSO
can potentially drive the $\ell_2$-norm of the model error to be as low as 
$\Theta(\sigma \sqrt{s \log p / n})$ \cite{meinshausen2009lasso}. As we
have shown above, neither
BP nor min $\ell_2$-norm solutions can push the model error to be this
low. However, LASSO by itself does not produce overfitting solutions,
and the type of explicit regularization in LASSO (to avoid
overfitting) is also not used in DNN. Thus, it remains a puzzle
whether and how one can design practical overfitting solution that can
reach this level of accuracy without any explicit regularization.

\section{Main ideas of the proof}
\label{sect:proof}

In this section, we present the main ideas behind the proof of Theorem \ref{th.main}, which also reveal additional insights for BP. We start with the following definition.
Let $w^I$ be the solution to the following problem (recall that $w_0$ denotes the sub-vector that consists of the first $s$ elements of $w$, i.e., corresponding to the non-zero elements of the true regressor $\underline{\beta}$):
\begin{align}\label{eq.def_WI}
    \min_w\|w\|_1,\ \st \Xtr w=\etr,\ w_0=\mathbf{0}.
\end{align}
In other words, $w^I$ is the regressor that fits only the noise $\etr$.
Assuming that the matrix $[\X_{s+1}\ \X_{s+2}\ \cdots\ \X_p]\in\mathds{R}^{n\times(p-s)}$ has full row-rank (which occurs almost surely), $w^I$ exists if and only if $p-s\geq n$, which is slightly stricter than $p > n$. The rest of this paper is based on the condition that $w^I$ exists, i.e., $p-s\geq n$. Notice that in Theorem \ref{th.main}, the condition $p\geq (16n)^4$ already implies that $p\geq (16n)^4\geq 2n\geq s+n$.

The first step (Proposition \ref{prop.bound_WB1_KM} below) is to relate
the magnitude of $\wBP$ with the magnitude of $w^I$. The reason that we
are interested in this relationship is as follows. Note that one
potential way for an overfitting solution to have a small model error is
that the solution uses the $(p-s)$ ``redundant" elements of the
regressor to fit the noise, without distorting the $s$ ``significant"
elements (that correspond to the non-zero basis of the true regressor).
In that case, as $(p-s)$ increases, it will be increasingly easier for the
``redundant" elements of the regressor to fit the noise, and thus the
model error may improve with respect to $p$. In other words, we expect
that $\|w^I\|_1$ will decrease as $p$ increases. However, it is not
always true that, as the ``redundant" elements of the regressor
fit the noise better, they do not distort the ``significant" elements of the
regressor. Indeed, $\ell_2$-minimization would be such a
counter-example: as $p$ increases, although it is also increasingly easier for
the regressor to fit the noise \citep{muthukumar2019harmless}, the
``significant" elements of the regressor also go to zero
\citep{belkin2019two}. This is precisely the reason why the model error
of the min $\ell_2$-norm overfitting solution 
in Eq.~(\ref{eqn:double-descent:l2}) 
quickly approaches the null risk
$\|\beta\|_2$ as $p$ increases. 
In contrast, Proposition \ref{prop.bound_WB1_KM} below shows
that this type of undesirable distortion will not occur for BP under
suitable conditions.

Specifically, define the \emph{incoherence} of $\Xtr$
\cite{donohostable,donoho2005stable} as 
\begin{align}\label{eq.def_M}
    &M\defeq \max_{i\neq j}\left|\X_i^T\X_j\right|,
\end{align}
where $\X_i$ and $\X_j$ denote $i$-th and $j$-th columns of $\Xtr$,
respectively. Thus, $M$ represents the largest absolute value of
correlation (i.e., inner-product) between any two columns of $\Xtr$
(recall that the $\ell_2$-norm of each column is exactly
$1$).
%
Further, let
\begin{align}\label{eq.def_K}
    K\defeq\frac{1+M}{sM}-4.
\end{align}
We then have the following proposition that relates the model error $\wBP$ to the magnitude of $w^I$.
\begin{proposition}\label{prop.bound_WB1_KM}
When $K>0$, we have
\begin{align}\label{eq.prop1}
    \|\wBP\|_1\leq \left(1+\frac{8}{K}+2\sqrt{\frac{1}{K}}\right)\|w^I\|_1+\frac{2\|\etr\|_2}{\sqrt{KM}}.
\end{align}
\end{proposition}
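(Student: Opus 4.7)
The plan is to derive a cone-type inequality relating $\wBP_0$ (the first $s$ coordinates of $\wBP$) to $\wBP_1$ (the remaining ones) from the optimality of BP, and then to bound $\|\wBP_0\|_1$ using the constraint $\Xtr\wBP = \etr$ together with the incoherence $M$; combining the two will produce the stated estimate for $\|\wBP\|_1$. For the cone step, observe that $\beta + w^I$ is feasible for Eq.~\eqref{eq.BP_origin}, since $\Xtr(\beta + w^I) = \Xtr\beta + \etr = \Ytr$ by definition of $w^I$. Optimality of $\betaBP$ then gives $\|\beta+\wBP\|_1 \leq \|\beta+w^I\|_1$, and since $\beta$ is supported on the first $s$ indices while $w^I$ vanishes there, splitting into the first $s$ and last $p-s$ blocks yields
\begin{align*}
    \|\beta_0 + \wBP_0\|_1 + \|\wBP_1\|_1 \leq \|\beta_0\|_1 + \|w^I\|_1.
\end{align*}
A reverse-triangle step on the left gives $\|\wBP_1\|_1 \leq \|\wBP_0\|_1 + \|w^I\|_1$, so $\|\wBP\|_1 \leq 2\|\wBP_0\|_1 + \|w^I\|_1$.

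For the incoherence step, I would take the inner product of $\Xtr\wBP = \etr$ with each $\X_i$, $i \leq s$, to obtain $\wBP[i] + \sum_{j\neq i}(\X_i^T\X_j)\wBP[j] = \X_i^T\etr$. Using $|\X_i^T\X_j| \leq M$ for $j \neq i$ and moving the $M|\wBP[i]|$ piece to the left rearranges this to $(1+M)|\wBP[i]| \leq |\X_i^T\etr| + M\|\wBP\|_1$. Summing over $i=1,\dots,s$, applying Cauchy--Schwarz to $\sum_i |\X_i^T\etr|$, and using a Gershgorin bound $\sum_{i=1}^s (\X_i^T\etr)^2 \leq (1+(s-1)M)\|\etr\|_2^2$ on the Gram matrix of the first $s$ columns yields
\begin{align*}
    (1+M)\|\wBP_0\|_1 \leq \sqrt{s\bigl(1+(s-1)M\bigr)}\,\|\etr\|_2 + sM\|\wBP\|_1.
\end{align*}

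Substituting the cone bound $\|\wBP\|_1 \leq 2\|\wBP_0\|_1 + \|w^I\|_1$ into the right-hand side produces $(1+M-2sM)\|\wBP_0\|_1 \leq \sqrt{s(1+(s-1)M)}\,\|\etr\|_2 + sM\|w^I\|_1$. The definition $K = (1+M)/(sM) - 4$ rewrites the leading coefficient as $sM(K+2)$, which is strictly positive whenever $K > 0$; dividing and plugging back into $\|\wBP\|_1 \leq 2\|\wBP_0\|_1 + \|w^I\|_1$ gives an inequality of the same shape as Eq.~\eqref{eq.prop1}. The main obstacle I anticipate is matching the precise constants: the direct path above leads to coefficients of the form $1 + 2/(K+2)$ on $\|w^I\|_1$ and roughly $2\sqrt{s(1+(s-1)M)}/\bigl(sM(K+2)\bigr)$ on $\|\etr\|_2$, whereas the proposition states the cleaner-but-looser $1 + 8/K + 2/\sqrt{K}$ and $2/\sqrt{KM}$. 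The ``$1/\sqrt{K}$'' factor does not emerge from the linear combination above, so I expect an additional AM-GM or Young-type split is needed: one would divide the cross term $sM\|\wBP\|_1$ between the $\|\wBP_0\|_1$ and $\|w^I\|_1$ sides with a free parameter $\lambda > 0$, obtain a quadratic inequality in $\sqrt{\|\wBP_0\|_1}$, and then tune $\lambda$ so that the resulting square-root factors collapse into the stated form. Pinning down this constant-matching is the delicate piece of the argument; once it is done, the rest is bookkeeping.
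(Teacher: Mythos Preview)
Your cone step is exactly the paper's: from feasibility of $\beta+w^I$ and optimality of $\betaBP$ you get $\|\wBP_1\|_1\le\|\wBP_0\|_1+\|w^I\|_1$ and hence $\|\wBP\|_1\le 2\|\wBP_0\|_1+\|w^I\|_1$. The divergence is in the incoherence step. Your row-wise argument (take $\X_i^T(\cdot)$ for $i\le s$, sum, Gershgorin on the $s\times s$ Gram block) is correct and yields the \emph{linear} inequality $(1+M-2sM)\|\wBP_0\|_1\le\sqrt{s(1+(s-1)M)}\,\|\etr\|_2+sM\|w^I\|_1$, i.e.\ $sM(K+2)\|\wBP_0\|_1\le\cdots$. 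That gives a perfectly valid bound $\|\wBP\|_1\le\bigl(1+\tfrac{2}{K+2}\bigr)\|w^I\|_1+\tfrac{2\sqrt{s(1+(s-1)M)}}{sM(K+2)}\|\etr\|_2$, but this does \emph{not} imply the proposition as stated: a short computation shows the ratio of your $\|\etr\|_2$-coefficient to $2/\sqrt{KM}$ equals $\sqrt{(1+(s-1)M)(1+M-4sM)}/(1+M-2sM)$, which exceeds $1$ for, e.g., $s=10$, $M=0.01$. So your bound is comparable (within a factor $\approx 1+sM/2$) but not dominated by the stated one, and no after-the-fact AM--GM split of the linear term $sM\|\wBP\|_1$ will manufacture the $1/\sqrt{K}$ shape --- a linear inequality in $\|\wBP_0\|_1$ stays linear.

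The paper's incoherence step is different and is where the $1/\sqrt{K}$ actually comes from. It works at the level of the full quadratic form: from $\etr=\Xtr\wBP$ one writes $\|\etr\|_2^2=(\wBP)^T\mathbf{G}\wBP\ge(1+M)\|\wBP\|_2^2-M\|\wBP\|_1^2$ (diagonal of $\mathbf{G}$ is $1$, off-diagonals bounded by $M$), then uses $\|\wBP\|_2^2\ge\|\wBP_0\|_2^2\ge\|\wBP_0\|_1^2/s$ and your cone bound $\|\wBP\|_1\le 2\|\wBP_0\|_1+\|w^I\|_1$. This gives a genuine \emph{quadratic} inequality in $\|\wBP_0\|_1$ with leading coefficient $\tfrac{1+M}{s}-4M=KM$, and solving it with the quadratic formula is precisely what produces the $8/K$, $2/\sqrt{K}$, and $2/\sqrt{KM}$ terms after the split $\sqrt{a+b+c}\le\sqrt{a}+\sqrt{b}+\sqrt{c}$. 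So the missing idea is to bound $\|\wBP_0\|_1$ through $\|\wBP\|_2^2$ and the Gram quadratic form rather than through the $s$ individual rows; once you do that, the constants fall out mechanically.
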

Please refer to Supplementary Material (Appendix~\ref{app:proof:wb1})
for the proof.  Proposition \ref{prop.bound_WB1_KM} shows that, as long
as $\|w^I\|_1$ is small, $\|\wBP\|_1$ will also be small. Note that in
Eq. \eqref{eq.def_M}, $M$ indicates how similar any two features
(corresponding to two columns of $\Xtr$) are. As long as $M$ is much
smaller than $1/s$, in particular if $M\leq 1/(8s)$, then the value of
$K$ defined in Eq. \eqref{eq.def_K} will be no smaller than $4$. Then,
the first term of Eq. \eqref{eq.prop1} will be at most a constant
multiple of $\|w^I\|_1$. In conclusion, $\|\wBP\|_1$ will not be much
larger than $\|w^I\|_1$ as long as the columns of $\Xtr$ are not very
similar.

Proposition \ref{prop.bound_WB1_KM} only captures the $\ell_1$-norm of $\wBP$. Instead, the test error in Eq. \eqref{eq.testError_w} is directly related to the $\ell_2$-norm of $\wBP$. Proposition \ref{prop.wB2_wB1} below relates $\|\wBP\|_2$ to $\|\wBP\|_1$.

\begin{proposition}\label{prop.wB2_wB1}
The following holds:
\begin{align*}
    \|\wBP\|_2\leq\|\etr\|_2+\sqrt{M}\|\wBP\|_1.
\end{align*}
\end{proposition}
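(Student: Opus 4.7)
The plan is to exploit the overfitting constraint $\Xtr \betaBP = \Ytr$ together with the unit-norm normalization of the columns of $\Xtr$ and the incoherence bound $M$. First, I would observe that since $\Ytr = \Xtr \beta + \etr$ and $\Xtr \betaBP = \Ytr$, subtracting gives $\Xtr \wBP = \etr$. Squaring the $\ell_2$-norm yields
\begin{align*}
\|\etr\|_2^2 = \|\Xtr \wBP\|_2^2 = \wBP^T G\, \wBP, \quad G \defeq \Xtr^T \Xtr.
\end{align*}

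Next, I would split $G$ into its diagonal and off-diagonal parts. Because each column of $\Xtr$ has been normalized to unit $\ell_2$-norm, we have $G_{ii}=1$, so the diagonal contribution is exactly $\|\wBP\|_2^2$. For the off-diagonal part, the definition of $M$ in Eq.~\eqref{eq.def_M} gives $|G_{ij}|\leq M$ for $i\neq j$, hence
\begin{align*}
\left|\sum_{i\neq j} G_{ij}\,\wBP[i]\,\wBP[j]\right|
\leq M \sum_{i\neq j} |\wBP[i]|\,|\wBP[j]|
\leq M\,\|\wBP\|_1^2.
\end{align*}
Rearranging the identity for $\|\etr\|_2^2$ thus produces $\|\wBP\|_2^2 \leq \|\etr\|_2^2 + M\,\|\wBP\|_1^2$.

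Finally, applying the elementary inequality $\sqrt{a+b}\leq \sqrt{a}+\sqrt{b}$ for $a,b\geq 0$ to the right-hand side gives $\|\wBP\|_2 \leq \|\etr\|_2 + \sqrt{M}\,\|\wBP\|_1$, which is exactly the claim. This is a short and essentially algebraic argument; there is no significant obstacle. The only conceptual point worth flagging is that the bound crucially uses the column normalization (so that the diagonal of $G$ is exactly the identity) and the fact that $\betaBP$ exactly fits $\Ytr$; without either, an extra residual term would appear and the clean form of the proposition would be lost.
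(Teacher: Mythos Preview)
Your proof is correct and follows essentially the same route as the paper. The paper establishes the inequality $\|\etr\|_2^2 \geq (1+M)\|\wBP\|_2^2 - M\|\wBP\|_1^2$ (Eq.~\eqref{eq.lemma_begin}) in the proof of Proposition~\ref{prop.bound_WB1_KM} by the same diagonal/off-diagonal splitting of $G=\Xtr^T\Xtr$, then immediately relaxes away the $(1+M)$ factor and applies $\sqrt{a+b}\leq\sqrt{a}+\sqrt{b}$; your version simply drops the extra $+M\|\wBP\|_2^2$ term one step earlier, which makes no difference to the conclusion.
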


The proof is available in Supplementary Material
(Appendix~\ref{app:proof:wb2}). 
Note that for an arbitrary vector $\alpha\in\mathds{R}^p$, we can only infer $\|\alpha\|_2\leq \|\alpha\|_1$. In contrast, Proposition \ref{prop.wB2_wB1} provides a much tighter bound for $\|\wBP\|_2$ when $M$ is small (i.e., the similarity between the columns of $\Xtr$ is low).

Combining Propositions \ref{prop.bound_WB1_KM} and \ref{prop.wB2_wB1}, we have the following corollary that relates $\|\wBP\|_2$ to $\|w^I\|_1$.
\begin{corollary}\label{coro.wB2_K}
When $K>0$, we must have
\begin{align*}
    \|\wBP\|_2\leq& \big(1+\frac{2}{\sqrt{K}}\big)\|\etr\|_2+\sqrt{M}\big(1+\frac{8}{K}+\frac{2}{\sqrt{K}}\big)\|w^I\|_1.
\end{align*}
\end{corollary}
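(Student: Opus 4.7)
The plan is to prove Corollary~\ref{coro.wB2_K} by a direct substitution of Proposition~\ref{prop.bound_WB1_KM} into Proposition~\ref{prop.wB2_wB1}. Since Proposition~\ref{prop.wB2_wB1} bounds $\|\wBP\|_2$ in terms of $\|\wBP\|_1$, and Proposition~\ref{prop.bound_WB1_KM} (valid under the hypothesis $K>0$) bounds $\|\wBP\|_1$ in terms of $\|w^I\|_1$ and $\|\etr\|_2$, the natural move is to plug the second estimate into the first.

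Concretely, I would first invoke Proposition~\ref{prop.wB2_wB1} to write
\begin{align*}
    \|\wBP\|_2 \leq \|\etr\|_2 + \sqrt{M}\,\|\wBP\|_1.
\end{align*}
Then, using $K>0$, I would apply Proposition~\ref{prop.bound_WB1_KM} to upper bound $\|\wBP\|_1$, obtaining
\begin{align*}
    \|\wBP\|_2 \leq \|\etr\|_2 + \sqrt{M}\left[\left(1+\frac{8}{K}+\frac{2}{\sqrt{K}}\right)\|w^I\|_1 + \frac{2\|\etr\|_2}{\sqrt{KM}}\right].
\end{align*}
The key simplification is that the cross term $\sqrt{M}\cdot\frac{2\|\etr\|_2}{\sqrt{KM}}$ collapses to $\frac{2\|\etr\|_2}{\sqrt{K}}$, so the $M$ dependence vanishes from the noise contribution. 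Collecting the two $\|\etr\|_2$ terms yields the coefficient $1+\frac{2}{\sqrt{K}}$ advertised in the statement, while the $\|w^I\|_1$ term retains its $\sqrt{M}$ prefactor.

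There is essentially no obstacle here beyond careful bookkeeping: the corollary is just the composition of the two preceding propositions, and the main thing to verify is that the factor $\sqrt{M}/\sqrt{KM}$ simplifies correctly and that the hypothesis $K>0$ is exactly what is needed to invoke Proposition~\ref{prop.bound_WB1_KM}. No additional probabilistic arguments or properties of $\Xtr$, $\wBP$, or $w^I$ beyond those two propositions are required.
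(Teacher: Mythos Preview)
Your proposal is correct and matches the paper's approach exactly: the corollary is stated as a direct combination of Propositions~\ref{prop.bound_WB1_KM} and~\ref{prop.wB2_wB1}, and your substitution and simplification of $\sqrt{M}\cdot\frac{2\|\etr\|_2}{\sqrt{KM}}=\frac{2\|\etr\|_2}{\sqrt{K}}$ is precisely the computation needed.
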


It remains to bound $\|w^I\|_1$ and $M$. The following proposition gives an upper bound on $\|w^I\|_1$.
\begin{proposition}\label{prop.new_WI}
When $n\geq 100$ and $p\geq (16n)^4$, with probability at least $1-2e^{-n/4}$ we have
\begin{align}\label{eq.WI}
    \frac{\|w^I\|_1}{\|\etr\|_2}\leq \sqrt{1+\frac{3n/2}{\ln p}}.
\end{align}
\end{proposition}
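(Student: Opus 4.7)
The plan is to analyze Eq.~\eqref{eq.def_WI} via LP duality. Setting $\mathbf{B}\defeq[\X_{s+1},\ldots,\X_p]\in\mathds{R}^{n\times (p-s)}$, the problem reduces (by restricting to the $p-s$ free coordinates) to $\min_{\tilde{w}}\|\tilde{w}\|_1$ subject to $\mathbf{B}\tilde{w}=\etr$. Since $\mathbf{B}$ has full row rank almost surely when $p-s\geq n$, strong LP duality gives
\begin{align*}
\|w^I\|_1 \;=\; \max_{v\in\mathds{R}^n}\bigl\{v^T\etr\ :\ \|\mathbf{B}^T v\|_\infty\leq 1\bigr\}.
\end{align*}
By Cauchy--Schwarz this is at most $\|\etr\|_2\cdot\sup\{\|v\|_2 : \|\mathbf{B}^T v\|_\infty\leq 1\}$, and by homogeneity the supremum equals $1/\inf_{u\in S^{n-1}}\max_{s<i\leq p}|u^T\X_i|$. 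Hence it suffices to show, with probability at least $1-2e^{-n/4}$, the uniform lower bound
\begin{align*}
\inf_{u\in S^{n-1}}\max_{s<i\leq p}|u^T\X_i|\ \geq\ t_0\defeq \frac{1}{\sqrt{1+3n/(2\ln p)}}.
\end{align*}

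For the pointwise concentration, the useful representation is $u^T\X_i = W_i/\sqrt{W_i^2+V_i'}$ with $W_i\defeq u^T\mathbf{H}_i\sim\mathcal{N}(0,1)$ and $V_i'$ the squared norm of the projection of $\mathbf{H}_i$ onto the orthogonal complement of $u$, so that $V_i'\sim\chi^2_{n-1}$ is independent of $W_i$. A short algebraic manipulation shows that $(u^T\X_i)^2\geq t_0^2$ is equivalent to $W_i^2\geq (2\ln p/(3n))\,V_i'$. Standard $\chi^2$-concentration gives $V_i'\leq (3/2)n$ for every $i>s$ with probability at least $1-p\,e^{-cn}$; on this event it suffices to exhibit a single index $i$ with $W_i^2\geq \ln p$. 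For fixed $u$ this fails with probability $(1-\Pr\{|Z|\geq\sqrt{\ln p}\})^{p-s}$, and the Mills' ratio lower bound $\Pr\{|Z|\geq\sqrt{\ln p}\}\gtrsim 1/\sqrt{p\ln p}$, together with $p\geq(16n)^4$, bounds this by $\exp(-c\sqrt{p/\ln p})$, which is far smaller than $e^{-n}$.

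To upgrade from a fixed $u$ to the whole sphere, I would apply the standard $\epsilon$-net argument: choose an $\epsilon$-net $\mathcal{N}_\epsilon\subset S^{n-1}$ with $|\mathcal{N}_\epsilon|\leq(3/\epsilon)^n$ and union-bound the preceding estimate over it. Because $\|\X_i\|_2=1$, the approximation loss is $|u^T\X_i-u_0^T\X_i|\leq\epsilon$, which is absorbed by proving the per-point bound with a slightly inflated threshold $t_0+\epsilon$ in place of $t_0$; taking $\epsilon$ to be a small fraction of $t_0$ suffices. The net contributes a union-bound factor $(3/\epsilon)^n\leq e^{O(n\ln(n/\ln p))}$, which is dwarfed by the per-point decay $\exp(-c\sqrt{p/\ln p})$ for $p\geq(16n)^4$, so the total failure probability comfortably fits within the $2e^{-n/4}$ budget.

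The main obstacle is pinning down the precise constant $3/2$ in $\sqrt{1+3n/(2\ln p)}$ rather than a cruder constant. That is why the proof should split $V_i = W_i^2+V_i'$ and use the tighter $V_i'\leq (3/2)n$ (rather than the looser $V_i\leq 2n$), and why the $\chi^2$ tail, the Gaussian tail of $W_i$, and the $\epsilon$-slack must all be coordinated so that the threshold that survives the net approximation is still at least $t_0$. Once these constants are balanced, combining the dual formulation, Cauchy--Schwarz, and the uniform lower bound on $\inf_u\max_i|u^T\X_i|$ yields exactly Eq.~\eqref{eq.WI}.
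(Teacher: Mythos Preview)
Your route is genuinely different from the paper's. You pass through Cauchy--Schwarz to reduce to a \emph{uniform} lower bound $\inf_{u\in S^{n-1}}\max_i|u^T\X_i|\ge t_0$ and attack that with an $\epsilon$-net. The paper never takes an infimum over the sphere: it shows, via a geometric relaxation and Wendel's hemisphere lemma, that the dual optimizer $\lambda^\star$ is (with probability $\ge 1-e^{-n/4}$) exactly along $-\etr$. Concretely, it keeps only the $q=5n$ columns $\mathbf{B}_{(1)},\dots,\mathbf{B}_{(q)}$ with largest inner product against $-\etr$, rotates them to a common ``latitude'' (Lemma~\ref{le.from_B_to_C}), and uses Wendel's lemma to rule out unbounded directions in the relaxed dual; the resulting bound is $\|w^I\|_1\le \|\etr\|_2^2/\mathbf{B}_{(5n)}^T(-\etr)$, after which the $3/2$ constant falls out of a single-direction order-statistic estimate (Lemma~\ref{le.estimate_Aq}). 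So the paper controls one direction plus a combinatorial certificate that this direction is the worst; you control all directions at once. Your approach is more ``textbook'' and would transfer to settings where the noise direction has no special status, but it forces you to balance an $\epsilon$-slack against the net size, which the paper's argument sidesteps entirely.

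There is a concrete gap in your sketch, though. The step ``$V_i'\le (3/2)n$ for every $i>s$ with probability at least $1-pe^{-cn}$'' does not deliver what you need. First, $V_i'$ depends on $u$, so you cannot fix this event once and then union-bound over the net; if you replace it by the $u$-free event $\|\mathbf{H}_i\|_2^2\le (3/2)n$, the $\chi^2_n$ tail gives only $c\approx 0.043$ (solve $2\sqrt{nx}+2x=n/2$), so $pe^{-cn}\le e^{-n/4}$ would require $\ln p\le (c-\tfrac14)n<0$, which never holds. In particular, for the stated hypotheses $n\ge 100$, $p\ge(16n)^4$ the bound $pe^{-0.043n}$ is enormous. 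You can repair this by not insisting on the $\chi^2$ event uniformly in $i$: for each fixed $u$ bound $\Pr\{|u^T\X_i|\ge (1+\delta)t_0\}$ directly (combining the Gaussian and $\chi^2$ pieces per index, as the paper does in Appendix~\ref{app.proof_of_lemma_12}), then take the product over $i$ and only afterwards union over the net. That restructuring is where the real work of recovering the exact constant $3/2$ lives, and your current proposal does not yet carry it out.
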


The proof of Proposition \ref{prop.new_WI} is quite involved and is 
available in Supplementary Material (Appendix~\ref{app:proof:wI}).
Proposition \ref{prop.new_WI} shows that $\|w^I\|_1$ decreases in $p$ at
the rate of $O(\sqrt{n/\ln p})$. This is also the reason that $n/\ln p$
shows up in the upper bound in Theorem \ref{th.main}. Further,
$\|w^I\|_1$ is upper bounded by a value proportional to $\|\etr\|_2$, which, when combined with
Corollary \ref{coro.wB2_K}, implies that $\|\wBP\|_2$ is on the order of
$\|\etr\|_2$. Note that the decrease of $\|w^I\|$ in $p$ trivially follows from its
definition in Eq.~\eqref{eq.def_WI} because, when $w^I$ contains
more elements, the optimal $w^I$ in Eq.~\eqref{eq.def_WI} should only have a
smaller norm. In contrast, the contribution of Proposition \ref{prop.new_WI} is in
capturing the exact speed with which $\|w^I\|_1$ decreases with $p$,
which has not been studied in the literature. 
When $p$ approaches
$+\infty$, the upper bound in Eq.~\eqref{eq.WI} becomes $1$. Intuitively,
this is because with an infinite number of features, eventually there are
columns of $\Xtr$ that are very close to the direction of $\etr$. By choosing
those columns, $\|w^I\|_1$ approaches $\|\etr\|_2$. 
Finally, the upper bound in Eq.~\eqref{eq.WI} increases with the number of
samples~$n$. As we discussed earlier, this is because as $n$ increases,
there are more constraints in Eq.~\eqref{eq.def_WI} for $w^I$ to fit. Thus,
the magnitude of $w^I$ increases.

Next, we present an upper bound on $M$ as follows.
\begin{proposition}\label{prop.M}
When $p\leq e^{n/36}$, with probability at least $1-2e^{-\ln p}-2e^{-n/144}$ we have
\begin{align*}
    M\leq 2\sqrt{7}\sqrt{\frac{\ln p}{n}}.
\end{align*}
\end{proposition}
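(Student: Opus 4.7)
The plan is to exploit the fact that after normalization each $\X_i$ is a uniformly distributed unit vector on $S^{n-1}$, so $\X_i^T\X_j$ is the cosine of the angle between two such independent vectors, which concentrates near zero at rate $1/\sqrt{n}$. I need two ingredients: (a) a sharp Gaussian-type tail bound on each individual pair, and (b) a union bound over all $\binom{p}{2}$ pairs, with a supporting chi-square concentration so that the denominators $\|\mathbf{H}_j\|_2$ don't wreck the bound.

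First, I would rewrite
\begin{align*}
    \X_i^T\X_j = \frac{\mathbf{H}_i^T\mathbf{H}_j}{\|\mathbf{H}_i\|_2\,\|\mathbf{H}_j\|_2}.
\end{align*}
Conditional on $\mathbf{H}_i$, the quantity $Z_{ij}\defeq \mathbf{H}_i^T\mathbf{H}_j/\|\mathbf{H}_i\|_2$ is the inner product of $\mathbf{H}_j$ with a fixed unit vector, hence $Z_{ij}\sim N(0,1)$; moreover, by rotational invariance of the Gaussian, $Z_{ij}$ is independent of $\|\mathbf{H}_j\|_2$. Thus $|\X_i^T\X_j|=|Z_{ij}|/\|\mathbf{H}_j\|_2$, so I only need to upper bound $|Z_{ij}|$ and lower bound $\|\mathbf{H}_j\|_2$.

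Second, I would control the denominators globally. Since $\|\mathbf{H}_j\|_2^2\sim\chi_n^2$, a Laurent--Massart bound gives $\Pr(\|\mathbf{H}_j\|_2^2\le n/2)\le e^{-n/16}$. A union bound across $j=1,\dots,p$ yields $\min_j\|\mathbf{H}_j\|_2\ge \sqrt{n/2}$ with probability at least $1-p\,e^{-n/16}$, and under the hypothesis $p\le e^{n/36}$ this is at least $1-e^{-5n/144}$, which absorbs into the $2e^{-n/144}$ term in the statement (tuning the relative constants in the chi-square deviation gives the exact bound). On this good event, it suffices to show $|Z_{ij}|\le 2\sqrt{7}\sqrt{\ln p/n}\cdot\sqrt{n/2}=\sqrt{14\ln p}$ for every pair $i\neq j$.

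Third, I would bound the numerators by a Gaussian tail plus a union bound over ordered pairs. For a single pair, $\Pr(|Z_{ij}|\ge \sqrt{14\ln p})\le 2e^{-7\ln p}=2p^{-7}$. Union-bounding over at most $p^2$ pairs gives failure probability at most $2p^{-5}$, which is comfortably below the $2e^{-\ln p}=2/p$ term in the stated probability (the bound in the proposition is in fact loose on this side; the constant $2\sqrt{7}$ could be sharpened). Crucially, the union bound does not require independence across pairs—only the per-pair marginal—so the fact that $Z_{ij}$ and $Z_{ik}$ share $\mathbf{H}_i$ is irrelevant. Combining the two good events via a final union bound gives the claim.

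The main subtlety, rather than an obstacle, is bookkeeping the conditioning so that the Gaussian tail applied to $Z_{ij}$ is legitimate uniformly in $\|\mathbf{H}_j\|_2$; this is handled by the independence of $Z_{ij}$ and $\|\mathbf{H}_j\|_2$ noted above. Everything else reduces to standard chi-square concentration and Gaussian tails, stitched together by two union bounds.
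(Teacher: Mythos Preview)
Your overall strategy is sound and yields the proposition, but one claim you make is false, though fortunately you do not actually need it. You assert that, conditional on $\mathbf{H}_i$, the quantity $Z_{ij}=\mathbf{H}_i^T\mathbf{H}_j/\|\mathbf{H}_i\|_2$ is independent of $\|\mathbf{H}_j\|_2$. This is incorrect: writing $\mathbf{H}_j=Z_{ij}\,u+V$ with $u=\mathbf{H}_i/\|\mathbf{H}_i\|_2$ and $V\perp u$, one has $\|\mathbf{H}_j\|_2^2=Z_{ij}^2+\|V\|_2^2$, so $Z_{ij}$ and $\|\mathbf{H}_j\|_2$ are clearly dependent. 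However, your argument never needs this independence. You only use the \emph{marginal} law of $Z_{ij}$ (which is indeed $N(0,1)$, since the conditional law given $\mathbf{H}_i$ does not depend on $\mathbf{H}_i$) and the marginal law of $\|\mathbf{H}_j\|_2^2$ (chi-square), and then combine the two good events by a union bound on their complements. That step requires no independence whatsoever, so the proof survives once you drop the erroneous remark.

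Your route differs from the paper's in a genuine way. The paper keeps the raw inner product $\mathbf{H}_i^T\mathbf{H}_j=\sum_k H_{ik}H_{jk}$ and bounds its tail via a Chernoff argument, computing the moment generating function of a product of two independent standard Gaussians, $\mathds{E}[e^{tXY}]=1/\sqrt{1-t^2}$, and optimizing over $t$; it then controls both $\|\mathbf{H}_i\|_2$ and $\|\mathbf{H}_j\|_2$ by chi-square concentration per pair before union-bounding over $p(p-1)$ pairs. Your observation that $Z_{ij}\sim N(0,1)$ exactly absorbs one of the two norms for free and replaces the MGF optimization by a one-line standard Gaussian tail, and you only union-bound the chi-square event over $p$ indices rather than $p^2$ pairs. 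This is more elementary and in fact gives slack: your failure probabilities come out as $2p^{-5}$ and $e^{-5n/144}$, comfortably inside the stated $2/p$ and $2e^{-n/144}$. The paper's Chernoff route, on the other hand, does not need to condition on $\mathbf{H}_i$ at all and treats the numerator symmetrically in $i,j$; it is a bit heavier computationally but avoids the conditioning bookkeeping entirely.
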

The proof is available in Supplementary Material (Appendix~\ref{app.M}).
To understand the intuition behind, note that it is not hard to verify that for any $i\neq j$, the standard deviation of $\X_i^T\X_j$ is approximately equals to $1/\sqrt{n}$.
Since $M$ defined in Eq. \eqref{eq.def_M} denotes the maximum over $p\times(p-1)$ such
pairs of columns, it will grow faster than $1/\sqrt{n}$.
Proposition \ref{prop.M} shows that the additional multiplication factor
is of the order $\sqrt{\ln p}$.
%
As $p$ increases, eventually we can find some columns that are close to each other, which implies that $M$ is large. When some columns among the last $(p-s)$ columns of $\Xtr$ are quite similar to the first $s$ columns, $M$ will be large and BP cannot distinguish the true features from spurious features. This is the main reason that the ``double descent" will eventually stop when $p$ is very large, and thus Theorem \ref{th.main} only holds up to a limit of $p$. 

Combining Propositions \ref{prop.new_WI} and \ref{prop.M}, we can then prove Theorem~\ref{th.main}. Please
refer to Supplementary Material (Appendix~\ref{app:proof:main}) for details. 


\section{Conclusions and future work}
In this paper, we studied the generalization power of basis pursuit (BP)
in the overparameterized regime when the model overfits the data. Under
a sparse linear model with \emph{i.i.d.} Gaussian features, we showed that
the model error of BP exhibits ``double descent" in a way quite
different from min $\ell_2$-norm solutions. Specifically, the
double-descent upper-bound of BP is independent of the signal strength. Further, for high
$\SNR$ and sparse models, the descent-floor of BP can be much lower and
wider than that of min $\ell_2$-norm solutions. 

There are several interesting directions for future work. First, the gap between our upper bound
(Theorem~\ref{th.main}) and lower bound (Proposition~\ref{prop:lower:BP}) is still large, which suggests rooms to tighten these bounds.
Second, these bounds work when $p$ is much larger than $n$, e.g., $p\sim \exp(n)$. It would be useful to also understand the error bound of BP when $p$ is just a little larger than $n$ (similar to \cite{mitra2019understanding} but for finite $p$ and $n$).
Third, we only study
isotropic Gaussian features in this paper.
It would be important to see if our main conclusions can also be
generalized to other feature models (e.g., Fourier features
\citep{rahimi2008random}), models with mis-specified features
\citep{belkin2019two, hastie2019surprises,mitra2019understanding}, or even the 2-layer neural
network models of \citep{mei2018mean}. 
Finally, we hope that the difference between min $\ell_1$-norm solutions and min $\ell_2$-norm solutions reported here could help us understand the generalization power of overparameterized DNNs, or lead to training methods for DNNs with even better performance in such regimes.

\section*{Acknowledgement}
This work has been supported in part by an NSF sub-award via Duke University (NSF IIS-1932630), NSF grants CAREER CNS-1943226, ECCS-1818791, CCF-1758736, CNS-1758757, CNS-1717493, ONR grant N00014-17-1-2417, and a Google Faculty Research Award.

\section*{Broader Impact}

Understanding the generalization power of heavily over-parameterized
networks is one of the most foundation aspects of deep learning. By
understanding the double descent of generalization errors for Basis Pursuit
(BP) when overfitting occurs, our work advances the understanding
of how superior generalization power can arise for overfitting
solutions. Such an understanding will contribute to laying a solid
theoretical foundation of deep learning, which in turn may lead to
practical guidelines in applying deep learning in diverse fields such as 
image processing and natural languages processing.

Controlling the $\ell_1$-norm also plays a fundamental role in optimization with
sparse models, which have found important applications in, e.g.,
compressive sensing and matrix completion. Without thoroughly studying
the overparameterized regime of BP (which minimizes $\ell_1$-norm while
overfitting the data), the theory of double descent remains incomplete.

The insights revealed via our proof techniques in Section 4 (i.e., the
relationship between the error of fitting data and the error of fitting
only noise) not only help to analyze the double descent of BP, but may
also be of value to other more general models. 

Potential negative impacts: Our theoretical results in this paper are
the first step towards understanding the double descent of BP, and
should be used with care. In particular, there is still a significant
gap between our upper and lower bounds. Thus, the significance of
our theories lies more in revealing the general trend rather than
precise characterization of double descent.  Further, the Gaussian model
may also be a limiting factor. Therefore, more efforts will be needed to
sharpen the bounds and generalize the results for applications in
practice.

\bibliographystyle{abbrv}



\newpage
\appendix

\section{Proof of Lemma~\ref{le.solution_at_most_n_non_zero}}\label{app:proof_n_non_zero}
\begin{lemma}\label{le.solution_at_most_n_non_zero}
If Eq.~\eqref{eq.BP_origin} has one or multiple solutions, there must exist one with at most $n$ non-zero elements.
\end{lemma}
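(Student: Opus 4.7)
\textbf{Proof proposal for Lemma~\ref{le.solution_at_most_n_non_zero}.} The plan is to start from an arbitrary optimal solution of Eq.~\eqref{eq.BP_origin} and, while the number of non-zero entries exceeds $n$, produce a new optimal solution with strictly fewer non-zeros. Since the non-zero count is a non-negative integer that strictly decreases at each step, the procedure must terminate at an optimal solution with at most $n$ non-zero entries.

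Concretely, let $\hat\beta$ be an optimal solution and let $S=\mathrm{supp}(\hat\beta)$. If $|S|\le n$ we are done. Otherwise, the $|S|>n$ columns of $\Xtr$ indexed by $S$ lie in $\mathbb{R}^n$ and are therefore linearly dependent, so there exists a non-zero vector $v\in\mathbb{R}^p$ with $\mathrm{supp}(v)\subseteq S$ and $\Xtr v=0$. Define $\hat\beta(t)\defeq\hat\beta+tv$; feasibility is preserved for every $t\in\mathbb{R}$ because $\Xtr\hat\beta(t)=\Ytr$. For $|t|$ small enough that no entry of $\hat\beta(t)$ has crossed zero, the signs of the entries on $S$ coincide with those of $\hat\beta$ and all entries outside $S$ remain zero, so
\begin{align*}
\|\hat\beta(t)\|_1 \;=\; \|\hat\beta\|_1 + t\sum_{i\in S}\sign(\hat\beta_i)\,v_i
\end{align*}
on an open interval around $0$. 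Optimality of $\hat\beta$ forces the slope $\sum_{i\in S}\sign(\hat\beta_i)v_i$ to be zero (otherwise moving slightly in the sign-decreasing direction would strictly reduce $\|\cdot\|_1$ while preserving feasibility). Hence $\|\hat\beta(t)\|_1$ is constant, and equal to $\|\hat\beta\|_1$, throughout the interval of $t$ for which no sign change has occurred.

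Now choose the sign of $t$ so that at least one coordinate of $\hat\beta(t)$ is driven toward zero (possible because $v\neq 0$ on $S$), and increase $|t|$ to the first value $t^\star$ at which some coordinate $\hat\beta_{i^\star}+t^\star v_{i^\star}$ becomes exactly zero. Then $\hat\beta(t^\star)$ is still feasible, still has $\ell_1$-norm equal to $\|\hat\beta\|_1$ (so it is still optimal), and $\mathrm{supp}(\hat\beta(t^\star))\subsetneq S$. Replacing $\hat\beta$ by $\hat\beta(t^\star)$ and repeating the argument, we decrease the support size by at least one each iteration; after finitely many iterations the support has size at most $n$, which yields the desired optimal solution.

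The only mildly delicate step is justifying that the slope of $t\mapsto\|\hat\beta(t)\|_1$ at $0$ must vanish; this is a standard consequence of $\hat\beta$ being a minimizer of a convex function over the affine set $\{\tilde\beta:\Xtr\tilde\beta=\Ytr\}$ along a direction $v$ lying in that affine set, so I do not expect any real obstacle. Everything else is elementary linear algebra together with the pigeonhole observation $|S|>n\Rightarrow$ linear dependence of the corresponding columns.
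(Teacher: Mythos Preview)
Your proposal is correct and follows essentially the same approach as the paper: both exploit the linear dependence of more than $n$ columns in $\mathbb{R}^n$ to find a null-space direction $v$ with $\Xtr v=0$, then use the piecewise linearity of $t\mapsto\|\hat\beta+tv\|_1$ to reach an optimal solution with strictly smaller support. The paper frames this as a proof by contradiction starting from a minimal-support counterexample and handles the endpoints via a three-case analysis, whereas you argue directly that optimality forces the slope to vanish and iterate---a minor presentational difference, not a substantive one.
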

\begin{proof}
We prove by contradiction. Suppose on the contrary that every solution to Eq.~\eqref{eq.BP_origin} has at least $(n+1)$ non-zero elements. Let $\beta_0$ denote a solution with the smallest number of non-zero elements. Let $\mathcal{A}$ denote the set of indices of non-zero elements of $\beta_0$. Then, we have $|\mathcal{A}|\geq n+1$. Below, we will show that there must exist another solution to Eq.~\eqref{eq.BP_origin} with strictly fewer non-zero elements than $\beta_0$, which leads to a contradiction. Towards this end, note that since $\Xtr$ has only $n$ rows, the subset of columns $\X_i$, $i\in \mathcal{A}$, must be linear dependent. Therefore, we can always find a non-empty set $\mathcal{B}\in\mathcal{A}$ and coefficients $c_i\neq 0$ for $i\in\mathcal{B}$ such that
\begin{align}\label{eq.temp_101101}
    \sum_{i\in\mathcal{B}} c_i\X_i=\bm{0}.
\end{align}
Define $\beta_\lambda\in\mathds{R}^p$ for $\lambda\in\mathds{R}$ such that
\begin{align*}
    \beta_\lambda[i]=\begin{cases}
    \beta_0[i]+\lambda c_i,&\text{ if }i\in\mathcal{B},\\
    \beta_0[i],&\text{ otherwise}.
    \end{cases}
\end{align*}
Note that this definition is consistent with the definition of $\beta_0$ when $\lambda=0$.
Thus, for any $\lambda\in\mathds{R}$, we have
\begin{align}
    \Xtr\beta_\lambda&=\Xtr\beta_0+\lambda\sum_{j=1}^k c_j\X_{b_j}\nonumber\\
    &=\Xtr\beta_0\text{ (by Eq.~\eqref{eq.temp_101101})}\nonumber\\
    &=\Ytr\text{ (since $\beta_0$ satisfies the constraint of Eq.~\eqref{eq.BP_origin})}.\label{eq.temp_101102}
\end{align}
In other words, any $\beta_\lambda$ also satisfies the constraint of Eq.~\eqref{eq.BP_origin}.
Define
\begin{align*}
    &\mathcal{L}\defeq\left\{i\in\mathcal{B}\ \Bigg|\ -\frac{\beta_0[i]}{c_i}<0\right\},\quad \mathcal{U}\defeq\left\{i\in\mathcal{B}\ \Bigg|\ -\frac{\beta_0[i]}{c_i}>0\right\},\\
    &\mathsf{LB}\defeq\begin{cases}
    \max_{i\in\mathcal{L}}\left(-\frac{\beta_0[i]}{c_i}\right),&\text{ if }\mathcal{L}\neq \varnothing,\\
    0,&\text{ otherwise},
    \end{cases}\\
    &\mathsf{UB}\defeq\begin{cases}
    \min_{i\in\mathcal{U}}\left(-\frac{\beta_0[i]}{c_i}\right),&\text{ if }\mathcal{U}\neq \varnothing,\\
    0,&\text{ otherwise}.
    \end{cases}
\end{align*}
Base on those definitions, we immediately have the following two properties for the interval $[\mathsf{LB},\ \mathsf{UB}]$. First,
we must have $[\mathsf{LB},\ \mathsf{UB}]\neq \varnothing$. This can be proved by contradiction. Suppose on the contrary that $[\mathsf{LB},\ \mathsf{UB}]=\varnothing$. Because by definition $\mathsf{LB}\leq 0$ and $\mathsf{UB}\geq 0$, we must have $\mathsf{LB}=\mathsf{UB}=0$. Because $\mathsf{LB}=0$, we must have $\mathcal{L}=\varnothing$. Because $\mathsf{UB}=0$, we must have $\mathcal{U}=\varnothing$. Thus, we have $\mathcal{B}=\mathcal{L}\cup \mathcal{U}=\varnothing$, which contradicts the fact that $\mathcal{B}$ is not empty. We can thus conclude that $[\mathsf{LB},\ \mathsf{UB}]\neq \varnothing$. Second, for any $\lambda\in (\mathsf{LB},\ \mathsf{UB})$, $\mathsf{sign}(\beta_0[i]+\lambda c_i)=\mathsf{sign}(\beta_0[i])$ for all $i\in\mathcal{B}$. This is because \begin{align*}
    \frac{\beta_0[i]+\lambda c_i}{\beta_0[i]}=1-\lambda\left(-\frac{c_i}{\beta_0[i]}\right)> \begin{cases}
    1-\mathsf{LB}\cdot\left(-\frac{c_i}{\beta_0[i]}\right)\geq 0,\text{ if }i\in\mathcal{L},\\
    1-\mathsf{UB}\cdot\left(-\frac{c_i}{\beta_0[i]}\right)\geq 0,\text{ if }i\in\mathcal{U}.
    \end{cases}
\end{align*}

By the second property, we can show that $\|\beta_\lambda\|_1$ is a linear function with respect to $\lambda$ when $\lambda\in[\mathsf{LB},\ \mathsf{UB}]$. Indeed, we can check that $\|\beta_\lambda\|_1$ is continuous with respect to $\lambda$ everywhere and its derivative is a constant in $\lambda\in(\mathsf{LB},\ \mathsf{UB})$, i.e.,
\begin{align}\label{eq.temp_101103}
    \frac{\partial \|\beta_\lambda\|_1}{\partial\lambda}\Bigg|_{\lambda\in(\mathsf{LB},\ \mathsf{UB})}=\sum_{i\in\mathcal{B}}c_i\cdot\mathsf{sign}(\beta_0[i]+\lambda c_i)=\sum_{i\in\mathcal{B}}c_i\cdot\mathsf{sign}(\beta_0[i]).
\end{align}

By the first property, there are only three possible cases to consider.

Case 1: $\mathsf{LB}<0$ and $\mathsf{UB}>0$. By linearity, we have
\begin{align*}
    \min\{\|\beta_{\mathsf{LB}}\|_1, \|\beta_{\mathsf{UB}}\|_1\}\leq \|\beta_0\|_1.
\end{align*}
Thus, by Eq.~\eqref{eq.temp_101102}, we know that either $\beta_{\mathsf{LB}}$ or $\beta_{\mathsf{UB}}$ (or both of them) is a solution of Eq.~\eqref{eq.BP_origin}. By the definitions of $\beta_\lambda$, $\mathsf{LB}$, and $\mathsf{UB}$, we know that both $\beta_{\mathsf{LB}}$ and $\beta_{\mathsf{UB}}$ have a strictly smaller number of non-zero elements than that of $\beta_0$ when $\mathsf{LB}\neq 0$ and $\mathsf{UB}\neq 0$. This contradicts the assumption that $\beta_0$ has the smallest number of non-zero elements.

Case 2: $\mathsf{LB}<0$ and $\mathsf{UB}=0$. Since $\mathsf{UB}=0$, we have $\mathcal{U}=\varnothing$, which implies that $\beta_0[i]/c_i>0$ for all $i\in \mathcal{B}$, i.e., $\beta_0[i]$ and $c_i$ have the same sign for all $i\in\mathcal{B}$. Thus, the value of Eq.~\eqref{eq.temp_101103} is positive, i.e., $\|\beta_\lambda\|_1$ is monotone increasing with respect to $\lambda\in[\mathsf{LB},\ \mathsf{UB}]$. Thus, we have $\|\beta_{\mathsf{LB}}\|_1\leq \|\beta_0\|_1$. By Eq.~\eqref{eq.temp_101102}, we know that $\beta_{\mathsf{LB}}$ is a solution of Eq.~\eqref{eq.BP_origin}. By the definitions of $\beta_\lambda$ and $\mathsf{LB}$, we know that $\beta_{\mathsf{LB}}$ has a strictly smaller number of non-zero elements than that of $\beta_0$ when $\mathsf{LB}\neq 0$. This contradicts the assumption that $\beta_0$ has the smallest number of non-zero elements.

Case 3: $\mathsf{LB}=0$ and $\mathsf{UB}>0$. Similar to Case 2, we can show that $\beta_{\mathsf{UB}}$ is a solution of Eq.~\eqref{eq.BP_origin} and has a strictly smaller number of non-zero elements than that of $\beta_0$. This contradicts the assumption that $\beta_0$ has the smallest number of non-zero elements.

In conclusion, all cases lead to a contradiction. The result of this lemma thus follows.
\end{proof}
\section{An estimate of \texorpdfstring{$\|\etr\|_2$ (close to $\sigma$
with high probability)}{l2 norm of epsilon(train)}}\label{ap.sec.noise}
\begin{lemma}[stated on pp. 1325 of \citep{laurent2000adaptive}]\label{le.chi_bound}
Let $U$ follow a chi-square distribution with D
degrees of freedom. For any positive x, we have
\begin{align*}
    &\prob{U-D\geq 2\sqrt{Dx}+2x}\leq e^{-x},\\
    &\prob{D-U\geq 2\sqrt{Dx}}\leq e^{-x}.
\end{align*}
\end{lemma}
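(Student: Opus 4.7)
The plan is to prove these concentration inequalities by the standard Chernoff/moment-generating-function approach, exploiting the fact that $U$ decomposes as $U=\sum_{i=1}^D Z_i^2$ with $Z_i$ i.i.d.\ $\mathcal{N}(0,1)$. The MGF then factorizes as $\ex[e^{tU}]=(1-2t)^{-D/2}$ for $t<1/2$, and I would apply Markov's inequality to $e^{tU}$ (for the upper tail) and $e^{-tU}$ (for the lower tail).

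For the lower tail $\prob{D-U\geq 2\sqrt{Dx}}$, I would use $t>0$ to obtain
\[
\prob{D-U\geq y}\leq e^{-ty+tD}(1+2t)^{-D/2}.
\]
Applying the elementary inequality $\log(1+u)\geq u-u^2/2$ for $u\geq 0$ to $u=2t$ gives $tD-(D/2)\log(1+2t)\leq Dt^2$, so the bound reduces to $\exp(-ty+Dt^2)$. Optimizing in $t$ yields $t^\star=y/(2D)$ and the Gaussian-type tail $\exp(-y^2/(4D))$. Setting $y=2\sqrt{Dx}$ gives exactly $e^{-x}$.

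For the upper tail $\prob{U-D\geq 2\sqrt{Dx}+2x}$, the same Chernoff setup with $t\in(0,1/2)$ gives
\[
\prob{U-D\geq y}\leq \exp\!\left(-t(D+y)-\tfrac{D}{2}\log(1-2t)\right).
\]
The key analytic step is the Taylor-remainder bound $-\log(1-2t)-2t\leq \tfrac{2t^2}{1-2t}$ for $t\in[0,1/2)$, which reduces the exponent to $-ty+\tfrac{Dt^2}{1-2t}$. I would then choose $t$ to match $y=2\sqrt{Dx}+2x$; the natural choice is the one making $\tfrac{Dt}{1-2t}=\sqrt{Dx}+x$, which after a short algebraic simplification makes the Chernoff exponent equal to $-x$. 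The resulting bound is $e^{-x}$, as claimed.

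The main obstacle is the asymmetry of the chi-square distribution: the upper tail is genuinely sub-exponential (the $2x$ term reflects this), so the Taylor bound with remainder $1/(1-2t)$ is essential and the optimization in $t$ is more delicate than in the lower tail. Since this lemma is stated verbatim from Laurent--Massart~\citep{laurent2000adaptive}, I would cite their derivation for the precise constants rather than reproducing the final optimization; the outline above is the standard route and establishes both inequalities with the stated exponents.
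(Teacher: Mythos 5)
First, a point of comparison: the paper does not prove this lemma at all --- it is quoted verbatim, with a page citation, from Laurent and Massart \citep{laurent2000adaptive}, so there is no internal proof to measure your argument against, and deferring to the source for the final optimization (as you do at the end) is exactly what the paper does. Your lower-tail argument is correct as written: the MGF bound $e^{-ty+tD}(1+2t)^{-D/2}$, the inequality $\log(1+u)\geq u-u^2/2$, the resulting exponent $-ty+Dt^2$, and the optimizer $t^\star=y/(2D)$ all check out and give $e^{-y^2/(4D)}=e^{-x}$ at $y=2\sqrt{Dx}$.

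For the upper tail, however, the specific choice of $t$ you propose does not deliver the stated constant. After the (correct) reduction of the exponent to $-ty+\frac{Dt^2}{1-2t}$ with $y=2\sqrt{Dx}+2x$, setting $\frac{Dt}{1-2t}=\sqrt{Dx}+x=:a$ gives $t=\frac{a}{D+2a}$ and exponent $-ty+ta=-ta=-\frac{a^2}{D+2a}=-x\cdot\frac{D+2\sqrt{Dx}+x}{D+2\sqrt{Dx}+2x}$, which is strictly greater than $-x$; e.g.\ for $D=x=1$ you get $e^{-4/5}$ rather than $e^{-1}$, so the claimed bound does not follow from that choice. The fix is to use the identity $D+2y=(\sqrt{D}+2\sqrt{x})^2$ and take $t=\frac{\sqrt{x}}{\sqrt{D}+2\sqrt{x}}$, equivalently $\frac{Dt}{1-2t}=\sqrt{Dx}$ (not $\sqrt{Dx}+x$); then the exponent is $-t\left(2\sqrt{Dx}+2x\right)+t\sqrt{Dx}=-t\left(\sqrt{Dx}+2x\right)=-x$ exactly. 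With that one correction your outline is precisely the standard Laurent--Massart derivation and both inequalities follow.
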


Notice that $n\|\etr\|_2^2/\sigma^2$ follows the chi-square distribution with $n$ degrees of freedom. We thus have
\begin{align*}
    \prob{\|\etr\|_2^2\leq 2\sigma^2}&=1-\prob{\frac{n\|\etr\|_2^2}{\sigma^2}\geq 2n}\\
    &=1-\prob{\frac{n\|\etr\|_2^2}{\sigma^2}-n\geq n}.
\end{align*}
Now we use the fact that
\begin{align*}
    2\sqrt{n\frac{2-\sqrt{3}}{2}n}+2\cdot\frac{2-\sqrt{3}}{2}n&=\sqrt{n^2(4-2\sqrt{3})}+(2-\sqrt{3})n\\
    &=\sqrt{n^2(\sqrt{3}-1)^2}+(2-\sqrt{3})n\\
    &=(\sqrt{3}-1)n+(2-\sqrt{3})n\\
    &=n.
\end{align*}
We thus have
\begin{align}
    \prob{\|\etr\|_2^2\leq 2\sigma^2}&=1-\prob{\frac{n\|\etr\|_2^2}{\sigma^2}-n\geq 2\sqrt{n\frac{2-\sqrt{3}}{2}n}+2\cdot\frac{2-\sqrt{3}}{2}n}\nonumber\\
    &\geq 1-\exp\left(-\frac{2-\sqrt{3}}{2}n\right)\text{ (by Lemma \ref{le.chi_bound} using $x=\frac{2-\sqrt{3}}{2}n$)}.\label{eq.temp_pointer2}
\end{align}
We also have
\begin{align}
    \prob{\|\etr\|_2^2\geq \frac{\sigma^2}{2}}&=1-\prob{\frac{n\|\etr\|_2^2}{\sigma^2}\leq \frac{n}{2}}\nonumber\\
    &=1-\prob{n-\frac{n\|\etr\|_2^2}{\sigma^2}\geq \frac{n}{2}}\nonumber\\
    &=1-\prob{n-\frac{n\|\etr\|_2^2}{\sigma^2}\geq 2\sqrt{n\frac{n}{16}}}\nonumber\\
    &\geq 1-\exp\left(-\frac{n}{16}\right)\text{ (by Lemma \ref{le.chi_bound} using $x=n/16$)}.\label{eq.temp_pointer1}
\end{align}
In other words, when $n$ is large, $\|\etr\|_2^2$ should be close to $\sigma^2$. As a result, in the rest of the paper, we will use $\|\etr\|_2^2$ as a surrogate for the noise level.

\section{Proof of Lemma~\ref{lemma:distortion} (distortion of \texorpdfstring{$\underbeta$}{beta} due to normalization of \texorpdfstring{$\Xtr$}{Xtrain} is
small)}
\label{app.normalize}
From Eq.~\eqref{eq.scale_beta}, it is easy to see that the amount of distortion of $\underbeta$ depends on the size of $\mathbf{H}_i$ for those $i$ such that either $\ith{\underbeta}$ or $\ith{\underbetaBP}$ is non-zero. More precisely, we define the sets
\begin{align*}
    &\mathcal{A}\defeq\{i:\ \ith{\underbeta}\neq 0\}\cup \{i:\ \ith{\underbetaBP}\neq 0\}=\{1,2,\cdots,s\}\cup \{i:\ \ith{\underbetaBP}\neq 0\},\\
    &\mathcal{B}\defeq\mathcal{A}\setminus \{1,\cdots,s\}.
\end{align*}
Notice that because $\|\underbetaBP\|_0=\|\betaBP\|_0\leq n$, the number of elements in $\mathcal{A}$ satisfies $|\mathcal{A}|\leq s+n$. Thus, the number of elements in $\mathcal{B}$ satisfies
\begin{align}\label{eq.num_A}
    |\mathcal{B}|=|\mathcal{A}\setminus\{1,\cdots,s\}|=|\mathcal{A}|-s\leq s+n-s=n.
\end{align}
Then, we have
\begin{align}
    \|\underwBP\|_2^2=\|\underbetaBP-\underbeta\|_2^2&=\sum_{i=1}^p\frac{n(\ith{\betaBP}-\ith{\beta})^2}{\|\mathbf{H}_i\|_2^2}\nonumber\\
    &=\sum_{i\in \mathcal{A}}\frac{n(\ith{\betaBP}-\ith{\beta}^2}{\|\mathbf{H}_i\|_2^2}\nonumber\\
    &\leq\frac{n}{\min_{i\in\mathcal{A}}\|\mathbf{H}_i\|_2^2}\sum_{i\in \mathcal{A}}(\ith{\betaBP}-\ith{\beta})^2\nonumber\\
    &=\frac{n}{\min_{i\in\mathcal{A}}\|\mathbf{H}_i\|_2^2}\|\betaBP-\beta\|_2^2\nonumber\\
    &=\frac{n}{\min_{i\in\mathcal{A}}\|\mathbf{H}_i\|_2^2}\|\wBP\|_2^2.\label{eq.temp_052702}
\end{align}
In the same way, we can get the other side of the bound:
\begin{align}
    \|\underwBP\|_2^2\geq \frac{n}{\max_{i\in\mathcal{A}}\|\mathbf{H}_i\|_2^2}\|\wBP\|_2^2.\label{eq.temp_060102}
\end{align}
Similarly, for $\ell_1$-norm, we have
\begin{align}
    \|\underwBP\|_1=\|\underbetaBP-\underbeta\|_1&=\sum_{i=1}^p\frac{\sqrt{n}\left|\ith{\betaBP}-\ith{\beta}\right|}{\|\mathbf{H}_i\|_2}\nonumber\\
    &=\sum_{i\in \mathcal{A}}\frac{\sqrt{n}\left|\ith{\betaBP}-\ith{\beta}\right|}{\|\mathbf{H}_i\|_2}\nonumber\\
    &\leq \frac{\sqrt{n}}{\min_{i\in\mathcal{A}}\|\mathbf{H}_i\|_2}\sum_{i\in \mathcal{A}}|\ith{\betaBP}-\ith{\beta}|\nonumber\\
    &=\frac{\sqrt{n}}{\min_{i\in\mathcal{A}}\|\mathbf{H}_i\|_2}\|\betaBP-\beta\|_1\nonumber\\
    &=\frac{\sqrt{n}}{\min_{i\in\mathcal{A}}\|\mathbf{H}_i\|_2}\|\wBP\|_1,\label{eq.temp_052901}
\end{align}
as well as
\begin{align}
    \|\underwBP\|_1\geq \frac{\sqrt{n}}{\max_{i\in\mathcal{A}}\|\mathbf{H}_i\|_2}\|\wBP\|_1.\label{eq.temp_060103}
\end{align}
It only remains to bound the minimum or maximum of $\|\mathbf{H}_i\|_2^2$ over $i\in \mathcal{A}$. Intuitively, for each $i$, since $\mathds{E}[\|\mathbf{H}_i\|_2^2]=n$, $\|\mathbf{H}_i\|_2^2$ should be close to $n$ when $n$ is large. However, here the difficulty is that we do not know which elements $i$ belong to $\mathcal{A}$. If we were to account for all possible $i=1,2,\cdots,p$, when $p$ is exponentially large in $n$, our bounds for the minimum and maximum of $\|\mathbf{H}_i\|_2$ would become very loose. Fortunately, for those $i=s+1,\cdots,p$ (i.e., outside of the true basis), we can show that $\|\mathbf{H}_i\|_2^2$ is independent of $\mathcal{A}$. Using this fact, we can obtain a much tighter bound on the minimum and maximum of $\|\mathbf{H}_i\|_2^2$ on $\mathcal{A}$. Towards this end, we first show the following lemma:
\begin{lemma}\label{le.independent_scale}
$\betaBP$ is independent of the size  $\|\mathbf{H}_i\|_2$ of $\mathbf{H}_i$ for  $i\in \{s+1,\cdots,p\}$. In other words, scaling any $\mathbf{H}_i$ by a non-zero value $\alpha_i$ for any $i\in \{s+1,\cdots,p\}$ does not affect $\betaBP$.
\end{lemma}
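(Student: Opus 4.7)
The plan is to observe that both pieces of data entering the BP problem in Eq.~(\ref{eq.BP_origin})---namely the design matrix $\Xtr$ and the target vector $\Ytr$---are invariant under positive rescaling of $\mathbf{H}_i$ for $i\in\{s+1,\ldots,p\}$, so the minimizer $\betaBP$ must be invariant as well.

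First, I would verify that $\Ytr$ is unaffected by such a rescaling. By Eq.~(\ref{eq.normalize_n}), the $k$-th entry of $\Ytr$ equals $(x_k^T\underbeta+\ea_k)/\sqrt{n}$, and since $\ith{\underbeta}=0$ for every $i>s$, we have $x_k^T\underbeta=\sum_{i=1}^s \ith{x_k}\,\ith{\underbeta}$. Because $\ith{x_k}$ is the $(k,i)$-entry of $\mathbf{H}$, $\Ytr$ depends only on $\mathbf{H}_1,\ldots,\mathbf{H}_s$ and on $\etr$. Second, from the normalization in Eq.~(\ref{def.norm_X}), each column satisfies $\X_i=\mathbf{H}_i/\|\mathbf{H}_i\|_2$, and the map $\mathbf{H}_i\mapsto \alpha_i \mathbf{H}_i$ sends $\X_i$ to $\sign(\alpha_i)\,\X_i$; in particular, for $\alpha_i>0$ the column $\X_i$, and therefore the whole matrix $\Xtr$, is unchanged.

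Combining these two observations, the BP optimization problem in Eq.~(\ref{eq.BP_origin}) is literally unchanged under positive rescaling, so any solution $\betaBP$ with at most $n$ non-zero entries is also unchanged. For a general nonzero $\alpha_i$, the only additional effect is a sign flip of the $i$-th column of $\Xtr$, which is absorbed by the symmetry $\ith{\tilde{\beta}}\mapsto -\ith{\tilde{\beta}}$: this substitution preserves $\|\tilde{\beta}\|_1$ and maps the new equality constraint back to the original one, so the set of optimizers is in one-to-one correspondence with the original set, and in particular the support and the magnitudes of the entries of $\betaBP$ are unaffected. Since the subsequent application in Eqs.~(\ref{eq.temp_052702}) and (\ref{eq.temp_060103}) only invokes the norms $\|\mathbf{H}_i\|_2$ for $i\in\mathcal{A}$, this is exactly the invariance needed.

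The main obstacle, such as it is, lies in bookkeeping: non-uniqueness of $\betaBP$ and the sign ambiguity for $\alpha_i<0$ both require care, but both are resolved by noting that $\betaBP$ is a deterministic function of $(\Xtr,\Ytr)$ under whatever tie-breaking convention is implicit in Lemma~\ref{le.solution_at_most_n_non_zero}. Essentially, the content of the lemma is that the normalization in Eq.~(\ref{def.norm_X}) was designed precisely to quotient out the magnitudes $\|\mathbf{H}_i\|_2$ for $i>s$, so that downstream these magnitudes can be treated as statistically independent of the (direction-determined) support of $\betaBP$.
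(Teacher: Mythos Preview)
Your proposal is correct and follows essentially the same route as the paper: both arguments observe that the normalization in Eq.~(\ref{def.norm_X}) makes $\Xtr$ invariant under rescaling of $\mathbf{H}_i$, that $\Ytr$ is invariant because $\ith{\underbeta}=0$ for $i>s$, and hence the BP problem in Eq.~(\ref{eq.BP_origin}) is unchanged. You are in fact slightly more careful than the paper in handling the sign case $\alpha_i<0$ (the paper simply asserts ``$\Xtr$ does not change''), but since the lemma is really about the magnitude $\|\mathbf{H}_i\|_2$ this extra care is not strictly needed.
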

\begin{proof}
Suppose that $\mathbf{H}_i$ is scaled by any $\alpha_i\neq  0$ for any $i\in \{s+1,\cdots,p\}$. We denote the new $\mathbf{H}$ matrix by $\mathbf{H}'$, i.e., $\mathbf{H}'_i=\alpha_i\mathbf{H}_i$ for some $i\in \{s+1,\cdots,p\}$. By the normalization in Eq.~(\ref{def.norm_X}), we know that $\Xtr$ does not change after this scaling. Further, because $\ith{\underbeta}=0$ for $i\in\{s+1,\cdots,p\}$, $\Ytr$ is also unchanged. Therefore, the BP solution as defined in Eq.~\eqref{eq.BP_origin} will remain the same.
\end{proof}
Let $\mathfrak{A}\subseteq\{1,\cdots,p\}$ denote any possible realization of the set $\mathcal{A}$. By Lemma~\ref{le.independent_scale} and noting that all $\mathbf{H}_i$'s are \emph{i.i.d.}, we then get that, for any $\hi\in \mathds{R}$, $i=1,\cdots,p$, and any fixed set $\mathcal{C}\subseteq \{s+1,\cdots,p\}$,
\begin{align}
    &\probcond{\mathcal{A}=\mathfrak{A},\|\mathbf{H}_i\|_2^2\geq \hi,i=1,\cdots,s}{ \|\mathbf{H}_i\|_2^2\geq \hi,\forall i\in \mathcal{C}}\nonumber\\
    &=\prob{\mathcal{A}=\mathfrak{A},\|\mathbf{H}_i\|_2^2\geq \hi,i=1,\cdots,s}.\label{eq.prob_indepent}
\end{align}
In other words, $\mathcal{A}$ and $\|\mathbf{H}_i\|_2^2$, $i=1,\cdots,s$ are independent of $\|\mathbf{H}_i\|_2^2$, $i=s+1,\cdots,p$.
Of course, this is equivalent to stating that $\|\mathbf{H}_i\|_2^2$, $i=s+1,\cdots, p$ are independent of $\mathcal{A}$ and $\|\mathbf{H}_i\|_2^2$, $i=1,\cdots,s$.
More precisely, for any $\hi\in \mathds{R}$, $i=1,\cdots,p$, and any fixed set $\mathcal{C}\subseteq \{s+1,\cdots,p\}$, we have
\begin{align}
    &\probcond{\|\mathbf{H}_i\|_2^2\geq \hi,\forall i\in \mathcal{C}}{ \mathcal{A}=\mathfrak{A},\|\mathbf{H}_i\|_2^2\geq \hi,i=1,\cdots,s}\nonumber\\
    =&\frac{\probcond{\mathcal{A}=\mathfrak{A},\|\mathbf{H}_i\|_2^2\geq \hi,i=1,\cdots,s}{ \|\mathbf{H}_i\|_2^2\geq \hi,\forall i\in \mathcal{C}}}{\prob{\mathcal{A}=\mathfrak{A},\|\mathbf{H}_i\|_2^2,i=1,\cdots,s}}\nonumber\nonumber\\
    &\cdot \Pr\left(\left\{\|\mathbf{H}_i\|_2^2\geq \hi,\forall i\in \mathcal{C}\right\}\right)\text{ (by Bayes' Theorem)}\nonumber \\
    =&\Pr\left(\left\{\|\mathbf{H}_i\|_2^2\geq \hi,\forall i\in \mathcal{C}\right\}\right)\text{ (using Eq.~\eqref{eq.prob_indepent})}.\label{eq.prob_indepent_2}
\end{align}
Further, because all $\mathbf{H}_i$'s are \emph{i.i.d.}, we have
\begin{align*}
    \Pr\left(\left\{\|\mathbf{H}_i\|_2^2\geq \hi,\forall i\in \mathcal{C}\right\}\right)=\prod_{i\in\mathcal{C}}\prob{\|\mathbf{H}_i\|_2^2\geq \hi}=\prod_{i\in\mathcal{C}}\prob{\|\mathbf{H}_1\|_2^2\geq \hi}.
\end{align*}
Substituting back to Eq.~\eqref{eq.prob_indepent_2}, we have
\begin{align}
    &\probcond{\|\mathbf{H}_i\|_2^2\geq \hi,\forall i\in \mathcal{C}}{ \mathcal{A}=\mathfrak{A},\|\mathbf{H}_i\|_2^2\geq \hi,i=1,\cdots,s}\nonumber\\
    =&\prod_{i\in\mathcal{C}}\prob{\|\mathbf{H}_1\|_2^2\geq \hi}.\label{eq.temp_060901}
\end{align}
We are now ready to bound the probability distribution of $\min_{i\in\mathcal{A}}\|\mathbf{H}_i\|_2^2$ in Eq.~\eqref{eq.temp_052702}. Because $\{1,\cdots,s\}\subseteq \mathcal{A}$, we have (recalling that $\mathcal{B}=\mathcal{A}\setminus\{1,\cdots,s\}$)
\begin{align}
    &\Pr\left(\left\{\min_{i\in\mathcal{A}}\|\mathbf{H}_i\|_2^2\geq \frac{n}{2}\right\}\right)\nonumber\\
    =&\Pr\left(\mycap_{i\in\mathcal{A}}\left\{\|\mathbf{H}_i\|_2^2\geq \frac{n}{2}\right\}\right)\nonumber\\
    =&\prob{\|\mathbf{H}_i\|_2^2\geq \frac{n}{2},i=1,\cdots,s}\cdot\probandcond{\|\mathbf{H}_i\|_2^2\geq \frac{n}{2}}{ \|\mathbf{H}_i\|_2^2\geq \frac{n}{2},i=1,\cdots,s}{\mycap_{i\in \mathcal{B}}}\nonumber\\
    =&\left(1-\prob{\|\mathbf{H}_1\|_2^2\geq \frac{n}{2}}\right)^s\cdot\probandcond{\|\mathbf{H}_i\|_2^2\geq \frac{n}{2}}{ \|\mathbf{H}_i\|_2^2\geq \frac{n}{2},i=1,\cdots,s}{\mycap_{i\in \mathcal{B}}}\label{eq.temp_060501}\\
    &\text{ (because all $\mathbf{H}_i$'s are \emph{i.i.d.})}\nonumber.
\end{align}

We first study the second term of the right-hand-side of Eq.~\eqref{eq.temp_060501} by conditioning on $\mathcal{A}=\mathfrak{A}$. For any possible realization $\mathfrak{A}$ of the set $\mathcal{A}$, we have
\begin{align}
    &\probandcond{\|\mathbf{H}_i\|_2^2\geq \frac{n}{2}}{ \mathcal{A}=\mathfrak{A},\|\mathbf{H}_i\|_2^2\geq \frac{n}{2},i=1,\cdots,s}{\mycap_{i\in \mathcal{B}}}\nonumber\\
    =&\probandcond{\|\mathbf{H}_i\|_2^2\geq \frac{n}{2}}{\mathcal{A}=\mathfrak{A}, \|\mathbf{H}_i\|_2^2\geq \frac{n}{2},i=1,\cdots,s}{\mycap_{i\in \mathfrak{A}\setminus\{1,\cdots,s\}}}\nonumber\\
    =&\prod_{i\in \mathfrak{A}\setminus\{1,\cdots,s\}}\prob{\|\mathbf{H}_1\|_2^2\geq \frac{n}{2}}\text{ (by letting $\mathcal{C}=\mathfrak{A}\setminus\{1,\cdots,s\}$ in Eq.~\eqref{eq.temp_060901})}\nonumber\\
    \geq & \left(1-\prob{\|\mathbf{H}_1\|_2^2\leq \frac{n}{2}}\right)^n\text{ (by Eq.~\eqref{eq.num_A})}.\label{eq.temp_060502}
\end{align}
Since the right-hand-side of Eq.~\eqref{eq.temp_060502} is independent of $\mathfrak{A}$, we then conclude that
\begin{align*}
    \probandcond{\|\mathbf{H}_i\|_2^2\geq \frac{n}{2}}{ \|\mathbf{H}_i\|_2^2\geq \frac{n}{2},i=1,\cdots,s}{\mycap_{i\in \mathcal{B}}}\geq  \left(1-\prob{\|\mathbf{H}_1\|_2^2\leq \frac{n}{2}}\right)^n.
\end{align*}
Substituting back to Eq.~\eqref{eq.temp_060501}, we have
\begin{align}
    \prob{\min_{i\in\mathcal{A}}\|\mathbf{H}_i\|_2^2\geq \frac{n}{2}}\geq& \left(1-\prob{\|\mathbf{H}_1\|_2^2\leq \frac{n}{2}}\right)^{n+s}\nonumber\\
    \geq &\left(1-\prob{\|\mathbf{H}_1\|_2^2\leq \frac{n}{2}}\right)^{2n}\text{ (assuming $s\leq n$)}\nonumber\\
    \geq& (1-e^{-n/16})^{2n}\label{eq.temp_052703}\\
    \geq& 1-2n\cdot e^{-n/16}\nonumber\\
    =&1-e^{-n/16+\ln (2n)},\label{eq.temp_052704}
\end{align}
where in Eq.~(\ref{eq.temp_052703}), we have used results for large deviation analysis on the probability of chi-square distribution (similar to the analysis of getting Eq.~\eqref{eq.temp_pointer1} in Appendix~\ref{ap.sec.noise}).
Using similar ideas, we can also get
\begin{align}
    \Pr\left(\left\{\max_{i\in \mathcal{A}}\|\mathbf{H}_i\|_2^2\leq 2n\right\}\right)&\geq \left(1-\Pr\left(\left\{\|\mathbf{H}_1\|_2^2\geq 2n\right\}\right)\right)^{2n} \nonumber\\
    &\geq \left(1-\exp\left(-\frac{2-\sqrt{3}}{2}n\right)\right)^{2n}\text{ (similar to Eq.~\eqref{eq.temp_pointer2} in Appendix~\ref{ap.sec.noise})}\nonumber\\
    &\geq 1-2n\cdot \exp\left(-\frac{2-\sqrt{3}}{2}n\right)\nonumber\\
    &=1-\exp\left(-\frac{2-\sqrt{3}}{2}n+\ln (2n)\right).\label{eq.temp_060101}
\end{align}

Applying Eq.~(\ref{eq.temp_052704}) in Eq.~(\ref{eq.temp_052702}) and applying Eq.~(\ref{eq.temp_060101}) in Eq.~(\ref{eq.temp_060102}), we conclude that 
\begin{align*}
    \prob{\|\underwBP\|_2\leq \sqrt{2}\|\wBP\|_2}&=\prob{\|\underwBP\|_2^2\leq 2\|\wBP\|_2^2}\\
    &\geq 1-\exp\left(-\frac{n}{16}+\ln (2n)\right),\\
    \prob{\|\wBP\|_2\leq \sqrt{2}\|\underwBP\|_2}&=\prob{\|\underwBP\|_2^2\leq 2\|\wBP\|_2^2}\\
    &\geq 1-\exp\left(-\frac{2-\sqrt{3}}{2}n+\ln (2n)\right).
\end{align*}
Applying Eq.~(\ref{eq.temp_052704}) in Eq.~(\ref{eq.temp_052901}) and applying Eq.~(\ref{eq.temp_060102}) in Eq.~(\ref{eq.temp_060103}), we conclude that 
\begin{align*}
    \prob{\|\underwBP\|_1\leq \sqrt{2}\|\wBP\|_1}&\geq 1-\exp\left(-\frac{n}{16}+\ln (2n)\right),\\
    \prob{\|\wBP\|_1\leq \sqrt{2}\|\underwBP\|_1}&\geq 1-\exp\left(-\frac{2-\sqrt{3}}{2}n+\ln (2n)\right).
\end{align*}
The result of Lemma~\ref{lemma:distortion} thus follows.

\section{Proof of Proposition~\ref{prop.bound_WB1_KM} (relationship
between \texorpdfstring{$\|\wBP\|_1$}{||wBP||1} and \texorpdfstring{$\|w^I\|_1$}{||wI||1})}
\label{app:proof:wb1}

\begin{proof}
Since we focus on $\wBP$, we rewrite BP in the form of $\wBP$. Notice that
\begin{align*}
    \|\betaBP\|_1=\|\wBP+\beta\|_1=\|\wBP_0+\beta_0\|_1+\|\wBP_1\|_1.
\end{align*}
Thus, we have
\begin{align}\label{eq.BP_w}
    &\wBP=\argmin_w \|w_0+\beta_0\|_1+\|w_1\|_1\nonumber \\
    &\st \Xtr w=\etr.
\end{align}
Define $\mathbf{G}\defeq \Xtr^T\Xtr$ and let $\mathbf{I}$ be the $p\times p$ identity matrix. Let $\abs{\cdot}$ denote the operation that takes the  component-wise absolute value of every element of a matrix. We have
\begin{align}\label{eq.lemma_begin}
    \|\etr\|_2^2&=\|\Xtr \wBP\|_2^2\nonumber\\
    &=(\wBP)^T\mathbf{G}\wBP\nonumber\\
    &=\|\wBP\|_2^2+(\wBP)^T(\mathbf{G}-\mathbf{I})\wBP\nonumber\\
    &\geq \|\wBP\|_2^2-\abs{\wBP}^T\abs{\mathbf{G}-\mathbf{I}}\abs{\wBP}\nonumber\\
    &\stackrel{(a)}{\geq} \|\wBP\|_2^2-M\abs{\wBP}^T\abs{\mathds{1}-\mathbf{I}}\abs{\wBP}\nonumber\\
    &=(1+M)\|\wBP\|_2^2-M\|\wBP\|_1^2,
\end{align}
where in step (a) $\mathds{1}$ represents a $p\times p$ matrix with all elements equal to $1$, and the step holds because $\mathbf{G}$ has diagonal elements equal to $1$ and off-diagonal elements no greater than $M$ in absolute value.
Because $w^I$ also satisfies the constraint of \eqref{eq.BP_w}, by the representation of $\wBP$ in \eqref{eq.BP_w}, we have
\begin{align*}
    \|\wBP_0+\beta_0\|_1+\|\wBP_1\|_1\leq \|w_0^I+\beta_0\|_1+\|w_1^I\|_1.
\end{align*}
By definition (\ref{eq.def_WI}), we have $w_0^I=\mathbf{0}$ and $\|w_1^I\|_1=\|w^I\|_1$. Thus, we have
\begin{align*}
    \|\wBP_0+\beta_0\|_1+\|\wBP_1\|_1\leq \|\beta_0\|_1+\|w^I\|_1.
\end{align*}
By the triangle inequality, we have $\|\beta_0\|_1-\|\wBP_0+\beta_0\|_1\leq \|\wBP_0\|_1$. Thus, we obtain
\begin{align}\label{eq.lemma_second}
    \|\wBP_1\|_1&\leq\|\beta_0\|_1-\|\wBP_0+\beta_0\|_1+\|w^I\|_1\nonumber\\
    &\leq  \|\wBP_0\|_1+\|w^I\|_1.
\end{align}
We now use \eqref{eq.lemma_begin} and \eqref{eq.lemma_second} to establish \eqref{eq.prop1}. Specifically, because $\wBP_0\in \mathds{R}^{s}$, we have
\begin{align*}
    \|\wBP_0\|_2^2\geq \frac{1}{s}\|\wBP_0\|_1^2.
\end{align*}
Thus, we have
\begin{align}\label{eq.temp_1226}
    \|\wBP\|_2^2\geq \|\wBP_0\|_2^2 \geq \frac{1}{s}\|\wBP_0\|_1^2.
\end{align}
Applying Eq. \eqref{eq.lemma_second}, we have
\begin{align}\label{eq.temp_1226_2}
    \|\wBP\|_1=\|\wBP_1\|_1+\|\wBP_0\|_1\leq 2\|\wBP_0\|_1+\|w^I\|_1.
\end{align}
Substituting Eq. \eqref{eq.temp_1226} and Eq. \eqref{eq.temp_1226_2} in Eq. \eqref{eq.lemma_begin}, we have
\begin{align*}
    \frac{1+M}{s}\|\wBP_0\|_1^2-M(2\|\wBP_0\|_1+\|w^I\|_1)^2\leq \|\etr\|_2^2,
\end{align*}
which can be rearranged into a quadratic inequality in $\|\wBP_0\|_1$, i.e.,
\begin{align*}
    \left(\frac{1+M}{s}-4M\right)\|\wBP_0\|_1^2&-4M\|w^I\|_1\|\wBP_0\|_1\\
    &-\left(M\|w^I\|_1^2+\|\etr\|_2^2\right)\leq 0.
\end{align*}
Since $K=\frac{1+M}{sM}-4>0$, we have the leading coefficient $\frac{1+M}{s}-4M=KM>0$. Solving this quadratic inequality for $\|\wBP_0\|_1$, we have
\begin{align*}
    \|\wBP_0\|_1\leq& \frac{4M\|w^I\|_1+\sqrt{(4M\|w^I\|_1)^2+4KM\left(M\|w^I\|_1^2+\|\etr\|_2^2\right)}}{2KM}\\
    =&\frac{2\|w^I\|_1+\sqrt{4\|w^I\|_1^2+K(\|w^I\|_1^2+\frac{1}{M}\|\etr\|_2^2)}}{K}.
\end{align*}
Plugging the result into Eq. \eqref{eq.temp_1226_2}, we have
\begin{align*}
    &\|\wBP\|_1\leq\frac{4\|w^I_1\|_1+2\sqrt{4\|w^I\|_1^2+K(\|w^I\|_1^2+\frac{1}{M}\|\etr\|_2^2)}}{K}+\|w^I\|_1.
\end{align*}
This expression already provides an upper bound on $\|\wBP\|_1$ in terms of $M$ and $\|w^I\|_1$. To obtain an even simpler equation, combining $4\|w^I\|_1/K$ with $\|w^I\|_1$, and breaking the square root apart by $\sqrt{a+b+c}\leq \sqrt{a}+\sqrt{b}+\sqrt{c}$, we have 
\begin{align*}
    \|\wBP\|_1\leq& \frac{K+4}{K}\|w^I\|_1+\sqrt{\left(\frac{4\|w^I\|_1}{K}\right)^2}+\sqrt{\frac{4\|w^I\|_1^2}{K}}\\
    &+\sqrt{\frac{4\|\etr\|_2^2}{MK}}\\
    =&\left(1+\frac{8}{K}+2\sqrt{\frac{1}{K}}\right)\|w^I\|_1+\frac{2\|\etr\|_2}{\sqrt{KM}}.
\end{align*}
The result of the proposition thus follows.
\end{proof}

\section{Proof of Proposition~\ref{prop.wB2_wB1} (relationship between
\texorpdfstring{$\|\wBP\|_2$}{||wBP||2} and \texorpdfstring{$\|\wBP\|_1$}{||wBP||1})}
\label{app:proof:wb2}

\begin{proof}
In the proof of Proposition \ref{prop.bound_WB1_KM}, we have already proven Eq. \eqref{eq.lemma_begin}\footnote{Notice that in the proof of Proposition \ref{prop.bound_WB1_KM}, to get Eq. \eqref{eq.lemma_begin}, we do not need $K>0$.}. By Eq. \eqref{eq.lemma_begin}, we have
\begin{align*}
    \|\wBP\|_2\leq& \sqrt{\frac{\|\etr\|_2^2+M\|\wBP\|_1^2}{1+M}}\\
    \leq & \sqrt{\|\etr\|_2^2+M\|\wBP\|_1^2}\\
    \leq & \|\etr\|_2+\sqrt{M}\|\wBP\|_1.
\end{align*}

\end{proof}

\section{Proof of Theorem~\ref{th.main} (upper bound of model error)}
\label{app:proof:main}
The proof consists three steps. In step 1, we verify the conditions for Proposition \ref{prop.new_WI} and get the estimation on $\|w^I\|_1$ by Proposition \ref{prop.new_WI}. In step 2, we verify the conditions for Proposition \ref{prop.M} and get the estimation on $M$ by Proposition \ref{prop.M}. In step 3, we combine results in steps 1 and 2 to prove Theorem~\ref{th.main}.
\subsection*{Step 1}
We first verify that the conditions for Proposition \ref{prop.new_WI} are satisfied. Towards this end, from the assumption of Theorem \ref{th.main} that
\begin{align*}
    p\in \left[(16n)^4,\ \exp\left(\frac{n}{1792s^2}\right)\right],
\end{align*}
we have
\begin{align}
    p\geq (16n)^4,\label{eq.temp_proof_p}
\end{align}
and
\begin{align}
    p\leq \exp\left(\frac{n}{1792s^2}\right)\leq e^{n/1792}\text{ (since $s\geq 1$)}.\label{eq.temp_proof_p_u_forM}
\end{align}
Further, from the assumption of the theorem that $s\leq \sqrt{\frac{n}{7168\ln (16n)}}$, we have
\begin{align}
    n\geq s^2\cdot 7168\ln (16n)\geq 7168>100\text{ (since $s\geq 1$ and $n\geq 1$)}.\label{eq.temp_proof_n}
\end{align}
Eq. \eqref{eq.temp_proof_n} and Eq. \eqref{eq.temp_proof_p} imply that the condition of Proposition \ref{prop.new_WI} is satisfied. We thus have, from Proposition \ref{prop.new_WI}, with probability at least $1-2e^{-n/4}$,
\begin{align*}
    \|w^I\|_1 &\leq \sqrt{1+\frac{3n/2}{\ln p}}\|\etr\|_2.
\end{align*}
From Eq. \eqref{eq.temp_proof_p_u_forM}, we have
\begin{align*}
    &p\leq e^{n/1792}\leq e^{n/2}\\
    \implies & 1\leq \frac{n/2}{\ln p}.
\end{align*}
Therefore, we have
\begin{align}
    \prob{\|w^I\|_1\leq \sqrt{\frac{2n}{\ln p}}\|\etr\|_2}\geq 1-2e^{-n/4}.\label{eq.temp_proof_wI}
\end{align}
\subsection*{Step 2}
Note that Eq. \eqref{eq.temp_proof_p_u_forM} implies that the conditions of Proposition \ref{prop.M} is satisfied. We thus have, from Proposition \ref{prop.M},
\begin{align}
    \prob{M\leq 2\sqrt{7}\sqrt{\frac{\ln p}{n}}}\geq 1-2e^{-\ln p}-2e^{-n/144}.\label{eq.temp_proof_M}
\end{align}
\subsection*{Step 3}
In this step, we will combine results in steps 1 and 2 and proof the final result of Theorem \ref{th.main}. Towards this end, notice that for any event $A$ and any event $B$, we have
\begin{align*}
    \proband{A}{B}&=\prob{A}+\prob{B}-\probor{A}{B}\\
    &\geq \prob{A}+\prob{B}-1.
\end{align*}
Thus, by Eq. \eqref{eq.temp_proof_wI} and Eq. \eqref{eq.temp_proof_M}, we have
\begin{align}
    &\proband{\|w^I\|_1\leq \sqrt{\frac{2n}{\ln p}}\|\etr\|_2}{M\leq 2\sqrt{7}\sqrt{\frac{\ln p}{n}}}\label{eq.temp_event}\\
    &\geq 1-2e^{-n/4}-2e^{-\ln p}-2e^{-n/144}\nonumber\\
    &\geq 1-6e^{-\ln p}\text{ (since $\ln p\leq n/144\leq n/4$ by Eq. \eqref{eq.temp_proof_p_u_forM})}\nonumber\\
    &=1-6/p.\nonumber
\end{align}
It remains to show that the event in \eqref{eq.temp_event} implies Eq. \eqref{eq.main_bound}. Towards this end, note that from $M\leq 2\sqrt{7}\sqrt{\frac{\ln p}{n}}$, we have
\begin{align}
    K&=\frac{1+M}{sM}-4\text{ (by definition in Eq. (\ref{eq.def_K}))}\nonumber\\
    &\geq \frac{1}{sM}-4.\label{eq.temp_0127_1}
\end{align}
From the assumption of the theorem, we have
\begin{align}
    &\exp\left(\frac{n}{1792s^2}\right)\geq p\nonumber\\
    \implies & \frac{n}{1792s^2}\geq \ln p\nonumber\\
    \implies & s\leq \sqrt{\frac{n}{1792\ln p}}=\frac{1}{16\sqrt{7}}\sqrt{\frac{n}{\ln p}}.\label{eq.temp_proof_s}
\end{align}
Applying Eq. \eqref{eq.temp_proof_s} to Eq. \eqref{eq.temp_0127_1}, we have
\begin{align*}
    K&\geq \frac{1}{\frac{1}{16\sqrt{7}}\sqrt{\frac{n}{\ln p}}\cdot 2\sqrt{7}\sqrt{\frac{\ln p}{n}}}-4\\
    &=8-4=4.
\end{align*}
Applying
\begin{align}
    M\leq 2\sqrt{7}\sqrt{\frac{\ln p}{n}},\ 
    \|w^I\|_1\leq \sqrt{\frac{2n}{\ln p}}\|\etr\|_2,\text{ and }K\geq 4.\label{eq.estimate_M_wI_K}
\end{align}
to Corollary \ref{coro.wB2_K}, we have
\begin{align*}
    \|\wBP\|_2\leq& 2\|\etr\|_2+\sqrt{2\sqrt{7}}\left(\frac{\ln p}{n}\right)^{1/4}\cdot 4\cdot \sqrt{\frac{2n}{\ln p}}\|\etr\|_2\\
    =& \left(2+8\left(\frac{7n}{\ln p}\right)^{1/4}\right)\|\etr\|_2.
\end{align*}
The result of Theorem \ref{th.main} thus follows.

\section{Proof of Corollary \ref{coro.only_s} (descent floor)}
\label{app.proof_s}
\begin{proof}
For any $a\geq 1$, we have
\begin{align*}
    &\left\lfloor e^a\right\rfloor -e^{a/2}\geq e^a-e^{a/2}-1=e^{a/2}(e^{a/2}-1)-1\\
    &\geq \sqrt{e}(\sqrt{e}-1)-1=e-\sqrt{e}-1\approx 0.0696.
\end{align*}
It implies that $\left\lfloor e^a\right\rfloor\geq e^{a/2}$ for any $a\geq 1$. Taking logarithm at both sides, we have $\ln\left\lfloor e^a\right\rfloor \geq a/2$ for any $a\geq 1$. When $s\leq \sqrt{\frac{n}{7168\ln (16n)}}$, we have
\begin{align*}
    \frac{n}{1792s^2}\geq 4\ln (16n)\geq 1.
\end{align*}
Thus, by the choice of $p$ in the corollary, we have
\begin{align}\label{eq.temp0118}
    \ln p=\ln \left\lfloor\exp\left(\frac{n}{1792s^2}\right)\right\rfloor\geq \frac{n}{3584 s^2}.
\end{align}
Substituting Eq. \eqref{eq.temp0118} into Eq. \eqref{eq.main_bound}, we have
\begin{align*}
    \frac{\|\wBP\|_2}{\|\etr\|_2}&\leq 2+8\left(7\times 3584 s^2\right)^{1/4}\\
    &=2+32\sqrt{14}\sqrt{s}.
\end{align*}
\end{proof}

\section{Proof of Proposition \ref{prop.new_WI} (upper bound of 
\texorpdfstring{$\|w^I\|_1$)}{WI l1 norm}}
\label{app:proof:wI}
Recall that, by the definition of $w^I$ in Eq. (\ref{eq.def_WI}), $w^I$ is independent of the first $s$ columns of $\Xtr$. For ease of exposition, let $\mathbf{A}$ denote a $n\times (p-s)$ sub-matrix of $\Xtr$ that consists of the last $(p-s)$ columns, i.e.,
\begin{align*}
    \mathbf{A}\defeq[\X_{s+1}\ \X_{s+2}\ \cdots\ \X_{p}].
\end{align*}
Thus, $\|w^I\|_1$ equals to the optimal objective value of
\begin{align}\label{prob.original}
    \min_{\alpha\in\mathds{R}^{p-s}} \|\alpha\|_1\st \mathbf{A}\alpha=\etr.
\end{align}
Let $\lambda$ be a $n\times 1$ vector that denotes the Lagrangian multiplier associated with the constraint $\mathbf{A}\alpha=\etr$. Then, the Lagrangian of the problem (\ref{prob.original}) is
\begin{align*}
    L(\alpha, \lambda)\defeq\|\alpha\|_1+\lambda^T(\mathbf{A}\alpha-\etr).
\end{align*}
Thus, the dual problem is
\begin{align}\label{prob.dual_origin}
    \max_{\lambda} h(\lambda),
\end{align}
where the dual objective function is given by
\begin{align*}
    h(\lambda)=\inf_{\alpha}L(\alpha, \lambda).
\end{align*}
Let $\mathbf{A}_i$ denote the $i$-th column of $\mathbf{A}$. It is easy to verify that
\begin{align*}
    &h(\lambda)=\inf_{\alpha}L(\alpha, \lambda)\\
    &=
    \begin{cases}
    -\infty\quad &\text{ if there exists $i$ such that }|\lambda^T\mathbf{A}_i|>1,\\
    -\lambda^T \etr&\text{ otherwise}.
    \end{cases}
\end{align*}
Thus, the dual problem (\ref{prob.dual_origin}) is equivalent to
\begin{align}\label{prob.dual}
    &\max_{\lambda} \lambda^T (-\etr) \nonumber\\
    &\st -1\leq \lambda^T \mathbf{A}_i \leq 1\ \forall i\in\{1,2,\cdots, p-s\}.
\end{align}

This dual formulation gives the following geometric interpretation. Consider the $\mathds{R}^n$ space that $\lambda$ and $\mathbf{A}_i$ stay in. Since $\|\mathbf{A}_i\|_2=1$, the constraint $-1\leq \lambda^T\mathbf{A}_i\leq 1$ corresponds to the region between two parallel hyperplanes that are tangent to a unit hyper-sphere at $\mathbf{A}_i$ and $-\mathbf{A}_i$, respectively.
Intuitively, as $p$ goes to infinity, there will be an infinite number of such hyperplanes. Since $\mathbf{A}_i$ is uniformly random on the surface of a unit hyper-sphere, as $p$ increases, more and more such random hyperplanes ``wrap" around the hyper-sphere. Eventually, the remaining feasible region becomes a unit ball.
This implies that the maximum value of the problem (\ref{prob.dual}) becomes $\|\etr\|_2$ when $p$ goes to infinity and the optimal $\lambda$ is attained when $\lambda^*=-\etr/\|\etr\|_2$. Our result in Proposition \ref{prop.new_WI} is also consistent with this intuition that $\|w^I\|_1\rightarrow \|\etr\|_2$ as $p\rightarrow \infty$. Of course, the challenge of Proposition \ref{prop.new_WI} is to establish an upper bound of $\|w^I\|_1$ even for finite $p$, which we will study below.

Another intuition from this geometric interpretation is that, among all $\mathbf{A}_i$'s, those ``close" to the direction of $\pm\etr$ matter most, because their corresponding hyperplanes are the ones that wrap the unit hyper-sphere around the point $\lambda^*=-\etr/\|\etr\|_2$. Next, we construct an upper bound of \eqref{prob.dual} by using $q$ such ``closest" $\mathbf{A}_i$'s.

Specifically, 
for all $i\in \{1,2,\cdots, p-s\}$, we define
\begin{align*}
    \mathbf{B}_i\defeq\begin{cases}
    \mathbf{A}_i&\text{ if }\mathbf{A}_i^T(-\etr)\geq 0,\\
    -\mathbf{A}_i&\text { otherwise}.
    \end{cases}
\end{align*}
Then, we sort $\mathbf{B}_i$ according to the inner product $\mathbf{B}_i^T(-\etr)$. Let $\mathbf{B}_{(1)},\cdots,\mathbf{B}_{(q)}$ be the $q<p-s$ vectors with the largest inner products, i.e,
\begin{align}\label{eq.sorted_B}
    \mathbf{B}_{(1)}^T(-\etr)\geq \mathbf{B}_{(2)}^T(-\etr)\geq \cdots \geq \mathbf{B}_{(q)}^T(-\etr)\geq 0.
\end{align}
We then relax the dual problem (\ref{prob.dual}) to
\begin{align}\label{prob.temp1}
    &\max_{\lambda}\lambda^T (-\etr)\nonumber\\
    &\st  \lambda^T\mathbf{B}_{(i)}\leq 1\ \forall i\in\{1,2,\cdots ,q\}.
\end{align}
Note that the constraints in \eqref{prob.temp1} are a subset of those in \eqref{prob.dual}. Thus, the optimal objective value of \eqref{prob.temp1} is an upper bound on that of \eqref{prob.dual}.
\begin{figure}[ht!]
    \centering
    \includegraphics[width=3.0in]{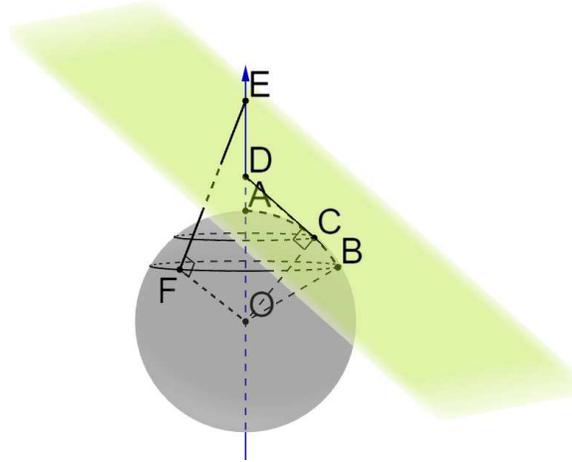}
    \caption{A 3-D geometric interpretation of Problem \eqref{prob.temp1}.}
    \label{fig.ball}
\end{figure}
\begin{figure}[ht!]
    \centering
    \includegraphics[width=3.0in]{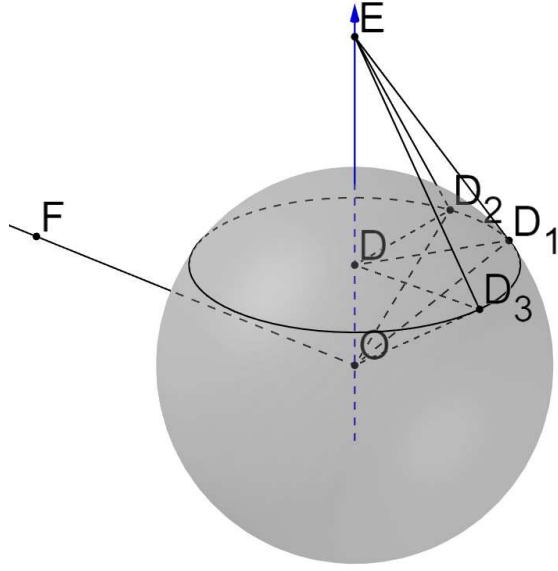}
    \caption{When all the points lie on some hemisphere, the objective value of Problem \eqref{prob.last_relax} can be infinity $\lambda$ takes the direction $\protect\overrightarrow{\text{OF}}$.}
    \label{fig.ball2}
\end{figure}

Fig. \ref{fig.ball} gives an geometric interpretation of \eqref{prob.temp1}. In Fig. \ref{fig.ball}, the gray sphere centered at the origin $O$ denotes the unit hyper-sphere in $\mathds{R}^n$. The top (north pole) of the sphere $O$ is denoted by the point $A$. The north direction denotes the direction of $(-\etr)$. The vector $\overrightarrow{\text{OC}}$ denotes some $\mathbf{B}_{(i)}$, $i\in\{1,\cdots,q-1\}$. The green plane is tangent to the sphere $O$ at the point $C$. Thus, the space below the green plane denotes the feasible region defined by the constraint $\lambda^T\mathbf{B}_{(1)}\leq 1$. The point $D$ denotes the intersection of the axis $\overrightarrow{\text{OA}}$ and the green plane. Similarly, the vector $\overrightarrow{\text{OF}}$ corresponds to $\mathbf{B}_{(q)}$. Note that its corresponding hyperplane (not drawn in Fig. \ref{fig.ball}) intersects the axis $\overrightarrow{\text{OA}}$ at a higher point $E$. This suggests that, by replacing the vector $\mathbf{B}_{(i)}$ in each of the constraints of \eqref{prob.temp1} by another vector that has a smaller inner-product with $(-\etr)$, the optimal objective value of \eqref{prob.temp1} will be even higher. For example, in Fig. \ref{fig.ball}, the constraint corresponding to $\overrightarrow{\text{OC}}$ is replaced by that corresponding to $\overrightarrow{\text{OB}}$. This procedure is made precise below.


For each $i\in\{1,2,\cdots,q\}$, we define
\begin{align}
    \mathbf{C}_{(i)}\defeq & \frac{\sqrt{1-\left(\frac{\mathbf{B}_{(q)}^T(-\etr)}{\|\etr\|_2}\right)^2}}{\sqrt{1-\left(\frac{\mathbf{B}_{(i)}^T(-\etr)}{\|\etr\|_2}\right)^2}}\cdot \left(\mathbf{B}_{(i)}-\frac{\mathbf{B}_{(i)}^T(-\etr)}{\|\etr\|_2^2}(-\etr)\right)\nonumber\\
    &+\frac{\mathbf{B}_{(q)}^T(-\etr)}{\|\etr\|_2^2}(-\etr).\label{eq.def_Ci}
\end{align}
By the definition of $\mathbf{C}_{(i)}$, it is easy to verify that $\|\mathbf{C}_{(i)}\|_2=1$ and $\mathbf{C}_{(i)}^T(-\etr)=\mathbf{B}_{(q)}^T(-\etr)\leq \mathbf{B}_{(i)}^T(-\etr)$, for all $i\in\{1,\cdots,q\}$. Roughly speaking, $\mathbf{C}_{(i)}$ is the point on the unit-hyper-sphere that is along the same (vertical) longitude as $\mathbf{B}_{(i)}$, but at the same (horizontal) latitude as $\mathbf{B}_{(q)}$.



Then, we can construct another problem as follows:
\begin{align}\label{prob.last_relax}
    &\max_{\lambda}\lambda^T(-\etr)\st\nonumber\\  &\lambda^T\mathbf{C}_{(i)}\leq 1,\ \forall i\in\{1,2,\cdots, q\}.
\end{align}
The following lemma shows that the solution to \eqref{prob.last_relax} is an upper bound on that of \eqref{prob.temp1}.
\begin{lemma}\label{le.from_B_to_C}
The objective value of Problem (\ref{prob.last_relax}) must be greater than or equal to that of Problem (\ref{prob.temp1}).
\end{lemma}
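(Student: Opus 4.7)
\textbf{Proof plan for Lemma~\ref{le.from_B_to_C}.}

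My plan is to exploit the rotational symmetry of the $\mathbf{C}_{(i)}$'s about the axis $u\defeq -\etr/\|\etr\|_2$ to evaluate Problem~\eqref{prob.last_relax} explicitly, and then to recover the same quantity as an upper bound for Problem~\eqref{prob.temp1} via a convex-combination argument applied to its constraints.

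To set things up, I would decompose each feature along $u$: write $\mathbf{B}_{(i)} = a_i u + b_i v_i$ with $a_i \defeq \mathbf{B}_{(i)}^T u \geq 0$, $b_i \defeq \sqrt{1 - a_i^2}$, and $v_i\in u^\perp$ a unit vector. A direct computation from Eq.~\eqref{eq.def_Ci} yields $\mathbf{C}_{(i)} = a_q u + b_q v_i$, so every $\mathbf{C}_{(i)}$ sits at the same ``latitude'' $a_q$ on the unit sphere, differing only in ``longitude'' $v_i$. Writing $\lambda = xu + \mu$ with $\mu \perp u$, the objective becomes $\|\etr\|_2 x$, the constraints of Problem~\eqref{prob.temp1} read $a_i x + b_i \mu^T v_i \leq 1$, and those of Problem~\eqref{prob.last_relax} read $a_q x + b_q \mu^T v_i \leq 1$.

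The easy direction is that the optimal value of Problem~\eqref{prob.last_relax} is at least $\|\etr\|_2/a_q$: plugging in $\lambda = u/a_q$ makes every constraint reduce to $1 \leq 1$ while the objective equals $\|\etr\|_2/a_q$. For the reverse direction, if Problem~\eqref{prob.temp1} is unbounded then there exists a direction with $\Delta x > 0$ and $\Delta\mu^T v_i < 0$ for all $i$; the same direction (after scaling $\Delta\mu$) makes Problem~\eqref{prob.last_relax} unbounded as well, so the lemma holds trivially. Otherwise, LP duality produces $\alpha_i \geq 0$ with $\sum_i \alpha_i \mathbf{B}_{(i)} = -\etr$; projecting onto $u^\perp$ gives $\sum_i \alpha_i b_i v_i = 0$, and the normalized weights $\beta_i \defeq \alpha_i b_i/\sum_j \alpha_j b_j$ then satisfy $\beta_i \geq 0$, $\sum_i \beta_i = 1$, and $\sum_i \beta_i v_i = 0$.

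Multiplying the $i$-th constraint of Problem~\eqref{prob.temp1} by $\beta_i/b_i \geq 0$ and summing annihilates the $\mu$ term and gives
\begin{align*}
    x \leq \frac{\sum_i \beta_i/b_i}{\sum_i \beta_i a_i/b_i}.
\end{align*}
The closing algebraic step is to verify this bound is at most $1/a_q$, which reduces to $\sum_i \beta_i(a_i - a_q)/b_i \geq 0$; this is immediate from the sorting $a_i \geq a_q$ in Eq.~\eqref{eq.sorted_B}, together with $b_i > 0$ and $\beta_i \geq 0$. The key mechanism is precisely this sorting inequality: replacing each $\mathbf{B}_{(i)}$ by $\mathbf{C}_{(i)}$ slackens the bound exactly in the direction of the objective. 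The main technical content will be the construction of the multipliers $\beta_i$ via LP duality; the measure-zero degenerate cases $a_q = 0$ or some $b_i = 0$ can be dispatched by a limiting argument.
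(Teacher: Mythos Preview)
Your approach is correct and genuinely different from the paper's. The paper proceeds on the primal side: it first observes that any optimal $\lambda$ for Problem~\eqref{prob.temp1} satisfies $\lambda^T(-\etr)\geq \mathbf{B}_{(1)}^T(-\etr)$, then shows via a direct chain of inequalities (Lemma~\ref{le.relax_is_relax}) that any such $\lambda$ with $\lambda^T\mathbf{B}_{(i)}\leq 1$ automatically satisfies $\lambda^T\mathbf{C}_{(i)}\leq 1$, after verifying (Lemma~\ref{le.C_has_k}) that each $\mathbf{C}_{(i)}$ is a normalized $\mathbf{B}_{(i)}+k_{(i)}\etr$ with $k_{(i)}\geq 0$. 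In other words, the paper establishes a containment of feasible regions. You instead work on the dual side: LP duality hands you multipliers $\alpha_i\geq 0$ with $\sum_i\alpha_i\mathbf{B}_{(i)}=-\etr$, and a single convex combination of the primal constraints bounds the optimum of Problem~\eqref{prob.temp1} by $\|\etr\|_2\sum_i\alpha_i/\sum_i\alpha_i a_i$, which is at most $\|\etr\|_2/a_q$ by the sorting $a_i\geq a_q$; the feasible point $\lambda=u/a_q$ certifies that Problem~\eqref{prob.last_relax} attains at least this value. Your route is shorter and sidesteps the intermediate problem and the two auxiliary lemmas; the paper's route is more elementary in that it never invokes LP duality and directly exhibits the feasible-set inclusion. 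Two small cleanups: the detour through the normalized weights $\beta_i$ is unnecessary (using $\alpha_i$ directly as multipliers gives $x\leq \sum_i\alpha_i/\sum_i\alpha_i a_i$ immediately, and avoids any $0/0$ when some $b_i=0$), and in the unbounded case the same recession direction works for Problem~\eqref{prob.last_relax} without any rescaling, since $a_i\geq a_q$ and $b_i\leq b_q$ give $a_q b_i\leq a_i b_q$.
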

See Appendix \ref{app.proof_B_to_C} for the proof. We draw the geometric interpretation of the problem (\ref{prob.last_relax}) in Fig. \ref{fig.ball2}. Vectors $\overrightarrow{\text{OD}_1}$, $\overrightarrow{\text{OD}_2}$, and $\overrightarrow{\text{OD}_3}$ represent those vectors $\mathbf{C}_{(i)}$. Since all $\mathbf{C}_{(i)}$'s have the same latitude, points $D_1$, $D_2$, and $D_3$ locate on one circle centered at point $D$ (the circle is actually a hyper-sphere in $\mathds{R}^{n-1}$). Therefore, tangent planes on those points have the same intersection point $E$ with the axis $\overrightarrow{\text{OD}}$.

We wish to argue that the vector $\overrightarrow{\text{OE}}$ is the optimal $\lambda$ for the problem (\ref{prob.last_relax}). However, it is not always the case. Specifically, when all those $\mathbf{C}_{(i)}$'s lie on some hemisphere in $\mathds{R}^{n-1}$, we can find a direction $\lambda$ such that $\lambda^T(-\etr)$ goes to infinity. For example, in Fig. \ref{fig.ball2}, the direction $\overrightarrow{\text{OF}}$ corresponds to such a direction of $\lambda$ that $\lambda^T(-\etr)$ goes to infinity. Fortunately, when $q$ is large enough, the probability that all $\mathbf{C}_{(i)}$'s lie on some hemisphere in $\mathds{R}^{n-1}$ is very small. Towards this end, we can utilize the following result from  \citep{wendel1962problem}.
\begin{lemma}[From \citep{wendel1962problem}]\label{le.hemisphere}
Let N points be scattered uniformly at random on the surface of a sphere in an $n$-dimensional space. Then, the probability that all the points lie on some hemisphere equals to
\begin{align*}
    2^{-N+1}\sum_{k=0}^{n-1}\binom{N-1}{k}.
\end{align*}
\end{lemma}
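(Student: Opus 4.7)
The plan is to prove Wendel's lemma by combining a symmetrization step with a classical combinatorial count of regions in a central hyperplane arrangement. First, I would note that with probability one the $N$ points $X_1, \ldots, X_N$ are in \emph{general position}, meaning every subset of size at most $n$ is linearly independent. This holds because any event that some $n$-subset lies in a common $(n-1)$-dimensional subspace has Lebesgue measure zero on $(S^{n-1})^N$, so I may condition on the generic event.

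Second, I would introduce i.i.d.\ Rademacher signs $\varepsilon_1, \ldots, \varepsilon_N \in \{-1, +1\}$, independent of the $X_i$. Because the uniform distribution on $S^{n-1}$ is symmetric under reflection through the origin, $(\varepsilon_1 X_1, \ldots, \varepsilon_N X_N)$ is equal in distribution to $(X_1, \ldots, X_N)$. Therefore the desired probability equals $\mathbb{E}\left[ \Pr\{\varepsilon_1 X_1, \ldots, \varepsilon_N X_N \text{ lie in a common hemisphere} \mid X_1, \ldots, X_N\} \right]$, and it suffices to show that for \emph{every} generic configuration of $X_i$'s, the conditional probability over the signs equals $2^{-N+1}\sum_{k=0}^{n-1}\binom{N-1}{k}$.

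Third, I would interpret this conditional probability geometrically. A hemisphere event $\{\varepsilon_i X_i \in H(v)\ \forall i\}$ (where $H(v) = \{y : \langle v, y\rangle \geq 0\}$) happens iff there exists $v \neq 0$ with $\varepsilon_i \langle v, X_i\rangle \geq 0$ for all $i$. Thus the number of sign patterns $\varepsilon \in \{-1,+1\}^N$ for which the hemisphere event occurs equals the number of distinct sign patterns $(\mathrm{sign}\langle v, X_1\rangle, \ldots, \mathrm{sign}\langle v, X_N\rangle)$ realized as $v$ ranges over $\mathbb{R}^n \setminus \bigcup_i X_i^{\perp}$. This in turn equals the number of open connected regions into which the $N$ generic hyperplanes $X_1^{\perp}, \ldots, X_N^{\perp}$ (all through the origin) partition $\mathbb{R}^n$. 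Call this number $C(N,n)$.

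Fourth, I would establish $C(N,n) = 2\sum_{k=0}^{n-1}\binom{N-1}{k}$ by Schl\"afli's recursion: adding the $N$-th hyperplane to an arrangement of $N-1$ generic central hyperplanes splits exactly those old regions that the new hyperplane meets, and these are in bijection with the regions cut on the $N$-th hyperplane by the traces of the other $N-1$ (which form a generic central arrangement in dimension $n-1$). Hence $C(N,n) = C(N-1,n) + C(N-1,n-1)$, with base cases $C(N,1) = 2$ (any $N \geq 1$ identical lines through the origin give two rays) and $C(1,n) = 2$ (one hyperplane splits $\mathbb{R}^n$ in two). A straightforward induction using Pascal's identity $\binom{N-1}{k} = \binom{N-2}{k} + \binom{N-2}{k-1}$ then gives the closed form. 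Dividing by $2^N$ and combining with the symmetrization gives the stated probability. The main obstacle is the recursion step: one must verify carefully that under the general position assumption, each new hyperplane meets exactly the regions it splits into two, and the trace arrangement on it is itself generic in the inherited $(n-1)$-dimensional sense; once this combinatorial fact is clean, the rest is induction.
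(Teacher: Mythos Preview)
The paper does not give its own proof of this lemma; it simply cites Wendel~(1962). Your argument is correct and is essentially Wendel's original proof: symmetrize by independent signs (using the central symmetry of the uniform law on $S^{n-1}$) to reduce the question to counting, for a fixed generic configuration, the number of sign vectors realized by the central hyperplane arrangement $X_1^\perp,\ldots,X_N^\perp$, and then evaluate that count via the Schl\"afli recursion $C(N,n)=C(N-1,n)+C(N-1,n-1)$ with the stated base cases.

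One point worth making explicit in your Step~3 is why, under general position, a sign pattern $\varepsilon$ satisfies the \emph{closed}-hemisphere condition (some $v\neq 0$ with $\varepsilon_i\langle v,X_i\rangle\geq 0$ for all $i$) if and only if it labels an \emph{open} region of the arrangement (some $v$ with all inequalities strict). The forward direction is the nontrivial one: if $v\neq 0$ realizes the non-strict inequalities and $J=\{i:\langle v,X_i\rangle=0\}$, then $|J|\leq n-1$ since $\{X_i:i\in J\}\subset v^\perp$ are linearly independent by general position; because $|J|<n$ and the $X_i$, $i\in J$, are independent, one can choose $w$ with $\langle w,X_i\rangle=\varepsilon_i$ for $i\in J$ and take $v'=v+tw$ for small $t>0$. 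With this equivalence in hand, the count of favorable $\varepsilon$'s is exactly $C(N,n)$, and dividing by $2^N$ gives the claimed probability.
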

Applying Lemma \ref{le.hemisphere} to all $q$ points $ \mathbf{C}_{(1)},\cdots,\mathbf{C}_{(q)}$ (represented by $D_1,D_2,D_3$ in Fig. \ref{fig.ball2}) on the sphere in $\mathds{R}^{n-1}$, we can quantify the probability that the situation in Fig. \ref{fig.ball2} does not happen, in which case we can then prove that the vector $\overrightarrow{\text{OE}}$ is the optimal $\lambda$ for the problem (\ref{prob.last_relax}). Lemma \ref{le.that_is_q_th} below summarizes this result.
\begin{lemma}\label{le.that_is_q_th}
The problem (\ref{prob.last_relax}) achieves the optimal objective value at
\begin{align*}
    \lambda_*=\frac{-\etr}{\mathbf{B}_{(q)}^T(-\etr)}
\end{align*}
with the probability at least
\begin{align*}
    1-2^{-q+1}\sum_{i=0}^{n-2}\binom{q-1}{i}\geq 1-e^{-(q/4-n)}.
\end{align*}
\end{lemma}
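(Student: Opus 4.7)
My approach is to exploit the key symmetry in Problem~(\ref{prob.last_relax}): every $\mathbf{C}_{(i)}$ has the same component along $-\etr$, equal to $\mathbf{B}_{(q)}^T(-\etr)$. First, I would decompose any candidate $\lambda$ as $\lambda = \nu(-\etr) + \lambda^\perp$ with $\lambda^\perp$ orthogonal to $\etr$, and write $\mathbf{C}_{(i)} = a(-\etr/\|\etr\|_2) + c_{(i)}^\perp$, where $a := \mathbf{B}_{(q)}^T(-\etr)/\|\etr\|_2$ and $c_{(i)}^\perp$ lies in the $(n-1)$-dimensional subspace orthogonal to $\etr$ with common norm $\sqrt{1-a^2}$. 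The objective then becomes $\nu\|\etr\|_2^2$ and each constraint becomes $\nu a\|\etr\|_2 + (\lambda^\perp)^T c_{(i)}^\perp \leq 1$. Hence $\nu$ can strictly exceed $1/(a\|\etr\|_2)$ if and only if there exists a direction $\lambda^\perp$ with $\max_i (\lambda^\perp)^T c_{(i)}^\perp < 0$, which is equivalent to all $c_{(i)}^\perp$ lying in a common open hemisphere of the perpendicular subspace. When this hemisphere event fails, one gets $\nu \leq 1/(a\|\etr\|_2)$ with equality attained at $\lambda^\perp = 0$, i.e.\ precisely $\lambda = -\etr/\mathbf{B}_{(q)}^T(-\etr) = \lambda_*$.

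The second step is to bound the probability of the hemisphere event using Lemma~\ref{le.hemisphere} (Wendel). I would show that the perpendicular components $c_{(i)}^\perp$ are i.i.d.\ uniform on the sphere of radius $\sqrt{1-a^2}$ in the subspace orthogonal to $\etr$. This follows from the rotational symmetry of the i.i.d.\ uniform vectors $\mathbf{A}_{s+1},\ldots,\mathbf{A}_p$ about the $\etr$-axis: conditional on their latitudes $\mathbf{A}_i^T(-\etr)$, their perpendicular directions are i.i.d.\ uniform on the corresponding spheres. Neither the sign-flip producing $\mathbf{B}_i$ nor the order-statistic selection of the top $q$ latitudes affects this conditional distribution, because both operations depend only on the latitudes; and the rescaling in Eq.~(\ref{eq.def_Ci}) preserves uniformity on the sphere. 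Applying Wendel's theorem with $N = q$ points in dimension $n-1$ then yields the stated exact probability $2^{-q+1}\sum_{k=0}^{n-2}\binom{q-1}{k}$.

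Finally, to reach the simpler bound $e^{-(q/4-n)}$, I would recognize $2^{-q+1}\sum_{k=0}^{n-2}\binom{q-1}{k}$ as the binomial tail $\Pr\{X \leq n-2\}$ with $X \sim \mathrm{Binomial}(q-1,1/2)$ and apply a standard Hoeffding bound to obtain $e^{-(q-2n+3)^2/(2(q-1))}$. A routine algebraic verification that $(q-2n+3)^2/(2(q-1)) \geq q/4 - n$ in the relevant regime (roughly $q \gtrsim 4n$) then gives the stated exponential form.

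The main obstacle is the distributional argument for the $c_{(i)}^\perp$'s: one must verify that conditioning on the top-$q$ latitudes does not corrupt the uniform-on-sphere distribution of the perpendicular directions. The essential ingredient is the independence of latitude and perpendicular direction for a uniform vector on the sphere, a classical consequence of spherical symmetry that propagates through both the folding and selection steps. The LP analysis and Chernoff simplification are essentially mechanical once this symmetry fact is in place.
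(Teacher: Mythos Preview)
Your proposal is correct and follows essentially the same route as the paper: both arguments decompose along and orthogonal to $\etr$, reduce optimality of $\lambda_*$ to the non-existence of a hemisphere containing all perpendicular components, invoke Wendel's lemma on the $q$ uniform points in the $(n-1)$-dimensional orthogonal subspace, and then bound the resulting binomial tail. The only cosmetic differences are that the paper proves optimality by a direct contradiction argument (assuming a better feasible $\mu$ and exhibiting a violated constraint) rather than your ``if and only if'' characterization, and uses a multiplicative Chernoff bound (Lemma~\ref{le.binomial_chernoff}) in place of your Hoeffding bound for the final simplification; your treatment of why the order-statistic selection preserves the uniform distribution of the perpendicular directions is in fact slightly more explicit than the paper's.
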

See Appendix \ref{app.proof_lemma_10} for the proof.
Letting $q=5n$, and combining Lemmas \ref{le.from_B_to_C} and \ref{le.that_is_q_th}, we have the following corollary.
\begin{corollary}\label{coro.bound_by_Aq}
The following holds
\begin{align*}
    \|w^I\|_1\leq \frac{\|\etr\|_2^2}{\mathbf{B}_{(5n)}^T(-\etr)}
\end{align*}
with probability at least
$1-e^{-n/4}$.
\end{corollary}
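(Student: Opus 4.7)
The plan is to chain together the three successive relaxations developed in the preceding subsection. First, because Problem \eqref{prob.original} is a feasible linear program (feasibility follows since $p-s \ge n$ and $\mathbf{A}$ has full row rank almost surely), strong LP duality gives that the optimal value of the dual \eqref{prob.dual} equals $\|w^I\|_1$. The reformulation in terms of $\mathbf{B}_i$ is lossless: each pair of constraints $-1 \le \lambda^T \mathbf{A}_i \le 1$ in \eqref{prob.dual} is equivalent to $-1 \le \lambda^T \mathbf{B}_i \le 1$. Thus, keeping only the one-sided constraints $\lambda^T \mathbf{B}_{(i)} \le 1$ for $i \in \{1, \ldots, q\}$ in Problem \eqref{prob.temp1} is a strict relaxation, and its optimal objective upper bounds $\|w^I\|_1$.

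Next, Lemma \ref{le.from_B_to_C} upgrades \eqref{prob.temp1} to Problem \eqref{prob.last_relax} by replacing each $\mathbf{B}_{(i)}$ with the renormalized vector $\mathbf{C}_{(i)}$; by that lemma, the optimum of \eqref{prob.last_relax} is no smaller than the optimum of \eqref{prob.temp1}. Setting $q = 5n$, Lemma \ref{le.that_is_q_th} identifies the optimizer of \eqref{prob.last_relax} as $\lambda_* = -\etr / \mathbf{B}_{(q)}^T(-\etr)$, with probability at least
\[
1 - e^{-(q/4 - n)} \;=\; 1 - e^{-(5n/4 - n)} \;=\; 1 - e^{-n/4}.
\]
On this event, the corresponding objective value equals
\[
\lambda_*^T(-\etr) \;=\; \frac{(-\etr)^T(-\etr)}{\mathbf{B}_{(5n)}^T(-\etr)} \;=\; \frac{\|\etr\|_2^2}{\mathbf{B}_{(5n)}^T(-\etr)}.
\]
Concatenating the three inequalities yields the claimed bound.

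Given that the two lemmas have already been established, this corollary is essentially bookkeeping, so the main obstacle is really the two lemmas themselves (especially the geometric argument for Lemma \ref{le.that_is_q_th} combined with the Wendel-type hemisphere count). The only items to verify at the bookkeeping level are that $p - s \ge 5n$ so that $\mathbf{B}_{(5n)}$ is well-defined (consistent with the regime $p \ge (16n)^4$ used in Theorem \ref{th.main}), that $\mathbf{B}_{(5n)}^T(-\etr) > 0$ so the ratio makes sense (which holds almost surely since $\mathbf{B}_i^T(-\etr) \ge 0$ by construction and the $\mathbf{A}_i$'s are i.i.d.\ continuously distributed, so ties with $0$ occur with probability zero), and that the probability bound from Lemma \ref{le.that_is_q_th} specializes cleanly at $q = 5n$, which it does by direct substitution.
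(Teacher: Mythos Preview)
Your proposal is correct and follows essentially the same approach as the paper: set $q=5n$, chain the relaxations \eqref{prob.dual} $\to$ \eqref{prob.temp1} $\to$ \eqref{prob.last_relax}, and invoke Lemmas~\ref{le.from_B_to_C} and~\ref{le.that_is_q_th}. Your explicit mention of strong LP duality (to identify the value of \eqref{prob.dual} with $\|w^I\|_1$) and your sanity checks that $p-s\ge 5n$ and $\mathbf{B}_{(5n)}^T(-\etr)>0$ almost surely are details the paper leaves implicit, but the argument is otherwise identical.
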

It only remains to bound $\mathbf{B}_{(i)}(-\etr)$. Using the fact that each $\mathbf{B}_i$ is \emph{i.i.d.} and uniformly distributed on the unit-hyper-hemisphere in $\mathds{R}^n$, we have the following result.
\begin{lemma}\label{le.estimate_Aq}
When $n\geq 100$ and $p\geq (16n)^4$, the following holds
\begin{align*}
    \mathbf{B}_{(5n)}(-\etr)\geq \frac{\|\etr\|_2}{\sqrt{1+\frac{3n/2}{\ln p}}}
\end{align*}
with probability at least $1-e^{-5n/4}$.
\end{lemma}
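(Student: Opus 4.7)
The plan is to use rotational invariance to reduce the claim to a tail-counting statement for $p-s$ i.i.d.\ copies of a one-dimensional random variable, and then to invoke a binomial Chernoff bound.

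First, note that each column $\mathbf{A}_i=\X_{s+i}$ is obtained by normalizing an isotropic Gaussian vector $\mathbf{H}_{s+i}$, so the $\mathbf{A}_i$'s are i.i.d.\ uniform on the unit sphere $S^{n-1}$ and independent of $\etr$. Conditioning on $\etr$ and rotating coordinates so that $\etr$ lies along the first axis, each $\mathbf{A}_i^T(-\etr)/\|\etr\|_2$ is distributed as $Z_i$, where $Z_1,\ldots,Z_{p-s}$ are i.i.d.\ copies of the first coordinate $Z$ of a uniform vector on $S^{n-1}$. Since $\mathbf{B}_i^T(-\etr)=|\mathbf{A}_i^T(-\etr)|=\|\etr\|_2\,|Z_i|$, the lemma reduces to lower-bounding the $5n$-th order statistic of $|Z_1|,\ldots,|Z_{p-s}|$ by $\tau:=1/\sqrt{1+3n/(2\ln p)}$.

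Next, use the representation $Z\stackrel{d}{=}g_1/\sqrt{g_1^2+\cdots+g_n^2}$ with $g_j$ i.i.d.\ $N(0,1)$. A direct algebraic manipulation (using $\tau^2/(1-\tau^2)=2\ln p/(3n)$) gives
$$\left\{|Z|\geq \tau\right\}=\left\{\frac{g_1^2}{\sum_{j=2}^n g_j^2}\geq \frac{2\ln p}{3n}\right\}\ \supseteq\ \left\{g_1^2\geq \tfrac{2}{3}\ln p\right\}\cap \left\{\sum_{j=2}^n g_j^2\leq n\right\}.$$
Since $g_1$ is independent of $(g_2,\ldots,g_n)$, the probability of the intersection factorizes. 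A Mills-ratio lower bound on the Gaussian tail yields $\prob{g_1^2\geq \tfrac{2}{3}\ln p}\geq c_1/(p^{1/3}\sqrt{\ln p})$, and Lemma~\ref{le.chi_bound} (with an appropriate $x$) yields $\prob{\sum_{j=2}^n g_j^2\leq n}\geq 1/2$ for $n\geq 100$. Hence $q:=\prob{|Z|\geq \tau}\geq c/(p^{1/3}\sqrt{\ln p})$ for an explicit constant $c>0$.

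Finally, let $N:=|\{i:|Z_i|\geq \tau\}|$, which is $\mathrm{Binomial}(p-s,q)$. Using $s\leq n$ (so $p-s\geq p/2$) together with $p\geq (16n)^4$, I would verify the numerical inequality $\ex[N]=(p-s)q\geq 10n$; the key point is that $p^{2/3}/\sqrt{\ln p}\geq (8n)^{8/3}/\sqrt{n}$ dominates $n$ by an enormous margin under these hypotheses. A multiplicative Chernoff bound then gives $\prob{N<5n}\leq \prob{N<\ex[N]/2}\leq \exp(-\ex[N]/8)\leq e^{-5n/4}$. On the complementary event, at least $5n$ of the $\mathbf{B}_i^T(-\etr)$'s exceed $\tau\|\etr\|_2$, so the $5n$-th largest satisfies $\mathbf{B}_{(5n)}^T(-\etr)\geq \tau\|\etr\|_2=\|\etr\|_2/\sqrt{1+3n/(2\ln p)}$, which is the desired bound.

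The main obstacle is bookkeeping the constants: concretely, to produce a Gaussian tail bound of the form $\prob{g_1^2\geq \tfrac{2}{3}\ln p}\geq c_1/(p^{1/3}\sqrt{\ln p})$ with an explicit $c_1$, to establish the $\chi^2$ lower-tail bound uniformly for $n\geq 100$, and then to verify $(p-s)q\geq 10n$ with enough slack to yield the clean $e^{-5n/4}$ Chernoff bound under only $p\geq (16n)^4$. The very large polynomial margin in $p$ makes these steps routine but somewhat tedious.
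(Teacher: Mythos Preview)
Your proposal is correct and follows essentially the same approach as the paper: rotational invariance reduces to a tail problem for the first coordinate of a uniform vector on $S^{n-1}$, the Gaussian ratio representation separates this into a Gaussian tail times a $\chi^2$ event, and a binomial Chernoff bound with mean $\geq 10n$ yields the $e^{-5n/4}$ probability. One small caveat: Lemma~\ref{le.chi_bound} is a large-deviation bound and does not directly yield $\prob{\sum_{j=2}^n g_j^2\leq n}\geq 1/2$; the paper sidesteps this by taking the $\chi^2$ threshold at $n+\sqrt{2(n-1)}$ (i.e., $m=1/2$ in Lemma~\ref{le.chi_bound}, giving probability $\geq 1-1/\sqrt{e}$) and absorbing the resulting shift into the Gaussian threshold, after which a final simplification under $n\geq 100$ and $p\geq (16n)^4$ recovers the clean $3n/2$ form.
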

See Appendix \ref{app.proof_of_lemma_12} for the proof. Combining Corollary \ref{coro.bound_by_Aq} and Lemma \ref{le.estimate_Aq}, we then obtain Proposition \ref{prop.new_WI}.

\section{Proofs of supporting results in Appendix~\ref{app:proof:wI}}

\subsection{Proof of Lemma \ref{le.from_B_to_C}}\label{app.proof_B_to_C}
The proof consists of two steps. In step 1, we will define an intermediate problem \eqref{prob.with_greater_than_B1} below, and show that problem \eqref{prob.temp1} is equivalent to the problem \eqref{prob.with_greater_than_B1}. In step 2, we will show that the any feasible $\lambda$ for the problem \eqref{prob.with_greater_than_B1} is also feasible for the problem \eqref{prob.last_relax}. The conclusion of Lemma \ref{le.from_B_to_C} thus follows.

For step 1, the intermediate problem is defined as follows.
\begin{align}\label{prob.with_greater_than_B1}
    &\max_{\lambda}\lambda^T (-\etr)\st\nonumber \\
    &\lambda^T(-\etr)\geq \mathbf{B}_{(1)}^T(-\etr),\nonumber\\ &\lambda^T\mathbf{B}_{(i)}\leq 1\ \forall i\in\{1,2,\cdots ,q\}.
\end{align} 
In order to show that this problem is equivalent to \eqref{prob.temp1}, we use the following lemma.
\begin{lemma}\label{le.lower_bound_wI_B}
The value of the problem (\ref{prob.temp1}) is at least $\mathbf{B}_{(1)}^T(-\etr)$.
\end{lemma}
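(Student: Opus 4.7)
The plan is to exhibit an explicit feasible point for problem \eqref{prob.temp1} whose objective value already equals $\mathbf{B}_{(1)}^T(-\etr)$. The natural candidate is $\lambda = \mathbf{B}_{(1)}$ itself, since this choice makes the objective $\lambda^T(-\etr) = \mathbf{B}_{(1)}^T(-\etr)$ by construction.

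The only thing to verify is feasibility, i.e., $\mathbf{B}_{(1)}^T \mathbf{B}_{(i)} \leq 1$ for every $i \in \{1,\ldots,q\}$. Recall that $\mathbf{B}_{(i)} = \pm \mathbf{A}_{(i)}$ where each column of $\Xtr$ (and hence of $\mathbf{A}$) was normalized in Eq.~\eqref{def.norm_X} to have unit $\ell_2$-norm. Thus $\|\mathbf{B}_{(i)}\|_2 = 1$ for all $i$, and Cauchy–Schwarz gives
\begin{align*}
    \mathbf{B}_{(1)}^T \mathbf{B}_{(i)} \leq \|\mathbf{B}_{(1)}\|_2 \, \|\mathbf{B}_{(i)}\|_2 = 1,
\end{align*}
so $\lambda = \mathbf{B}_{(1)}$ satisfies every constraint of \eqref{prob.temp1}.

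Since the optimal value of a maximization problem is at least the objective value at any feasible point, the optimal value of \eqref{prob.temp1} is at least $\mathbf{B}_{(1)}^T(-\etr)$, which is the claim. I do not anticipate any obstacle here: the proof is essentially a one-line feasibility check, made possible by the unit normalization of the columns of $\Xtr$ together with the sign flip in the definition of $\mathbf{B}_i$ that ensures $\mathbf{B}_{(1)}^T(-\etr) \geq 0$ (so the bound is nontrivial). Note that we do not even need to use the ordering property \eqref{eq.sorted_B}; only $\|\mathbf{B}_{(1)}\|_2 = 1$ is used.
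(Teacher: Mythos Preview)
Your proof is correct and essentially identical to the paper's: both choose $\lambda=\mathbf{B}_{(1)}$, verify feasibility via Cauchy--Schwarz using $\|\mathbf{B}_{(i)}\|_2=1$, and conclude that the optimal value is at least the objective value at this feasible point.
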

\begin{proof}
Because $\left|\mathbf{B}_{(1)}^T\mathbf{A}_i\right|\leq \|\mathbf{B}_{(1)}\|_2\|\mathbf{B}_{(i)}\|_2=1$ for all $i\in\{1,\cdots,q\}$, $\mathbf{B}_{(1)}$ is feasible for the problem (\ref{prob.temp1}). The result of this lemma thus follows.
\end{proof}
By this lemma, we can add an additional constraint $\lambda^T(-\etr)\geq \mathbf{B}_{(1)}^T(-\etr)$ to the problem (\ref{prob.temp1}) without affecting its solution. This is exactly problem \eqref{prob.with_greater_than_B1}. Thus, the problem (\ref{prob.temp1}) is equivalent to the intermediate problem \eqref{prob.with_greater_than_B1}, i.e., step 1 has been proven. Then, we move on to step 2. We will first use Lemma \ref{le.relax_is_relax} to show that if $\mathbf{C}_{(i)}$ can be written in the form of
\begin{align}\label{eq.temp_0128_1}
    \mathbf{C}_{(i)}=\frac{\mathbf{B}_{i}+k\etr}{\|\mathbf{B}_{(i)}+k\etr\|_2},
\end{align}
for some $k>0$ and $\mathbf{C}_{(i)}^T\etr\leq 0$
, then any $\lambda$ that satisfies $\lambda^T\mathbf{B}_{(i)}\leq 1$ and $\lambda^T(-\etr)\geq \mathbf{B}_{(1)}^T(-\etr)$ must also satisfies $\lambda^T\mathbf{C}_{(i)}\leq 1$. After that, we use Lemma \ref{le.C_has_k} to show that all $\mathbf{C}_{(i)}$'s indeed can be expressed in this form. The conclusion of step 2 then follows. Towards this end, Lemma \ref{le.relax_is_relax} is as follows.
\begin{lemma}\label{le.relax_is_relax}
For all $i\in\{1,2,\cdots,q\}$, for any $\lambda$ that satisfy
\begin{align*}
     &\lambda^T\mathbf{B}_i\leq 1,\\ &\lambda^T(-\etr) \geq \mathbf{B}_{(1)}^T(-\etr),
\end{align*}
we must have
\begin{align*}
    \lambda^T\frac{\mathbf{B}_i+k\etr}{\|\mathbf{B}_i+k\etr\|_2}\leq 1,
\end{align*}
for any $k\geq 0$ that satisfies $(\mathbf{B}_i+k\etr)^T\etr\leq 0$.
\end{lemma}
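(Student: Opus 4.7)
The plan is to prove the equivalent scalar inequality $\lambda^T(\mathbf{B}_i+k\etr) \leq \|\mathbf{B}_i+k\etr\|_2$, which gives the stated bound after dividing by the positive quantity $\|\mathbf{B}_i+k\etr\|_2$. First I would bound the left-hand side using only the two given hypotheses. The constraint $\lambda^T\mathbf{B}_i \leq 1$ handles the first summand directly. For the second summand, recall that by construction $\mathbf{B}_{(1)}$ has the largest inner product against $-\etr$ among all the $\mathbf{B}_j$'s (see Eq.~\eqref{eq.sorted_B} and the definition preceding it), so $\mathbf{B}_{(1)}^T(-\etr) \geq \mathbf{B}_i^T(-\etr)$. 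Combined with $\lambda^T(-\etr) \geq \mathbf{B}_{(1)}^T(-\etr)$, this yields $\lambda^T\etr \leq \mathbf{B}_i^T\etr$. Multiplying by $k\geq 0$ and adding gives
\[
\lambda^T(\mathbf{B}_i+k\etr) \;\leq\; 1 + k\,\mathbf{B}_i^T\etr.
\]

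It then remains to establish $1 + k\,\mathbf{B}_i^T\etr \leq \|\mathbf{B}_i+k\etr\|_2$. If $1 + k\,\mathbf{B}_i^T\etr \leq 0$ this is immediate. Otherwise, squaring both sides and using $\|\mathbf{B}_i\|_2 = 1$ to expand
\[
\|\mathbf{B}_i+k\etr\|_2^2 \;=\; 1 + 2k\,\mathbf{B}_i^T\etr + k^2\|\etr\|_2^2,
\]
the inequality reduces to $k^2(\mathbf{B}_i^T\etr)^2 \leq k^2\|\etr\|_2^2$, which is just Cauchy--Schwarz applied to the unit vector $\mathbf{B}_i$ and $\etr$. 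Chaining the two estimates closes the argument.

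There is no real obstacle here; the lemma is essentially Cauchy--Schwarz packaged so that the two constraints at $\mathbf{B}_i$ transfer to a nearby unit direction along the plane spanned by $\mathbf{B}_i$ and $\etr$. One subtlety worth flagging is that the auxiliary hypothesis $(\mathbf{B}_i+k\etr)^T\etr \leq 0$ is not actually needed for the inequality above; it is used by the companion result (Lemma~\ref{le.C_has_k}) to guarantee that the normalized vector $\mathbf{C}_{(i)}$ remains on the hemisphere where $\mathbf{C}_{(i)}^T(-\etr) \geq 0$, but plays no role in the present estimate.
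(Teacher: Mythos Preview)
Your proof is correct and follows essentially the same route as the paper: both first reduce to $\lambda^T(\mathbf{B}_i+k\etr)\leq 1+k\,\mathbf{B}_i^T\etr$ using the two hypotheses and the maximality of $\mathbf{B}_{(1)}$, and then bound $1+k\,\mathbf{B}_i^T\etr\leq\|\mathbf{B}_i+k\etr\|_2$ via Cauchy--Schwarz. The only cosmetic difference is that the paper rewrites $1+k\,\mathbf{B}_i^T\etr=\mathbf{B}_i^T(\mathbf{B}_i+k\etr)$ and applies Cauchy--Schwarz directly, avoiding your sign case split; your observation that the condition $(\mathbf{B}_i+k\etr)^T\etr\leq 0$ is unused here is also correct (the paper does not invoke it either).
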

\begin{proof}
We have
\begin{align*}
    &\frac{\lambda^T\mathbf{B}_i+\lambda^T k\etr}{\|\mathbf{B}_i+k\etr\|_2}\stackrel{(i)}{\leq}\frac{\lambda^T\mathbf{B}_i+\mathbf{B}_i^T k\etr}{\|\mathbf{B}_i+k\etr\|_2}\stackrel{(ii)}{=}\frac{1+\mathbf{B}_i^T k\etr}{\|\mathbf{B}_i+k\etr\|_2}\\
    &\stackrel{(iii)}{\leq }\mathbf{B}_i^T\frac{\mathbf{B}_i+k\etr}{\|\mathbf{B}_i+k\etr\|_2}\stackrel{(iv)}{\leq} \|\mathbf{B}_i\|_2\frac{\|\mathbf{B}_i+k\etr\|_2}{\|\mathbf{B}_i+k\etr\|_2}\stackrel{(v)}{=}1.
\end{align*}
Here are reasons of each step: (i) By Eq. \eqref{eq.sorted_B}, we have $\lambda^T(-\etr)\geq\mathbf{B}_{(1)}^T(-\etr)\geq \mathbf{B}_i^T(-\etr)$. Thus, we have $\lambda^T k\etr\leq \mathbf{B}_i^T k\etr$; (ii) $\lambda^T\mathbf{B}_i\leq 1$ by the assumption of the lemma; (iii) $\mathbf{B}_i^T\mathbf{B}_i=1$ by definition of $\mathbf{B}_i$; (iv) Cauchy–Schwarz inequality; (v) $\|\mathbf{B}_i\|_2=\mathbf{B}_i^T\mathbf{B}_i=1$.


\end{proof}

Then, it only remains to prove that all $\mathbf{C}_{(i)}$'s in Eq. \eqref{eq.def_Ci} can be expressed in the specific form described above in Eq. \eqref{eq.temp_0128_1}. Towards the end, we need the following lemma, which characterizes important features of $\mathbf{C}_{(i)}$.
\begin{lemma}\label{le.C_is_one}
For any $i\in\{1,\cdots, q\}$, we must have $\|\mathbf{C}_{(i)}\|_2=1$ ,and $\mathbf{C}_{(i)}^T(-\etr)=\mathbf{B}_{(q)}(-\etr)$.
\end{lemma}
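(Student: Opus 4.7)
The idea is to decompose $\mathbf{B}_{(i)}$ into its components parallel and perpendicular to the direction of $-\etr$, and then observe that the formula defining $\mathbf{C}_{(i)}$ simply rescales the perpendicular component to the unique length that, combined with the parallel component of $\mathbf{B}_{(q)}$, produces a unit vector. Once this decomposition is set up, both claims reduce to a one-line orthogonality computation.

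Concretely, I would introduce $e \defeq -\etr/\|\etr\|_2$ (a unit vector) and $\alpha_j \defeq \mathbf{B}_{(j)}^T e = \mathbf{B}_{(j)}^T(-\etr)/\|\etr\|_2$ for $j\in\{1,\dots,q\}$. Writing $v_i \defeq \mathbf{B}_{(i)} - \alpha_i e$ gives $v_i^T e = 0$, and since $\|\mathbf{B}_{(i)}\|_2 = 1$ we get $\|v_i\|_2^2 = 1 - \alpha_i^2$. The key observation is that the term in parentheses in Eq.~\eqref{eq.def_Ci} is exactly $v_i$, because $\frac{\mathbf{B}_{(i)}^T(-\etr)}{\|\etr\|_2^2}(-\etr) = \alpha_i e$; similarly the last summand in Eq.~\eqref{eq.def_Ci} equals $\alpha_q e$. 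Therefore the definition of $\mathbf{C}_{(i)}$ simplifies to
\[
\mathbf{C}_{(i)} \;=\; \sqrt{\tfrac{1-\alpha_q^2}{1-\alpha_i^2}}\; v_i \;+\; \alpha_q\, e.
\]

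With this form in hand, both statements of the lemma are immediate consequences of $v_i \perp e$. For the norm,
\[
\|\mathbf{C}_{(i)}\|_2^2 \;=\; \tfrac{1-\alpha_q^2}{1-\alpha_i^2}\,\|v_i\|_2^2 + \alpha_q^2 \;=\; (1-\alpha_q^2) + \alpha_q^2 \;=\; 1.
\]
For the inner product with $-\etr$, orthogonality gives $v_i^T(-\etr) = \|\etr\|_2\, v_i^T e = 0$ and $e^T(-\etr) = \|\etr\|_2$, so $\mathbf{C}_{(i)}^T(-\etr) = \alpha_q\|\etr\|_2 = \mathbf{B}_{(q)}^T(-\etr)$.

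There is no real obstacle here; the only minor point to verify is that the square roots in the coefficient of $v_i$ are well-defined, i.e.\ that $0 \le \alpha_i \le 1$. The lower bound $\alpha_i \ge \alpha_q \ge 0$ follows from the sorting in Eq.~\eqref{eq.sorted_B}, and the upper bound from $\|\mathbf{B}_{(i)}\|_2 = 1$ via Cauchy--Schwarz. The degenerate case $\alpha_i = 1$ forces $v_i = 0$ and $\mathbf{B}_{(i)} = e$, in which case defining $\mathbf{C}_{(i)} \defeq \mathbf{B}_{(q)}$ (the natural limit) trivially satisfies both claims; alternatively, by Eq.~\eqref{eq.sorted_B} this would also force $\alpha_q = 1$ and the formula reduces consistently to $\mathbf{C}_{(i)} = e$.
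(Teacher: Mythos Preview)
Your proof is correct and follows essentially the same orthogonal-decomposition approach as the paper, just with cleaner notation (the unit vector $e$ and scalar projections $\alpha_j$). One minor slip: in your final sentence, $\alpha_i = 1$ does not force $\alpha_q = 1$ via the sorting in Eq.~\eqref{eq.sorted_B} (the inequality runs the other way), but this measure-zero degenerate case is immaterial and your first resolution via the natural limit already handles it.
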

\begin{proof}
It is easy to verify that $\mathbf{C}_{(i)}^T(-\etr)=\mathbf{B}_{(q)}^T(-\etr)$. Here we show how to prove $\|\mathbf{C}_{(i)}\|_2=1$.
Because
\begin{align}\label{eq.temp_ortho}
    \left(\mathbf{B}_{(i)}-\frac{\mathbf{B}_{(i)}^T(-\etr)}{\|\etr\|_2^2}(-\etr)\right)^T(-\etr)=0,
\end{align}
we know that the first and the second term on the right hand side (RHS) of Eq. \eqref{eq.def_Ci} are orthogonal. Thus, we have
\begin{align}\label{eq.temp_decomp_C}
    \|\mathbf{C}_{(i)}\|_2^2=\|\text{1st term on the RHS of Eq. \eqref{eq.def_Ci}}\|_2^2+\|\text{2nd term on the RHS of Eq. \eqref{eq.def_Ci}}\|_2^2.
\end{align}
By Eq. \eqref{eq.temp_ortho}, we also have
\begin{align*}
    \left\|\frac{\mathbf{B}_{(i)}^T(-\etr)}{\|\etr\|_2^2}(-\etr)\right\|_2^2+\left\|\mathbf{B}_{(i)}-\frac{\mathbf{B}_{(i)}^T(-\etr)}{\|\etr\|_2^2}(-\etr)\right\|_2^2=\|\mathbf{B}_{(i)}\|_2^2=1.
\end{align*}
Notice that
\begin{align*}
    \left\|\frac{\mathbf{B}_{(i)}^T(-\etr)}{\|\etr\|_2^2}(-\etr)\right\|_2=\frac{\mathbf{B}_{(i)}^T(-\etr)}{\|\etr\|_2}.
\end{align*}
Thus, we have
\begin{align*}
    \left\|\mathbf{B}_{(i)}-\frac{\mathbf{B}_{(i)}^T(-\etr)}{\|\etr\|_2^2}(-\etr)\right\|_2=\sqrt{1-\left(\frac{\mathbf{B}_{(i)}^T(-\etr)}{\|\etr\|_2}\right)^2}.
\end{align*}
Thus, we have
\begin{align*}
    &\|\text{1st term on the RHS of Eq. \eqref{eq.def_Ci}}\|_2^2=1-\left(\frac{\mathbf{B}_{(q)}^T(-\etr)}{\|\etr\|_2}\right)^2,\\
    &\|\text{2nd term on the RHS of Eq. \eqref{eq.def_Ci}}\|_2^2=\left(\frac{\mathbf{B}_{(q)}^T(-\etr)}{\|\etr\|_2}\right)^2.
\end{align*}
Applying those to Eq. \eqref{eq.temp_decomp_C}, we then have $\|\mathbf{C}_{(i)}\|_2=1$.
\end{proof}

Finally, the following lemma shows that $\mathbf{C}_{(i)}$ can be written in the specific form in Eq. \eqref{eq.temp_0128_1}.
\begin{lemma}\label{le.C_has_k}
Each $\mathbf{C}_{(i)}$ defined in Eq. \eqref{eq.def_Ci} satisfies that $\mathbf{C}_{(i)}\etr\leq 0$ and
\begin{align}\label{eq.temp_0122}
    \mathbf{C}_{(i)}=\frac{\mathbf{B}_{(i)}+k_{(i)}\etr}{\|\mathbf{B}_{(i)}+k_{(i)}\etr\|_2},
\end{align}
where
\begin{align*}
    k_{(i)}=\frac{\mathbf{B}_{(i)}^T(-\etr)}{\|\etr\|_2^2}-\frac{\sqrt{1-\left(\frac{\mathbf{B}_{(i)}^T(-\etr)}{\|\etr\|_2}\right)^2}}{\sqrt{1-\left(\frac{\mathbf{B}_{(q)}^T(-\etr)}{\|\etr\|_2}\right)^2}}\frac{\mathbf{B}_{(q)}^T(-\etr)}{\|\etr\|_2^2}\geq 0.
\end{align*}
\end{lemma}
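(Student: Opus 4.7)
The plan is to reduce the lemma to elementary algebra by introducing the shorthand $\alpha_j \defeq \mathbf{B}_{(j)}^T(-\etr)/\|\etr\|_2$ for the ``latitude'' of each $\mathbf{B}_{(j)}$ along $-\etr$, together with $r\defeq \sqrt{1-\alpha_q^2}/\sqrt{1-\alpha_i^2}$. Note that by the definition of $\mathbf{B}_j$ we always have $\alpha_j\ge 0$, and by the sorting in Eq.~\eqref{eq.sorted_B} we have $\alpha_i\ge \alpha_q\ge 0$. My first step is to expand the definition of $\mathbf{C}_{(i)}$ in Eq.~\eqref{eq.def_Ci} by distributing the scalar coefficient over the first term and grouping the $\mathbf{B}_{(i)}$ and $\etr$ contributions. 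A direct rearrangement yields
\[
\mathbf{C}_{(i)} \;=\; r\,\mathbf{B}_{(i)} \;+\; \left(r\,\frac{\mathbf{B}_{(i)}^T(-\etr)}{\|\etr\|_2^2} - \frac{\mathbf{B}_{(q)}^T(-\etr)}{\|\etr\|_2^2}\right)\etr \;=\; r\bigl(\mathbf{B}_{(i)} + k_{(i)}\,\etr\bigr),
\]
where the quantity in parentheses becomes exactly $r\,k_{(i)}$ when $k_{(i)}$ is defined as in the lemma. This identity is the core of the proof and requires only careful sign tracking because of the $-\etr$ that appears throughout the definition.

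Next, I would verify $k_{(i)}\ge 0$. Writing out the definition of $k_{(i)}$, non-negativity is equivalent to $\alpha_i\sqrt{1-\alpha_q^2}\ge \alpha_q\sqrt{1-\alpha_i^2}$. Because both sides are non-negative, squaring preserves the inequality and simplifies to $\alpha_i^2\ge \alpha_q^2$, which holds by the sorting. I would then combine the identity with Lemma~\ref{le.C_is_one}, which gives $\|\mathbf{C}_{(i)}\|_2=1$. Since $r>0$ (see the remark on degeneracy below), this forces $\|\mathbf{B}_{(i)}+k_{(i)}\etr\|_2=1/r$, so $r(\mathbf{B}_{(i)}+k_{(i)}\etr)$ is exactly the unit-normalization of $\mathbf{B}_{(i)}+k_{(i)}\etr$, yielding Eq.~\eqref{eq.temp_0122}.

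Finally, the sign condition $\mathbf{C}_{(i)}^T\etr\le 0$ is an immediate consequence of Lemma~\ref{le.C_is_one}, which already states $\mathbf{C}_{(i)}^T(-\etr)=\mathbf{B}_{(q)}^T(-\etr)\ge 0$; multiplying by $-1$ finishes it.

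I do not expect any substantial obstacle in this proof, since every step is routine algebra once the compact notation $\alpha_j,r$ is introduced. The only mild subtlety is the degenerate case $\alpha_i=1$, where the denominator $\sqrt{1-\alpha_i^2}$ in $r$ vanishes and $\mathbf{B}_{(i)}$ would be exactly aligned with $-\etr$. Since $\mathbf{B}_{(i)}$ is drawn from a continuous distribution on the unit sphere (inherited from the Gaussian $\mathbf{H}_i$), this event has probability zero and can be safely excluded; throughout the surrounding analysis we already condition on such measure-zero events not occurring.
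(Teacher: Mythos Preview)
Your proof is correct and follows essentially the same approach as the paper: both establish the key identity $\mathbf{C}_{(i)}=r\,(\mathbf{B}_{(i)}+k_{(i)}\etr)$ with $r=\sqrt{1-\alpha_q^2}/\sqrt{1-\alpha_i^2}$, then invoke Lemma~\ref{le.C_is_one} to pass to the unit normalization, and finally verify $k_{(i)}\ge 0$ from the sorting $\alpha_i\ge\alpha_q$. The only cosmetic difference is that the paper first decomposes $\mathbf{B}_{(i)}$ along and orthogonal to $-\etr$ before recombining, whereas you expand the definition of $\mathbf{C}_{(i)}$ directly; and for $k_{(i)}\ge 0$ the paper uses the cruder bound $\sqrt{1-\alpha_i^2}/\sqrt{1-\alpha_q^2}\le 1$ to get $k_{(i)}\ge(\alpha_i-\alpha_q)/\|\etr\|_2\ge 0$, while you square the equivalent inequality.
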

\begin{proof}
Using Eq. \eqref{eq.temp_ortho} again, we decompose $\mathbf{B}_{(i)}$ into two parts: one in the direction of $(-\etr)$, the other orthogonal to $(-\etr)$.
\begin{align*}
    \mathbf{B}_{(i)}=\frac{\mathbf{B}_{(i)}^T(-\etr)}{\|\etr\|_2^2}(-\etr)+\left(\mathbf{B}_{(i)}-\frac{\mathbf{B}_{(i)}^T(-\etr)}{\|\etr\|_2^2}(-\etr)\right).
\end{align*}
Thus, we have
\begin{align*}
    \mathbf{B}_{(i)}+k_{(i)}\etr=&\frac{\sqrt{1-\left(\frac{\mathbf{B}_{(i)}^T(-\etr)}{\|\etr\|_2}\right)^2}}{\sqrt{1-\left(\frac{\mathbf{B}_{(q)}^T(-\etr)}{\|\etr\|_2}\right)^2}}\frac{\mathbf{B}_{(q)}^T(-\etr)}{\|\etr\|_2^2}(-\etr)\\
    &+\left(\mathbf{B}_{(i)}-\frac{\mathbf{B}_{(i)}^T(-\etr)}{\|\etr\|_2^2}(-\etr)\right).
\end{align*}
We then have
\begin{align*}
    &\frac{\sqrt{1-\left(\frac{\mathbf{B}_{(q)}^T(-\etr)}{\|\etr\|_2}\right)^2}}{\sqrt{1-\left(\frac{\mathbf{B}_{(i)}^T(-\etr)}{\|\etr\|_2}\right)^2}}\cdot(\mathbf{B}_{(i)}+k_{(i)}\etr)\\
    =& \frac{\sqrt{1-\left(\frac{\mathbf{B}_{(q)}^T(-\etr)}{\|\etr\|_2}\right)^2}}{\sqrt{1-\left(\frac{\mathbf{B}_{(i)}^T(-\etr)}{\|\etr\|_2}\right)^2}}\cdot \left(\mathbf{B}_{(i)}-\frac{\mathbf{B}_{(i)}^T(-\etr)}{\|\etr\|_2^2}(-\etr)\right)\nonumber\\
    &+\frac{\mathbf{B}_{(q)}^T(-\etr)}{\|\etr\|_2^2}(-\etr)\\
    =&\mathbf{C}_{(i)}.
\end{align*}
In other words, $\mathbf{C}_{(i)}$ and $\mathbf{B}_{(i)}+k_{(i)}\etr$ are along the same direction. Since $\|\mathbf{C}_{(i)}\|_2=1$, it must then also be equal to a normalized version of $\mathbf{B}_{(i)}+k_{(i)}\etr$, i.e.,
\begin{align*}
    \frac{\mathbf{B}_{(i)}+k_{(i)}\etr}{\|\mathbf{B}_{(i)}+k_{(i)}\etr\|_2}=\mathbf{C}_{(i)}.
\end{align*}
This verifies \eqref{eq.temp_0122}. Note that $\mathbf{C}_{(i)}\etr=\mathbf{B}_{(q)}\etr\leq 0$ by Lemma \ref{le.C_is_one}.
It then only remains to prove $k_{(i)}\geq 0$. Towards this end, because of Eq. \eqref{eq.sorted_B}, we have
\begin{align*}
    &\mathbf{B}_{(q)}^T(-\etr)\leq \mathbf{B}_{(i)}^T(-\etr)\\
    \implies & \frac{\sqrt{1-\left(\frac{\mathbf{B}_{(i)}^T(-\etr)}{\|\etr\|_2}\right)^2}}{\sqrt{1-\left(\frac{\mathbf{B}_{(q)}^T(-\etr)}{\|\etr\|_2}\right)^2}}\leq 1.
\end{align*}
Thus, we have
\begin{align*}
    k_{(i)}\geq \frac{\mathbf{B}_{(i)}^T(-\etr)}{\|\etr\|_2^2}-\frac{\mathbf{B}_{(q)}^T(-\etr)}{\|\etr\|_2^2}\geq 0.
\end{align*}
The result of the lemma thus follows.
\end{proof}

Combining Lemma \ref{le.relax_is_relax} and Lemma \ref{le.C_has_k}, we have proven that if $\lambda^T(-\etr)\geq \mathbf{B}_{(1)}^T$ and $\lambda^T\mathbf{B}_{(i)}\leq 1$, then $\lambda^T\mathbf{C}_{(i)}\leq 1$. Therefore, we have shown step 2, i.e., any feasible $\lambda$ for the problem \eqref{prob.with_greater_than_B1} is also feasible for the problem \eqref{prob.last_relax}. The conclusion of Lemma \ref{le.from_B_to_C} thus follows.

\subsection{Proof of Lemma \ref{le.that_is_q_th}}\label{app.proof_lemma_10}

First, we show that $\lambda_*$ defined in the lemma is feasible for the problem \eqref{prob.last_relax}. Towards this end, note that because $\mathbf{C}_{(i)}^T(-\etr)=\mathbf{B}_{(q)}^T(-\etr)$ (see Lemma \ref{le.C_is_one}) for all $i\in\{1,2,\cdots, q\}$, we have $\lambda_*^T\mathbf{C}_{(i)}=1$, which implies that $\lambda_*$ is feasible for the problem \eqref{prob.last_relax}. Then, it remains to show that $\lambda_*$ is optimal for the problem \eqref{prob.last_relax} with probability at least $1-e^{-q/4-n}$. 

Next, we will define an event $\mathscr{A}$ with probability no smaller than
\begin{align}\label{eq.temp_0125_1}
    1-2^{-q+1}\sum_{i=0}^{n-2}\binom{q-1}{i},
\end{align}
such that $\lambda^*$ is optimal whenever event $\mathscr{A}$ occurs.
Towards this end, consider the null space of $-\etr$, which is defined as
\begin{align*}
    \ker(-\etr):=\{\lambda\ \big|\ \lambda^T(-\etr)=0\}.
\end{align*}
We then decompose all $\mathbf{C}_{(i)}$'s into two components, one is in the direction of $-\etr$, the other is in the null space of $-\etr$. Specifically, we have
\begin{align}
     \mathbf{C}_{(i)}=&\left(\mathbf{C}_{(i)}-\frac{\mathbf{C}_{(i)}^T(-\etr)}{\|\etr\|_2^2}(-\etr)\right)+\frac{\mathbf{C}_{(i)}^T(-\etr)}{\|\etr\|_2^2}(-\etr)\nonumber\\
     =&\left(\mathbf{C}_{(i)}-\frac{\mathbf{C}_{(q)}^T(-\etr)}{\|\etr\|_2^2}(-\etr)\right)+\frac{\mathbf{C}_{(q)}^T(-\etr)}{\|\etr\|_2^2}(-\etr),\label{eq.temp_0131_1}
\end{align}
where in the last step we have used $\mathbf{C}_{(i)}^T(-\etr)=\mathbf{C}_{(q)}^T(-\etr)$.
For conciseness, we define
\begin{align*}
    \mathbf{D}_{(i)}\defeq\mathbf{C}_{(i)}-\frac{\mathbf{C}_{(q)}^T(-\etr)}{\|\etr\|_2^2}(-\etr).
\end{align*}
Since $\|\mathbf{C}_{(i)}\|_2=1$ and $\mathbf{C}_{(i)}$ is orthogonal to $\mathbf{C}_{(i)}-\mathbf{D}_{(i)}$, we have
\begin{align*}
    \|\mathbf{D}_{(i)}\|_2=\sqrt{\|\mathbf{C}_{(i)}\|_2^2-\|\mathbf{C}_{(i)}-\mathbf{D}_{(i)}\|_2^2}=\sqrt{1-\left(\mathbf{C}_{(q)}^T(-\etr)\right)^2}.
\end{align*}
Thus, $\mathbf{D}_{(i)}$ has the same $\ell_2$-norm for all $i\in\{1,\cdots,q\}$. Therefore, $\mathbf{D}_{(1)},\mathbf{D}_{(2)},\cdots,\mathbf{D}_{(q)}$ can be viewed as $q$ points in a sphere in the space $\ker(-\etr)$, which has $(n-1)$ dimensions. By Lemma \ref{le.C_has_k}, we know that the projections of $\mathbf{C}_{(i)}$ and $\mathbf{B}_{(i)}$ to the space $\ker(-\etr)$ have the same direction. Because $\mathbf{B}_{(i)}$'s are uniformly distributed on the hemisphere in $\mathds{R}^n$, their projections to $\ker(-\etr)$ are also uniformly distributed. Therefore, $\mathbf{D}_{(i)}$'s are uniformly distributed on a $(n-1)$-dim sphere.
By Lemma \ref{le.hemisphere}, with probability \eqref{eq.temp_0125_1}, there exists at least one of the vectors $\mathbf{D}_{(1)},\mathbf{D}_{(2)},\cdots,\mathbf{D}_{(q)}$ in any hemisphere. Let $\mathscr{A}$ denote this event with probability \eqref{eq.temp_0125_1}. Note that if we use a vector $\gamma\in \ker(-\etr)$ to represent the axis of any such hemisphere in $\mathbf{R}^{n-1}$, then whether a vector $\zeta\in \ker(-\etr)$ is on that hemisphere is totally determined by checking whether $\gamma^T\zeta> 0$. Thus, the event $\mathscr{A}$ is equivalent to,
for any $\gamma\in \ker(-\etr)$, there exists at least one of the vectors $\mathbf{D}_{(1)},\mathbf{D}_{(2)},\cdots,\mathbf{D}_{(q)}$ such that its inner product with $\gamma$ is positive. 

We now prove the following statement that $\lambda^*$ is optimal whenever event $\mathscr{A}$ occurs.
We prove by contradiction. Assume that event $\mathscr{A}$ occurs, suppose on the contrary that the maximum point is achieved at $\lambda=\mu\neq \lambda_*$ such that $\mu^T(-\etr)>(\lambda^*)^T(-\etr)$. Since $\mu$ meets all constraints, we have
\begin{align}\label{eq.pp_contradiction}
    (\mu-\lambda_*)^T\mathbf{C}_{(i)}= \mu^T\mathbf{C}_{(i)}-1\leq 0\ \forall i\in\{1,\cdots,q\}.
\end{align}
Comparing the objective values at $\mu$ and $\lambda_*$, we have
\begin{align}\label{eq.pp_better}
    (\mu-\lambda_*)^T(-\etr)>0.
\end{align}
Similar to the decomposition of $\mathbf{C}_{(i)}$ in Eq. \eqref{eq.temp_0131_1}, we decompose $(\mu-\lambda_*)$ into two components: one in the direction of $-\etr$ and the other in the null space of $-\etr$. Specifically, we have
\begin{align*}
    (\mu-\lambda_*)=&\left((\mu-\lambda_*)-\frac{(\mu-\lambda_*)^T(-\etr)}{\|\etr\|_2^2}(-\etr)\right)\\
    &+\frac{(\mu-\lambda_*)^T(-\etr)}{\|\etr\|_2^2}(-\etr).
\end{align*}
Thus, we have
\begin{align*}
    &(\mu-\lambda_*)^T\mathbf{C}_{(i)}\\
    =&\left((\mu-\lambda_*)-\frac{(\mu-\lambda_*)^T(-\etr)}{\|\etr\|_2^2}(-\etr)\right)^T\\
    &\cdot\left(\mathbf{C}_{(i)}-\frac{\mathbf{C}_{(q)}^T(-\etr)}{\|\etr\|_2^2}(-\etr)\right)\\
    &+\frac{1}{\|\etr\|_2^2}\left((\mu-\lambda_*)^T(-\etr)\right)\left(\mathbf{C}_{(q)}^T(-\etr)\right).
\end{align*}
For conciseness, we define
\begin{align*}
    \delta\defeq(\mu-\lambda_*)-\frac{(\mu-\lambda_*)^T(-\etr)}{\|\etr\|_2^2}(-\etr).
\end{align*}
We then have
\begin{align}
    (\mu-\lambda_*)^T\mathbf{C}_{(i)}&=\delta^T\mathbf{D}_{(i)}+\frac{1}{\|\etr\|_2^2}\left((\mu-\lambda_*)^T(-\etr)\right)\left(\mathbf{C}_{(q)}^T(-\etr)\right)\geq\delta^T\mathbf{D}_{(i)},\label{eq.temp_0125_2}
\end{align}
where the last inequality holds because $(\mu-\lambda_*)^T(-\etr)>0$ (by Eq. \eqref{eq.pp_better}) and $\mathbf{C}_{(q)}^T(-\etr) = \mathbf{B}_{(q)}^T(-\etr) \allowbreak\geq 0$ (by Lemma \ref{le.C_is_one} and Eq. \eqref{eq.sorted_B}).
Since $\delta\in \ker(-\etr)$ and event $\mathscr{A}$ occurs, we can therefore find a $\mathbf{D}_{(k)}$ such that $\delta^T\mathbf{D}_{(k)}> 0$. Letting $i=k$ in Eq. \eqref{eq.temp_0125_2}, we then have
\begin{align*}
    (\mu-\lambda_*)^T\mathbf{C}_{(k)}\geq \delta^T\mathbf{D}_{(k)}>0,
\end{align*}
which contradicts Eq. (\ref{eq.pp_contradiction}). Therefore, $\lambda^*$ must be optimal whenever event $\mathscr{A}$ occurs.

It only remains to show that the probability of event $\mathscr{A}$ given in Eq. \eqref{eq.temp_0125_1} is at least $1-e^{-(q/4-n)}$, which is proven in the following Lemma \ref{le.estimate_prob_q}.
\begin{lemma}\label{le.estimate_prob_q}
\begin{align*}
    1-2^{-q+1}\sum_{i=0}^{n-2}\binom{q-1}{i}\geq 1-e^{-(q/4-n)}.
\end{align*}
\end{lemma}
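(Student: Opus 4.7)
The plan is to interpret the left-hand side as the lower-tail probability of a fair binomial. Let $X \sim \mathrm{Binomial}(q-1,1/2)$, so that
\begin{align*}
2^{-(q-1)}\sum_{i=0}^{n-2}\binom{q-1}{i} \;=\; \Pr[X \leq n-2].
\end{align*}
The inequality in the lemma is therefore equivalent to $\Pr[X\leq n-2]\leq e^{-(q/4-n)}$. When $q\leq 4n$ the right-hand side is at least $1$, so the inequality holds trivially (the left-hand side being a probability). The interesting case is $q>4n$.

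For that case, I would apply Hoeffding's inequality to the sum of $q-1$ independent Bernoulli$(1/2)$ variables: taking $t=(q-1)/2-(n-2)=(q-2n+3)/2$, which is positive under our assumption, yields
\begin{align*}
\Pr[X\leq n-2] \;\leq\; \exp\!\left(-\frac{2t^2}{q-1}\right) \;=\; \exp\!\left(-\frac{(q-2n+3)^2}{2(q-1)}\right).
\end{align*}

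Next I would reduce the lemma to the purely algebraic claim that $\tfrac{(q-2n+3)^2}{2(q-1)}\geq \tfrac{q}{4}-n$ whenever $q\geq 4n$. Clearing denominators, this becomes $(q-2n+3)^2\geq (q/2-2n)(q-1)$. Expanding and collecting terms, the difference between the two sides simplifies to
\begin{align*}
\tfrac{1}{2}(q-2n)^2 + 2n^2 + \tfrac{13}{2}q - 14n + 9,
\end{align*}
and under $q\geq 4n$ the linear term $\tfrac{13}{2}q-14n$ is already at least $12n\geq 0$, while every other summand is manifestly non-negative. Combining the two steps finishes the proof.

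I do not expect a real obstacle here; the only mild concern is confirming that the Hoeffding exponent (which grows like $q/8$ for $q\gg n$) does in fact dominate the target $q/4-n$. The slack comes from the subtracted $n$ on the right-hand side, and the algebraic check above makes this explicit. A small sanity check at the boundary $q=4n$ (both sides equal $1$) and at $q\gg n$ (Hoeffding gives $\exp(-q/8+O(n))$ versus the required $\exp(-q/4+n)$—still comfortably fine after the factor-of-two correction is absorbed by the quadratic term) confirms the chosen Chernoff bound is tight enough.
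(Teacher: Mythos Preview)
Your proof is correct and follows essentially the same approach as the paper: both interpret the sum as $\Pr[X\le n-2]$ for $X\sim\mathrm{Bin}(q-1,1/2)$ and then apply a concentration bound, with the paper using the multiplicative Chernoff bound $\Pr[X\le(1-\delta)\mu]\le e^{-\delta^2\mu/2}$ (and the crude estimate $\delta^2\ge 1-2(1-\delta)$) where you use Hoeffding and a direct expansion. One small note: your closing sanity-check remark that Hoeffding gives $\exp(-q/8+O(n))$ for $q\gg n$ is a slip---the exponent $(q-2n+3)^2/(2(q-1))$ is actually $\approx q/2$ in that regime, which only makes the comparison with the target $q/4-n$ more comfortable; your formal algebraic verification is unaffected.
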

The proof of Lemma \ref{le.estimate_prob_q} uses the following Chernoff bound.
\begin{lemma}[Chernoff bound for binomial distribution, Theorem 4(ii) in \citep{goemans2015chernoff}]\label{le.binomial_chernoff}
Let $X$ be a random variable that follows the binomial distribution $B(m,\overline{p})$, where $m$ denotes the number of experiments and $\overline{p}$ denotes the probability of success for each experiment. Then
\begin{align*}
    \prob{X\leq (1-\delta)m\overline{p}}\leq \exp\left(-\frac{\delta^2m\overline{p}}{2}\right) \ \forall \delta\in(0,1).
\end{align*}
\end{lemma}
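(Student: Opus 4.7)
The plan is to prove this standard lower-tail Chernoff bound via the exponential moment method applied to $X = \sum_{i=1}^m X_i$, where the $X_i$ are i.i.d.\ Bernoulli($\overline{p}$) random variables. First I would fix $t > 0$ and apply Markov's inequality in the form
\begin{align*}
\Pr\{X \le (1-\delta)m\overline{p}\} &= \Pr\{e^{-tX} \ge e^{-t(1-\delta)m\overline{p}}\}\\
&\le e^{t(1-\delta)m\overline{p}} \cdot \E[e^{-tX}].
\end{align*}
Because the $X_i$ are independent, $\E[e^{-tX}] = \prod_{i=1}^m \E[e^{-tX_i}] = (1-\overline{p} + \overline{p}e^{-t})^m$, and using $1+x \le e^x$ this factors cleanly as
\begin{align*}
\E[e^{-tX}] \le \exp\bigl(m\overline{p}(e^{-t}-1)\bigr).
\end{align*}

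Next I would substitute to get the one-parameter upper bound
\begin{align*}
\Pr\{X \le (1-\delta)m\overline{p}\} \le \exp\bigl(m\overline{p}[(1-\delta)t + e^{-t} - 1]\bigr),
\end{align*}
and minimize the bracketed expression over $t > 0$. Differentiating in $t$ yields the optimal choice $t^* = -\ln(1-\delta)$ (which is positive for $\delta \in (0,1)$), producing the classical bound
\begin{align*}
\Pr\{X \le (1-\delta)m\overline{p}\} \le \left(\frac{e^{-\delta}}{(1-\delta)^{1-\delta}}\right)^{m\overline{p}} = \exp\bigl(m\overline{p}\,\phi(\delta)\bigr),
\end{align*}
where $\phi(\delta) \defeq -\delta - (1-\delta)\ln(1-\delta)$.

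The main remaining step — and the only non-routine calculation — is the scalar inequality $\phi(\delta) \le -\delta^2/2$ for all $\delta \in (0,1)$, which converts the exact Chernoff expression into the clean Gaussian-type bound stated in the lemma. I would prove this by defining $\psi(\delta) \defeq -\delta^2/2 - \phi(\delta) = -\delta^2/2 + \delta + (1-\delta)\ln(1-\delta)$, checking $\psi(0) = 0$, and showing $\psi'(\delta) = -\delta - \ln(1-\delta) - 1 + 1 = -\delta - \ln(1-\delta) \ge 0$ on $(0,1)$ (which follows from $-\ln(1-\delta) = \sum_{k\ge 1}\delta^k/k \ge \delta$). This monotonicity forces $\psi(\delta) \ge 0$, hence $\phi(\delta) \le -\delta^2/2$. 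Combining this inequality with the exact Chernoff expression yields
\begin{align*}
\Pr\{X \le (1-\delta)m\overline{p}\} \le \exp\left(-\frac{\delta^2 m\overline{p}}{2}\right),
\end{align*}
which is the claim. The hard part is purely the scalar comparison $\phi(\delta) \le -\delta^2/2$; everything else is a direct application of Markov plus independence.
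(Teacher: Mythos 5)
Your proof is correct. Note that the paper does not actually prove this lemma at all: it is imported verbatim as Theorem 4(ii) from the cited lecture notes of Goemans, so there is no in-paper argument to compare against. Your derivation is the standard one that such references use — Markov's inequality applied to $e^{-tX}$, factorization of the moment generating function by independence, the bound $1+x\leq e^x$, optimization at $t^*=-\ln(1-\delta)$, and the scalar comparison $-\delta-(1-\delta)\ln(1-\delta)\leq -\delta^2/2$ established by checking the value at $\delta=0$ and the sign of the derivative. All steps check out, so your proposal supplies a complete, self-contained proof of a fact the paper only cites.
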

\begin{myproof}{Lemma \ref{le.estimate_prob_q}}
Consider a random variable $X$ with binomial distribution $B(q-1,\ 1/2)$. We have
\begin{align*}
    \prob{X\leq n-2}=2^{-q+1}\sum_{i=0}^{n-2}\binom{q-1}{i}.
\end{align*}
Let
\begin{align*}
    \delta = 1-\frac{2(n-2)}{q-1},\quad \text{i.e., }\quad 1-\delta=\frac{2(n-2)}{q-1}.
\end{align*}
Applying Chernoff bound stated in the Lemma \ref{le.binomial_chernoff}, we have
\begin{align*}
    \prob{X\leq n-2}&=\prob{X\leq \left(1-\delta\right)\frac{q-1}{2}}\\
    &\leq e^{-\delta^2(q-1)/4}.
\end{align*}
Also, we have
\begin{align*}
    \delta^2(q-1)/4&=\frac{1}{4}\left(1-\frac{2(n-2)}{q-1}\right)^2(q-1)\\
    &\geq\frac{1}{4}\left(1-\frac{4(n-2)}{q-1}\right)(q-1)\\
    &= \frac{1}{4}(q-1-4(n-2))\\
    &\geq \frac{q}{4}-n.
\end{align*}
Thus, we have
\begin{align*}
    1-2^{-q+1}\sum_{i=0}^{n-2}\binom{q-1}{i}&=1-\prob{x\leq n-2} \\
    &\geq 1-e^{-\delta^2(q-1)/4}\\
    &\geq 1-e^{-(q/4-n)}.
\end{align*}
\end{myproof}

\subsection{Proof of Lemma \ref{le.estimate_Aq}}\label{app.proof_of_lemma_12}

The proof consists of three steps. Recall that $\mathbf{B}_{(5n)}^T(-\etr)$ ranks the $5n$-th among all $\mathbf{A}_i^T(-\etr)$'s and $\mathbf{A}_i^T\etr$'s. In step 1, we first estimate the probability distribution about $\mathbf{A}_i^T(-\etr)$. In step 2, we use the result in step 1 to estimate $\mathbf{B}_{5n}^T(-\etr)$. In step 3, we relax and simplify the result in step 2 to get the exact result of Lemma \ref{le.estimate_Aq}.
Without loss of generality\footnote{Rotating $\etr$ around the origin is equivalent to rotating all columns of $\mathbf{A}$. Since the distribution of $\mathbf{A}_i$ is uniform on the unit hyper-sphere in $\mathds{R}^n$, such rotation does not affect the objective of the problem (\ref{prob.original}).}, we let $\etr=[-\|\etr\|_2\ \ 0\ \ \cdots\ \ 0]^T$.
Thus, $\mathbf{A}_i^T(-\etr)=\|\etr\|_2\mathbf{A}_{i1}$, where $\mathbf{A}_{ij}$ denotes the $j$-th element of the $i$-th column of $\mathbf{A}$. 

\subsection*{Step 1}
Notice that $\mathbf{A}_i$ (i.e., the $i$-th column of $\mathbf{A}$) is a normalized Gaussian random vector. We use $\mathbf{A}_i'$ to denote the standard Gaussian random vector before the normalization, i.e., $\mathbf{A}_i'$ is a $n\times 1$ vector where each element follows i.i.d. standard Gaussian distribution. Thus, we have
\begin{align*}
    |\mathbf{A}_{i1}|=\frac{|\mathbf{A}_{i1}'|}{\|\mathbf{A}_i'\|_2}=\frac{|\mathbf{A}_{i1}'|}{\sqrt{(\mathbf{A}_{i1}')^2+\sum_{j=2}^n(\mathbf{A}_{ij}')^2}}.
\end{align*}
For any $k>1$, we then have
\begin{align}
    &\prob{\frac{1}{|\mathbf{A}_{i1}|}\leq k}=\prob{(\mathbf{A}_{i1}')^2\geq \frac{\sum_{j=2}^n(\mathbf{A}_{ij}')^2}{k^2-1}}.\label{eq.temp_0124_2}
\end{align}
Notice that $\sum_{j=2}^n(\mathbf{A}_{ij}')^2$ follows the chi-square distribution with $(n-1)$ degrees of freedom. When $n$ is large, $\sum_{j=2}^n(\mathbf{A}_{ij}')^2$ should be around its mean value. Further, $\mathbf{A}_{i1}'$ follows standard Gaussian distribution. Next, we use results of chi-square distribution and Gaussian distribution to estimate the distribution of $\mathbf{A}_{i1}$. The following lemma is useful for approximating a Gaussian distribution.

\begin{lemma}\label{le.estimate_phi_c}
When $t\geq 0$, we have
\begin{align*}
    \frac{\sqrt{2/\pi}\ e^{-t^2/2}}{t+\sqrt{t^2+4}}\leq \Phi^c(t)\leq \frac{\sqrt{2/\pi}\ e^{-t^2/2}}{t+\sqrt{t^2+\frac{8}{\pi}}},
\end{align*}
where $\Phi^c(\cdot)$ denotes the complementary cumulative distribution function (cdf) of standard Gaussian distribution, i.e.,
\begin{align*}
    \Phi^c(t)=\frac{1}{\sqrt{2\pi}}\int_t^\infty e^{-u^2/2}du.
\end{align*}
\end{lemma}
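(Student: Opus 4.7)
The plan is to handle both inequalities in a unified way by studying, for $c>0$, the difference
\[
D_c(t)\;\defeq\;\Phi^c(t)-L_c(t),\qquad L_c(t)\;\defeq\;\frac{\sqrt{2/\pi}\,e^{-t^2/2}}{t+\sqrt{t^2+c}},
\]
so that the desired bounds become $D_4(t)\ge 0$ and $D_{8/\pi}(t)\le 0$ on $[0,\infty)$. The first step is to record the boundary values: at $t=0$ one has $\Phi^c(0)=1/2$, $L_4(0)=\sqrt{2/\pi}/2=1/\sqrt{2\pi}<1/2$, and $L_{8/\pi}(0)=\sqrt{(2/\pi)/(8/\pi)}=1/2$, while $\Phi^c(t)$ and $L_c(t)$ both vanish as $t\to\infty$. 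Hence $D_4(0)>0$, $D_4(\infty)=0$, and $D_{8/\pi}(0)=D_{8/\pi}(\infty)=0$.

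Next I would compute $D_c'(t)$ in closed form. Writing $s=\sqrt{t^2+c}$ so that $s'=t/s$, direct differentiation of $L_c$ gives $L_c'(t)=-\sqrt{2/\pi}\,e^{-t^2/2}(ts+1)/[s(t+s)]$, while $(\Phi^c)'(t)=-e^{-t^2/2}/\sqrt{2\pi}$. Combining these, using $s^2=t^2+c$, and applying the conjugate identity $s(s-t)=cs/(s+t)$, the expression simplifies to
\[
D_c'(t)\;=\;\frac{e^{-t^2/2}}{\sqrt{2\pi}}\cdot\frac{(2-c)\,s+2t}{s\,(s+t)^2}.
\]
Since the denominator is positive for $t\ge 0$, the sign of $D_c'(t)$ coincides with the sign of $(2-c)\,s(t)+2t$. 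This single identity is the heart of the argument.

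For $c=4$ the linear combination becomes $(2-c)s+2t=-2(s-t)<0$ on $[0,\infty)$ (as $s>t$), so $D_4$ is strictly decreasing; combined with $D_4(\infty)=0$ this forces $D_4(t)\ge 0$, which is the lower bound. For $c=8/\pi$ the coefficient $2-c=2-8/\pi<0$, so $(2-c)s+2t$ is negative near $t=0$ (where $s\gg t$) and positive for large $t$ (where $s\approx t$); the ratio $t/s$ is monotonically increasing, so there is a unique sign change at some $t^\star>0$. Thus $D_{8/\pi}$ decreases on $[0,t^\star]$ and increases on $[t^\star,\infty)$, and since it vanishes at both endpoints $0$ and $\infty$, it must satisfy $D_{8/\pi}\le 0$ throughout, which is the upper bound. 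The main obstacle I anticipate is carrying out the algebraic simplification of $D_c'(t)$ cleanly; once the displayed identity is established, the sign analysis together with the matching boundary values dispatches both bounds in a few lines.
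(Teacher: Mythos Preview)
Your proof is correct but takes a different route from the paper. The paper simply invokes the classical inequality (7.1.13) from Abramowitz--Stegun,
\[
\frac{1}{x+\sqrt{x^2+2}}\le e^{x^2}\int_x^\infty e^{-y^2}\,dy\le \frac{1}{x+\sqrt{x^2+4/\pi}}\qquad(x\ge 0),
\]
and then substitutes $x=t/\sqrt{2}$ to arrive at the stated bounds; no further argument is given. You instead give a self-contained calculus proof: you form $D_c(t)=\Phi^c(t)-L_c(t)$, compute the closed-form derivative $D_c'(t)=\dfrac{e^{-t^2/2}}{\sqrt{2\pi}}\cdot\dfrac{(2-c)s+2t}{s(s+t)^2}$ (which I checked and is correct), and then combine a sign analysis of the numerator with the matching boundary values at $t=0$ and $t\to\infty$. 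The paper's approach is shorter but outsources the actual work to a handbook; your approach is what one would do to \emph{prove} the Abramowitz--Stegun inequality in the first place, and it has the advantage of making transparent why the constants $4$ and $8/\pi$ are exactly the ones for which the argument closes (they are precisely the values of $c$ for which the sign pattern of $(2-c)s+2t$, together with $D_c(0)$ and $D_c(\infty)$, forces the desired sign of $D_c$).
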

\begin{proof}
By (7.1.13) in \citep{abramowitz1972handbook}, we know that
\begin{align*}
    \frac{1}{x+\sqrt{x^2+2}}\leq e^{x^2}\int_x^\infty e^{-y^2}dy\leq \frac{1}{x+\sqrt{x^2+\frac{4}{\pi}}}\quad (x\geq 0).
\end{align*}
Let $x=t/\sqrt{2}$. We have
\begin{align*}
    &\frac{1}{\frac{t}{\sqrt{2}}+\sqrt{\frac{t^2}{2}+2}}\leq e^{t^2/2}\int_{\frac{t}{\sqrt{2}}}^\infty e^{-y^2}dy\leq \frac{1}{\frac{t}{\sqrt{2}}+\sqrt{\frac{t^2}{2}+\frac{4}{\pi}}}\\
    \implies & \frac{\sqrt{2/\pi}\ e^{-t^2/2}}{t+\sqrt{t^2+4}}\leq \frac{1}{\sqrt{\pi}}\int_{\frac{t}{\sqrt{2}}}^\infty e^{-y^2}dy\leq \frac{\sqrt{2/\pi}\ e^{-t^2/2}}{t+\sqrt{t^2+\frac{8}{\pi}}}\\
    \implies & \frac{\sqrt{2/\pi}\ e^{-t^2/2}}{t+\sqrt{t^2+4}}\leq \frac{1}{\sqrt{2\pi}}\int_{t}^\infty e^{-\frac{z^2}{2}}dz\leq \frac{\sqrt{2/\pi}\ e^{-t^2/2}}{t+\sqrt{t^2+\frac{8}{\pi}}}\text{ (let $z\defeq \sqrt{2}y$)}\\
    \implies & \frac{\sqrt{2/\pi}\ e^{-t^2/2}}{t+\sqrt{t^2+4}}\leq \Phi^c(t)\leq \frac{\sqrt{2/\pi}\ e^{-t^2/2}}{t+\sqrt{t^2+\frac{8}{\pi}}}.
\end{align*}
The result of this lemma thus follows.
\end{proof}
The following lemma gives an estimate of the probability distribution of $\mathbf{A}_{i1}$.
\begin{lemma}
\begin{align}\label{eq.prob_temp_bound}
    \prob{\frac{1}{|\mathbf{A}_{i1}|}\leq k}\geq 2\left(1-\frac{1}{\sqrt{e}}\right)\sqrt{\frac{2}{\pi}}\frac{e^{-t^2/2}}{t+\sqrt{t^2+4}},
\end{align}
where
\begin{align*}
    t=\sqrt{\frac{n+\sqrt{2}\sqrt{n-1}}{k^2-1}}.
\end{align*}
\end{lemma}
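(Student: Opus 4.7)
The plan is to start from the identity in Eq.~\eqref{eq.temp_0124_2}, namely
\[
\prob{\frac{1}{|\mathbf{A}_{i1}|}\leq k}=\prob{(\mathbf{A}_{i1}')^2\geq \frac{U}{k^2-1}},
\]
where $U\defeq\sum_{j=2}^n(\mathbf{A}_{ij}')^2$ follows a chi-square distribution with $n-1$ degrees of freedom and is independent of $\mathbf{A}_{i1}'$. The idea is to split into the two natural halves of the problem: (i) showing $U$ is not much larger than $n-1$ with non-trivial probability, and (ii) showing $|\mathbf{A}_{i1}'|$ is large enough on its own event. Independence between the two will then let us multiply the probabilities.

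For step (i), I would apply Lemma~\ref{le.chi_bound} with $x=1/2$, which yields
\[
\prob{U\leq (n-1)+2\sqrt{(n-1)/2}+1}\geq 1-e^{-1/2}=1-\tfrac{1}{\sqrt{e}}.
\]
Noting that $(n-1)+2\sqrt{(n-1)/2}+1 = n+\sqrt{2}\sqrt{n-1}$, the event $E\defeq\{U\leq n+\sqrt{2}\sqrt{n-1}\}$ has probability at least $1-1/\sqrt{e}$. On $E$, we have $U/(k^2-1)\leq t^2$ by the definition of $t$, so $(\mathbf{A}_{i1}')^2\geq t^2$ implies $(\mathbf{A}_{i1}')^2\geq U/(k^2-1)$.

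For step (ii), since $\mathbf{A}_{i1}'$ is standard Gaussian,
\[
\prob{(\mathbf{A}_{i1}')^2\geq t^2}=\prob{|\mathbf{A}_{i1}'|\geq t}=2\Phi^c(t)\geq 2\sqrt{\tfrac{2}{\pi}}\,\frac{e^{-t^2/2}}{t+\sqrt{t^2+4}}
\]
by Lemma~\ref{le.estimate_phi_c}. Using the independence of $\mathbf{A}_{i1}'$ and $U$, I would conclude
\[
\prob{(\mathbf{A}_{i1}')^2\geq \tfrac{U}{k^2-1}}\geq \prob{(\mathbf{A}_{i1}')^2\geq t^2,\,E}=\prob{(\mathbf{A}_{i1}')^2\geq t^2}\cdot\prob{E},
\]
and then combine the two bounds to obtain the claimed inequality.

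There is no serious obstacle in this argument; the only subtleties are (a) choosing the constant $x=1/2$ in Lemma~\ref{le.chi_bound} so that the $(1-1/\sqrt{e})$ prefactor appears with the right algebraic form for $t$, and (b) being careful about the direction of the inequality when translating the chi-square tail bound into the event $E$ on which $U/(k^2-1)\leq t^2$. Both are routine once the decomposition into the two independent pieces is set up, so the proof should be short.
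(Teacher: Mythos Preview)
Your proposal is correct and follows essentially the same approach as the paper's proof: both start from Eq.~\eqref{eq.temp_0124_2}, split via independence into the chi-square event $\{U\le n-1+2\sqrt{(n-1)m}+2m\}$ and the Gaussian event $\{(\mathbf{A}_{i1}')^2\ge t^2\}$, apply Lemma~\ref{le.chi_bound} with $m=1/2$ (your $x=1/2$) to get the $1-1/\sqrt{e}$ factor, and then invoke Lemma~\ref{le.estimate_phi_c} for the $\Phi^c$ lower bound. The paper keeps $m$ generic until the end while you specialize at the outset, but the content is identical.
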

\begin{proof}
For any $m>0$, we have
\begin{align}
    &\prob{\frac{1}{|\mathbf{A}_{i1}|}\leq k}=\prob{(\mathbf{A}_{i1}')^2\geq \frac{\sum_{j=2}^n(\mathbf{A}_{ij}')^2}{k^2-1}}\nonumber\\
    \geq&\prob{(\mathbf{A}_{i1}')^2\geq \frac{n-1+2\sqrt{(n-1)m}+2m}{k^2-1}}\nonumber\\
    &\cdot\prob{\sum_{j=2}^n(\mathbf{A}_{ij}')^2\leq n-1+2\sqrt{(n-1)m}+2m}\text{ (since all $\mathbf{A}_{ij}'$'s are \emph{i.i.d.})}\nonumber
\end{align}
Notice that $\sum_{j=2}^n(\mathbf{A}_{ij}')^2$ follows chi-square distribution with $(n-1)$ degrees freedom. Applying Lemma \ref{le.chi_bound}, we have
\begin{align}
     &\prob{\frac{1}{|\mathbf{A}_{i1}|}\leq k}\nonumber\\
     \geq&\prob{(\mathbf{A}_{i1}')^2\geq \frac{n-1+2\sqrt{(n-1)m}+2m}{k^2-1}}\cdot(1-e^{-m})\nonumber\\
    =&2(1-e^{-m})\Phi^c\left(\sqrt{\frac{n-1+2\sqrt{(n-1)m}+2m}{k^2-1}}\right)\label{eq.temp_bound_1}\\
    &\text{ (since the distribution of $\mathbf{A}_{i1}$ is symmetric with respect to $0$)}.\nonumber
\end{align}
We now let $m=1/2$ in Eq. (\ref{eq.temp_bound_1}). Then
\begin{align*}
    \sqrt{\frac{n-1+2\sqrt{(n-1)m}+2m}{k^2-1}}=\sqrt{\frac{n+\sqrt{2(n-1)}}{k^2-1}}=t.
\end{align*}
Applying Lemma \ref{le.estimate_phi_c}, the result of this lemma thus follows.
\end{proof}

\subsection*{Step 2}
Next, we estimate the distribution of $\mathbf{B}_{(5n)}^T(-\etr)$. We first introduce a lemma below, which will be used later.
\begin{lemma}\label{le.t_0_5}
If $t\geq 0.5$, then $t+\sqrt{t^2+4}<e^{t+0.5}$.
\end{lemma}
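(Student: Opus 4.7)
The plan is to reduce the inequality to checking the sign of a single smooth function and then combining a base case with monotonicity. Define
\[
g(t) \;\defeq\; e^{t+1/2} - t - \sqrt{t^2+4}.
\]
The claim is equivalent to $g(t) > 0$ on $[1/2,\infty)$. I would first verify the base case $t = 1/2$ directly: $g(1/2) = e - 1/2 - \sqrt{17}/2$, and since $\sqrt{17}/2 < \sqrt{17.64}/2 = 2.1$, we get $g(1/2) > e - 2.6 > 0$ (using $e > 2.7$).

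Next I would show that $g$ is increasing on $[1/2,\infty)$ by computing
\[
g'(t) \;=\; e^{t+1/2} - 1 - \frac{t}{\sqrt{t^2+4}}.
\]
The key observation is that $t/\sqrt{t^2+4} < 1$ for every $t \geq 0$, so $g'(t) > e^{t+1/2} - 2$. For $t \geq 1/2$ this gives $g'(t) > e - 2 > 0$. Combining $g(1/2) > 0$ with $g'(t) > 0$ on $[1/2,\infty)$ yields $g(t) > 0$ on the whole interval, which is the desired inequality.

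There is really no obstacle here: the derivative bound $t/\sqrt{t^2+4} < 1$ makes the monotonicity step immediate, and the base case is a one-line numerical estimate using the crude bound $\sqrt{17} < 4.2$. If one prefers to avoid the decimal estimates in the base case, an alternative is to verify $e \geq 1/2 + \sqrt{17}/2$ by noting that this is equivalent to $(2e-1)^2 \geq 17$, i.e., $4e^2 - 4e - 16 \geq 0$, which follows from $e^2 > 7.38$ and hence $4e^2 - 4e > 4\cdot 7.38 - 4\cdot 2.72 = 18.64 > 16$. The rest of the argument is then purely symbolic.
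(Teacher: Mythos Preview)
Your proof is correct and follows essentially the same approach as the paper: define $f(t)=e^{t+0.5}-(t+\sqrt{t^2+4})$, check $f(0.5)>0$ numerically, then show $f'(t)=e^{t+0.5}-1-\tfrac{t}{\sqrt{t^2+4}}\ge e-2>0$ for $t\ge 0.5$ using $t/\sqrt{t^2+4}<1$. The only difference is cosmetic---you supply a more explicit justification of the base case (including the alternative algebraic verification), whereas the paper simply records $f(0.5)\approx 0.157>0$.
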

\begin{proof}
Let $f(t)=e^{t+0.5}-(t+\sqrt{t^2+4})$. Then $f(0.5)\approx0.157>0$. We only need to prove that $df/dt\geq 0$ when $t\geq 0.5$. Indeed, when $t\geq 0.5$, we have
\begin{align*}
    &\frac{df(t)}{dt}=e^{t+0.5}-1-\frac{t}{\sqrt{t^2+4}}\geq e-1-1\geq 0 \text{ (notice that $t\leq \sqrt{t^2+4}$ for any $t$)}.
\end{align*}
\end{proof}
Now, we estimate $\mathbf{B}_{(5n)}^T(-\etr)$ by the following proposition.
\begin{proposition}\label{prop.bound_WI1}
Let
\begin{align}\label{eq.define_constant_C}
    C=\frac{1}{5}\left(1-\frac{1}{\sqrt{e}}\right)\sqrt{\frac{2}{\pi}}\approx0.063.
\end{align}
When $p-s\geq ne^{9/8}/C$, the following holds.
\begin{align}\label{eq.temp_0122_2}
    \frac{\|\etr\|_2}{\mathbf{B}_{(5n)}^T(-\etr)}\leq \sqrt{1+\frac{n+\sqrt{2}\sqrt{n-1}}{\left(\sqrt{2\ln\frac{C(p-s)}{n}}-1\right)^2}},
\end{align}
with probability at least $1-e^{-5n/4}$.
\end{proposition}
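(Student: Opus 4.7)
The plan is to combine the per-coordinate tail estimate from Step~1 with a Chernoff-type counting argument over the $(p-s)$ independent columns of $\mathbf{A}$. The key observation is that $\mathbf{B}_{(5n)}^T(-\etr) \ge \|\etr\|_2/k$ as soon as at least $5n$ of the $(p-s)$ indices $i$ satisfy $1/|\mathbf{A}_{i1}| \le k$, since $\mathbf{B}_i^T(-\etr) = \|\etr\|_2|\mathbf{A}_{i1}|$ under the chosen coordinates for $\etr$. So I would pick $k$ just large enough that each individual index has probability comfortably above $10n/(p-s)$ of being ``good,'' and then apply the binomial Chernoff bound.

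Concretely, the first step is to reverse-engineer the choice of $t$ (hence $k$). Using Lemma~\ref{le.t_0_5}, whenever $t \ge 1/2$ the lower bound \eqref{eq.prob_temp_bound} is at least
\[
2\bigl(1-e^{-1/2}\bigr)\sqrt{2/\pi}\,e^{-t^2/2}/e^{t+1/2} = 10C\,e^{-(t+1)^2/2},
\]
with $C$ as in \eqref{eq.define_constant_C}. Setting this $\ge 10n/(p-s)$ is equivalent to $(t+1)^2 \le 2\ln(C(p-s)/n)$, so I take $t = \sqrt{2\ln(C(p-s)/n)}-1$. The assumption $p-s \ge ne^{9/8}/C$ guarantees $2\ln(C(p-s)/n) \ge 9/4$, i.e.\ $t \ge 1/2$, which both activates Lemma~\ref{le.t_0_5} and keeps $t$ well-defined. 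The corresponding value of $k$ obtained from $t=\sqrt{(n+\sqrt{2}\sqrt{n-1})/(k^2-1)}$ is exactly the right-hand side of \eqref{eq.temp_0122_2}.

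Next I would count. Let $X := \sum_{i=1}^{p-s}\mathbf{1}\{1/|\mathbf{A}_{i1}| \le k\}$. Because the columns of $\mathbf{A}$ are independent, $X$ is a sum of independent Bernoulli random variables with per-coordinate success probability $\ge q_{\min} := 10n/(p-s)$ by the choice above. By a standard coupling, $X$ stochastically dominates a $B(p-s,q_{\min})$ random variable with mean $10n$. Applying Lemma~\ref{le.binomial_chernoff} with $\delta = 1/2$ yields
\[
\Pr\bigl(X < 5n\bigr) \;\le\; \exp\!\bigl(-\tfrac{(1/2)^2 \cdot 10n}{2}\bigr) \;=\; e^{-5n/4}.
\]
Finally, on the complementary event $\{X \ge 5n\}$ there are at least $5n$ indices with $|\mathbf{A}_{i1}| \ge 1/k$, so the $5n$-th largest satisfies $\mathbf{B}_{(5n)}^T(-\etr) \ge \|\etr\|_2/k$, which is exactly \eqref{eq.temp_0122_2}.

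I do not expect a serious obstacle: the per-coordinate estimate is already proved in Step~1, the choice $t=\sqrt{2\ln(C(p-s)/n)}-1$ is dictated by matching the hypothesis with Lemma~\ref{le.t_0_5}, and the remaining work is one application of the Chernoff bound. The only mildly delicate point is making sure the constant bookkeeping lines up so that $10C$ emerges with the correct factor---this is what motivates both the definition of $C$ in \eqref{eq.define_constant_C} and the specific hypothesis $p-s \ge ne^{9/8}/C$, which is precisely the threshold that makes $(t+1)^2 \ge 9/4$.
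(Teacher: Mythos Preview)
Your proposal is correct and follows essentially the same approach as the paper: choose $t=\sqrt{2\ln(C(p-s)/n)}-1$ (equivalently the $k$ on the right of \eqref{eq.temp_0122_2}), use Lemma~\ref{le.t_0_5} together with the Step~1 bound to get per-coordinate success probability at least $10n/(p-s)$, and then apply the binomial Chernoff bound (Lemma~\ref{le.binomial_chernoff}) with $\delta=1/2$ to conclude. The only cosmetic difference is that the paper bounds directly against a $B(p-s,\rho(n,k))$ variable and checks $\delta\geq 1/2$, whereas you invoke stochastic domination by $B(p-s,10n/(p-s))$; both yield the same $e^{-5n/4}$.
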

(Notice that, by applying this proposition in Corollary \ref{coro.bound_by_Aq}, Eq.~\eqref{eq.temp_0122_2} already suggests an upper bound of $\|w^I\|_1$.)
\begin{proof}
For conciseness, we use $\rho(n,k)$ to denote the right-hand-side of Eq. (\ref{eq.prob_temp_bound}), i.e.,
\begin{align*}
    \rho(n,k)=10C\frac{e^{-t^2/2}}{t+\sqrt{t^2+4}}\bigg|_{t=\sqrt{\frac{n+\sqrt{2}\sqrt{n-1}}{k^2-1}}}.
\end{align*}
Let $k$ take the value of the $\text{RHS}$ of Eq. \eqref{eq.temp_0122_2}. Then, we have
\begin{align}
    t=&\sqrt{\frac{n+\sqrt{2(n-1)}}{k^2-1}}\nonumber\\
    =&\sqrt{\frac{n+\sqrt{2(n-1)}}{1+\frac{n+\sqrt{2(n-1)}}{\left(2\sqrt{\ln \frac{C(p-s)}{n}}-1\right)^2}-1}}\nonumber\\
    =&\sqrt{2\ln \frac{C(p-s)}{n}}-1.\label{eq.temp_0129_1}
\end{align}
Because $p-s\geq ne^{9/8}/C$, we have
$t\geq 0.5$. By Lemma \ref{le.t_0_5}, we have $t+\sqrt{t^2+4}<e^{t+0.5}$. Thus, we have
\begin{align}
    \rho(n,k)&\geq 10C \exp\left(-\frac{t^2}{2}-t-0.5\right)\nonumber\\
    &=10C\exp\left(-\frac{1}{2}(t+1)^2\right)\nonumber\\
    &=10C\frac{n}{C(p-s)}\quad\text{ (using Eq. \eqref{eq.temp_0129_1})}\nonumber\\
    &=\frac{10n}{p-s}.\label{eq.temp_0123}
\end{align}
By the definition of $\mathbf{B}_{(5n)}$ and Eq. \eqref{eq.sorted_B}, we have
\begin{align}\label{eq.1217_1}
    \prob{\text{Eq. \eqref{eq.temp_0122_2}}}=\prob{\#\{i\mid i\in\{1,2,\cdots,p-s\}, \frac{1}{|\mathbf{A}_{i1}|}\leq k\}\geq 5n}.
\end{align}
Consider a random variable $x$ following the binomial distribution $\mathcal{B}(p-s, \rho(n,k))$. Since $\mathbf{A}_{i1}$'s are \emph{i.i.d.} and $\prob{\frac{1}{|\mathbf{A}_{i1}|}\leq k}\geq \rho(n,k)$, we must have
\begin{align*}
    \text{Eq. (\ref{eq.1217_1})}\geq \prob{x\geq 5n}= 1-\prob{x\leq 5n-1}\geq 1-\prob{x\leq 5n}.
\end{align*}
It only remains to show that $\prob{x\leq 5n}\leq e^{-5n/4}$.
Applying Lemma \ref{le.binomial_chernoff}, we have
\begin{align}
    \prob{x\leq 5n}=&\prob{x\leq (1-\delta)(p-s)\rho(n,k)}\nonumber\\
    \leq &e^{-\delta^2 (p-s)\rho(n,k)/2},\label{eq.temp_0129_2}
\end{align}
where
\begin{align*}
    \delta&=1-\frac{5n}{(p-s)\rho(n,k)}\quad \text{  (so $5n=(1-\delta)(p-s)\rho(n,k)$)}.
\end{align*}
Since $(p-s)\rho(n,k)\geq 10n$ by Eq. \eqref{eq.temp_0123}, we must have $\delta\geq 0.5$. Substituting into Eq. \eqref{eq.temp_0129_2}, we have $\prob{x\leq 5n}\leq \exp(-0.5^2\cdot(10n)/2)=e^{-5n/4}$.
\end{proof}

\subsection*{Step 3}
Notice that by utilizing Proposition \ref{prop.bound_WI1} and Corollary \ref{coro.bound_by_Aq}, we already have an upper bound on $\|w^I\|_1$. To get the simpler form in Lemma \ref{le.estimate_Aq}, we only need to use the following lemma to simplify the expression in Proposition \ref{prop.bound_WI1}.
\begin{lemma}
When $n\geq 100$ and $p\geq (16n)^4$, we must have
\begin{align*}
    \text{RHS of Eq. \eqref{eq.temp_0122_2}} \leq \sqrt{1+\frac{3n/2}{\ln p}}.
\end{align*}
\end{lemma}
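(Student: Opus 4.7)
The plan is to reduce the inequality to a scalar comparison by squaring both sides (lossless since both sides are $\sqrt{1+(\cdot)}$ with nonnegative arguments), then bound the numerator and denominator of the left-hand fraction separately using the hypotheses $n\geq 100$ and $p\geq (16n)^{4}$.

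Concretely, the claim becomes
\[
\bigl(\sqrt{2\ln(C(p-s)/n)}-1\bigr)^{2} \;\geq\; \frac{2\ln p\,(n+\sqrt{2(n-1)})}{3n}.
\]
For the right-hand side, note that $n^{2}-50n+50\geq 0$ for $n\geq 49$, which gives $\sqrt{2(n-1)}\leq n/5$, so $n+\sqrt{2(n-1)}\leq \tfrac{6}{5}n$ whenever $n\geq 100$; hence the right-hand side is at most $\tfrac{4}{5}\ln p$.

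For the left-hand side, I would combine two elementary consequences of $p\geq (16n)^{4}$. First, $p\geq 65536\,n^{4}$ and $s\leq n$ yield $s\leq n\leq p/65536\leq p/2$, so $p-s\geq p/2$. Second, $\ln n\leq \tfrac{1}{4}\ln p-\ln 16$ directly from the same hypothesis. Substituting,
\[
\ln(C(p-s)/n)\;\geq\; \ln p+\ln C-\ln 2-\ln n \;\geq\; \tfrac{3}{4}\ln p+\ln(8C),
\]
where $8C=\tfrac{8}{5}(1-e^{-1/2})\sqrt{2/\pi}\approx 0.50$, so $|\ln(8C)|$ is an absolute constant below $1$. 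Hence $2\ln(C(p-s)/n)\geq \tfrac{3}{2}\ln p - c_{0}$ with $c_{0}<2$, and the left-hand side of the displayed inequality is at least $(\sqrt{\tfrac{3}{2}\ln p-c_{0}}-1)^{2}$.

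It therefore suffices to check the scalar inequality $(\sqrt{\tfrac{3}{2}\ln p-c_{0}}-1)^{2}\geq \tfrac{4}{5}\ln p$. Writing $x=\sqrt{\ln p}$, expanding and rearranging this becomes a quadratic of the form $0.7\,x^{2}-\tfrac{4}{\sqrt{5}}\,x-c_{1}\geq 0$ with a small constant $c_{1}$; its positive root is about $3.5$, while the hypotheses guarantee $x\geq \sqrt{4\ln 1600}\approx 5.43$, so the inequality holds with a comfortable margin. The main obstacle is just chasing the explicit constants (especially those involving $\ln C$) so that the quadratic threshold genuinely lies below $\sqrt{4\ln(16\cdot 100)}$ at the boundary case $n=100$, $p=(16n)^{4}$; conceptually the dominant term on the left is linear in $\ln p$ (with coefficient $\tfrac{3}{2}$) whereas the right only contributes $\tfrac{4}{5}\ln p$, so no asymptotic difficulty arises.
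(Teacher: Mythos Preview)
Your argument is correct and follows the same skeleton as the paper's: square, bound the numerator using $n\geq100$, and bound $\ln(C(p-s)/n)$ from below in terms of $\ln p$ using $p\geq(16n)^4$. The paper's execution is slightly slicker on the denominator side---rather than your final quadratic check, it observes that $16>1/C$ forces $p-s\geq(n/C)^4$, whence $2\ln\!\bigl(C(p-s)/n\bigr)\geq\tfrac32\ln(p-s)\geq(\sqrt{\ln p}+1)^2$ directly (paired with the looser numerator bound $\sqrt{2(n-1)}\leq n/2$)---but the substance is the same.
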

\begin{proof}
Because $n>100$ and $p\geq (16n)^4$, we have $p\geq 10^{12}$.
Thus, we ahave
\begin{align*}
    &\ln p\geq 25\text{ (since $\ln 10\approx2.3>25/12$)}\\
    \implies & \sqrt{\ln p}-2\geq 3\\
    \implies & \sqrt{\ln p}-2\geq \sqrt{3\ln 2+6}\text{ (since $\ln 2<1$)}\\
    \implies & \frac{1}{2}\left(\sqrt{\ln p}-2\right)^2\geq \frac{3}{2}\ln 2+3\\
    \implies & \frac{3}{2}(\ln p- \ln 2)\geq \ln p+2\sqrt{\ln p}+1\text{ (by expanding the square and rearranging terms)}\\
    \implies & \sqrt{\ln p}+1\leq \sqrt{\frac{3}{2}}\sqrt{\ln p-\ln 2}\quad\text{ (by taking square root on both sides)}.
\end{align*}
Because $s\leq n$ and $p\geq (16n)^4\geq 2n$, we have $\ln (p-s)\geq \ln (p-n)\geq \ln (p/2)$. Thus, we have
\begin{align}\label{eq.temp0110}
    \sqrt{\ln p}+1\leq \sqrt{\frac{3}{2}}\sqrt{\ln (p-s)}.
\end{align}
We still use $C$ defined in Eq. (\ref{eq.define_constant_C}). We have
\begin{align}
    &p\geq (16n)^4 \implies p\geq \left(\frac{n}{C}\right)^4+n+\left((16n)^4-\left(\frac{n}{C}\right)^4-n\right).\label{eq.temp_0129_3}
\end{align}
Note that
\begin{align*}
    (16n)^4-\left(\frac{n}{C}\right)^4-n=&n\left(n^3\left(16^4-\left(\frac{1}{C}\right)^4\right)-1\right)\\
    \geq & n\left(n^3-1\right)\text{ (because $16^4-\left(\frac{1}{C}\right)^4\approx16^4-\left(\frac{1}{0.063}\right)^4>1$)}\\
    \geq &0\text{ (because $n\geq 1$)}.
\end{align*}
Applying it in Eq. \eqref{eq.temp_0129_3}, we have
\begin{align}
    & p-n\geq \left(\frac{n}{C}\right)^4\nonumber\\
    \implies & p-s\geq \left(\frac{n}{C}\right)^4\text{ (because $s\leq n$)}\nonumber\\ \implies &(p-s)^{-3}\left(\frac{C}{n}\right)^4(p-s)^4\geq 1\nonumber\\
    \implies &-3\ln (p-s) + 4\ln \frac{C(p-s)}{n}\geq 0\nonumber\\
    \implies &2\ln \frac{C(p-s)}{n}\geq \frac{3}{2}\ln (p-s)\nonumber\\
    \implies&2\ln \frac{C(p-s)}{n}\geq (\sqrt{\ln p}+1)^2\text{ (by Eq. (\ref{eq.temp0110}))}\nonumber\\
    \implies &\left(\sqrt{2\ln\frac{C(p-s)}{n}}-1\right)^2\geq \ln p.\label{eq.temp_0129_4}
\end{align}
When $n\geq 100$, we always have
\begin{align}
    &n-1\leq \frac{n^2}{8}\nonumber\\
    \implies &\sqrt{2}\sqrt{n-1}\leq \frac{n}{2}.\label{eq.temp_0129_5}
\end{align}
Substituting Eq. \eqref{eq.temp_0129_4} and Eq. \eqref{eq.temp_0129_5} into the RHS of Eq. \eqref{eq.temp_0122_2}, the conclusion of this lemma thus follows.
\end{proof}

\section{Proof of Proposition \ref{prop.M} (upper bound of \texorpdfstring{$M$}{M})}\label{app.M}
For conciseness, we define $G_{ij}\defeq \mathbf{X}_i^T\mathbf{X}_j$. According to the normalization in Eq. \eqref{def.norm_X}, we have
\begin{align*}
    G_{ij}\defeq\frac{\mathbf{H}_i^T\mathbf{H}_j}{\|\mathbf{H}_i\|_2\|\mathbf{H}_j\|_2}.
\end{align*}
Our proof consists of four steps. In step 1, we relate the tail probability of any $|G_{ij}|$ (where $i\neq j$) to the tail probability of $\mathbf{H}_i^T\mathbf{H}_j$. In step 2, we estimate the tail probability of $\mathbf{H}_i^T\mathbf{H}_j$. In step 3, we use union bound to estimate the cdf of $M$, so that we can get an upper bound on $M$ with high probability. In step 4, we simplify the result derived in step 3.

\subsection*{Step 1: Relating the tail probability of $|G_{ij}|$ to that of $\mathbf{H}_i^T\mathbf{H}_j$.}
For any $i\neq j$, we have
\begin{align}
    &\prob{|G_{ij}|> a}\nonumber\\
    &=\prob{|G_{ij}|>a,\|\mathbf{H}_i\|_2\geq \sqrt{\frac{n}{2}},\|\mathbf{H}_j\|_2\geq \sqrt{\frac{n}{2}}}\nonumber\\
    &\quad \ +\prob{|G_{ij}|>a,\left(\|\mathbf{H}_i\|_2< \sqrt{\frac{n}{2}}\text{ or }\|\mathbf{H}_j\|_2<\sqrt{\frac{n}{2}}\right)}.\label{eq.temp_0129_6}
\end{align}
The first term can be bounded by
\begin{align*}
    \prob{|G_{ij}|>a,\|\mathbf{H}_i\|_2\geq \sqrt{\frac{n}{2}},\|\mathbf{H}_j\|_2\geq \sqrt{\frac{n}{2}}}\leq \prob{|\mathbf{H}_i^T\mathbf{H}_j|>\frac{na}{2}},
\end{align*}
because
\begin{align*}
    |G_{ij}|>a,\|\mathbf{H}_i\|_2\geq \sqrt{\frac{n}{2}},\|\mathbf{H}_j\|_2\geq \sqrt{\frac{n}{2}}\implies |\mathbf{H}_i^T\mathbf{H}_j|>\frac{na}{2}.
\end{align*}
Thus, we have, from Eq. \eqref{eq.temp_0129_6},
\begin{align}
    \prob{|G_{ij}|> a}&\leq \prob{|\mathbf{H}_i^T\mathbf{H}_j|>\frac{na}{2}}+\prob{\|\mathbf{H}_i\|_2<\sqrt{\frac{n}{2}}}\nonumber\\
    &\quad+\prob{\|\mathbf{H}_j\|_2<\sqrt{\frac{n}{2}}}\label{eq.temp_0129_7}\\
    &=2\prob{\mathbf{H}_i^T\mathbf{H}_j>\frac{na}{2}}+2\prob{\|\mathbf{H}_i\|_2<\sqrt{\frac{n}{2}}},\nonumber
\end{align}
where the last equality is because the distribution of $\mathbf{H}_i^T\mathbf{H}_j$ is symmetric around $0$, and $\mathbf{H}_j$ has the same distribution as $\mathbf{H}_i$.
Notice that $\|\mathbf{H}_i\|_2^2$ follows chi-square distribution with $n$ degrees of freedom. By Lemma \ref{le.chi_bound} (using $x=n/16$), we have
\begin{align*}
    \prob{\|\mathbf{H}_i\|_2<\sqrt{\frac{n}{2}}}=\prob{\|\mathbf{H}_i\|_2^2<\frac{n}{2}}\leq e^{-n/16}.
\end{align*}
Thus, we have
\begin{align}\label{eq.mid_result_M}
    \prob{|G_{ij}|> a}\leq 2\prob{\mathbf{H}_i^T\mathbf{H}_j>\frac{na}{2}}+2e^{-n/16}.
\end{align}

\subsection*{Step 2: Estimating the tail probability of $\mathbf{H}_i^T\mathbf{H}_j$.}
Notice that $\mathbf{H}_i^T\mathbf{H}_j$ is the sum of product of two Gaussian random variables. We will use the Chernoff bound to estimate its tail probability. Towards this end, we first calculate the moment generating function (M.G.F) of the product of two Gaussian random variables. 
\begin{lemma}\label{le.MGF}
If $X$ and $Y$ are two independent standard Gaussian random variables, then the M.G.F of $XY$ is
\begin{align*}
    \mathds{E}[e^{tXY}]=\frac{1}{\sqrt{1-t^2}},
\end{align*}
for any $t^2<1$.
\end{lemma}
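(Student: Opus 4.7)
The plan is to compute the expectation by conditioning on $Y$ and then integrating out. First I would use the tower property of conditional expectation:
\begin{align*}
\mathds{E}[e^{tXY}] = \mathds{E}\bigl[\mathds{E}[e^{tXY}\mid Y]\bigr].
\end{align*}
Given $Y=y$, the random variable $tXY$ is distributed as $(ty)X$ where $X$ is standard Gaussian and independent of $Y$. Using the well-known MGF of a standard Gaussian evaluated at $ty$, we get $\mathds{E}[e^{tXY}\mid Y=y]=e^{t^2y^2/2}$.

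Next I would integrate over $Y$:
\begin{align*}
\mathds{E}[e^{tXY}] = \int_{-\infty}^{\infty}\frac{1}{\sqrt{2\pi}}\,e^{t^2y^2/2}\,e^{-y^2/2}\,dy = \int_{-\infty}^{\infty}\frac{1}{\sqrt{2\pi}}\,e^{-(1-t^2)y^2/2}\,dy.
\end{align*}
Here is the only place the condition $t^2<1$ enters: it ensures that the quadratic in the exponent is negative definite so that the integral converges. Under this condition, a change of variables $u=\sqrt{1-t^2}\,y$ reduces the integral to the standard Gaussian normalization $\int_{-\infty}^\infty \frac{1}{\sqrt{2\pi}}e^{-u^2/2}du=1$, yielding $1/\sqrt{1-t^2}$.

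There is no real obstacle in this proof; the only subtlety is simply identifying the convergence condition $t^2<1$, which appears naturally from the requirement that the exponent $-(1-t^2)y^2/2$ be negative. After that, everything reduces to the standard Gaussian integral. This lemma will later be used in Step 2 of the proof of Proposition~\ref{prop.M}, where a Chernoff-style bound on $\mathbf{H}_i^T\mathbf{H}_j$ requires the MGF of a sum of $n$ i.i.d.\ copies of $XY$, which by independence equals $(1-t^2)^{-n/2}$.
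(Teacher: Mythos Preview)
Your proof is correct and essentially the same as the paper's. The paper computes the double integral directly by completing the square in one variable and then integrating out the other, while you phrase the same computation probabilistically via the tower property and the known Gaussian MGF; the two are equivalent line-by-line.
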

\begin{proof}
\begin{align*}
    &\mathds{E}[e^{tXY}]\\
    =&\frac{1}{2\pi}\int_{-\infty}^\infty\int_{-\infty}^\infty e^{txy}e^{-\frac{x^2+y^2}{2}}dxdy\\
    =&\frac{1}{\sqrt{2\pi}}\int_{-\infty}^{\infty}e^{-\frac{x^2}{2}(1-t^2)}\left(\frac{1}{\sqrt{2\pi}}\int_{-\infty}^{\infty}e^{-\frac{(y-tx)^2}{2}}dy\right)dx\\
    =&\frac{1}{\sqrt{2\pi}}\int_{-\infty}^{\infty}e^{-\frac{x^2}{2}(1-t^2)}dx\\
    =&\frac{1}{\sqrt{1-t^2}}.
\end{align*}
\end{proof}
We introduce the following lemma that helps in our calculation later.
\begin{lemma}\label{le.temp_0125_1}
For any $x>0$,
\begin{align*}
    \argmax_{t\in(0, 1)}\ \left(tx+\frac{n}{2}\ln(1-t^2)\right)=\frac{-n+\sqrt{n^2+4x^2}}{2x}.
\end{align*}
\end{lemma}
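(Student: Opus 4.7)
}
The plan is to treat this as a straightforward one-variable smooth concave optimization on the open interval $(0,1)$. Define
\begin{equation*}
f(t) \defeq tx + \frac{n}{2}\ln(1-t^2),\qquad t\in(-1,1).
\end{equation*}
First I would check concavity so that the first-order condition suffices. A direct computation gives
\begin{equation*}
f''(t) = -\,n\,\frac{1+t^2}{(1-t^2)^2} < 0 \quad\text{for all }t\in(-1,1),
\end{equation*}
so $f$ is strictly concave on $(-1,1)$, and hence admits a unique maximizer on $(0,1)$ whenever an interior critical point exists.

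Next I would compute the first-order condition. Setting $f'(t)=x-\dfrac{nt}{1-t^2}=0$ and clearing denominators (valid for $t\in(0,1)$) yields the quadratic
\begin{equation*}
x\,t^2 + n\,t - x = 0.
\end{equation*}
The quadratic formula gives the two roots $t_{\pm} = \dfrac{-n\pm\sqrt{n^2+4x^2}}{2x}$. Since $x>0$ and $\sqrt{n^2+4x^2}>n$, the root $t_-$ is negative while
\begin{equation*}
t_+ = \frac{-n+\sqrt{n^2+4x^2}}{2x} > 0.
\end{equation*}

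Finally I would verify $t_+\in(0,1)$ and that this interior critical point is the argmax on $(0,1)$. The bound $t_+<1$ is equivalent to $\sqrt{n^2+4x^2}<n+2x$, i.e.\ $n^2+4x^2<n^2+4nx+4x^2$, which holds since $nx>0$. By strict concavity, $t_+$ is the unique global maximizer of $f$ on $(-1,1)$, and since $t_+\in(0,1)$ it is also the unique maximizer on $(0,1)$. (As a sanity check on the boundary behavior: $f(0^+)=0$ and $f(t)\to-\infty$ as $t\to 1^-$, so the supremum is attained in the interior.) There is no real obstacle here; the only point requiring a moment of care is selecting the correct root of the quadratic, which is forced by the sign of $x$ and the requirement $t\in(0,1)$.
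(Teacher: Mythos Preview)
Your proposal is correct and takes essentially the same approach as the paper: both establish concavity (you via the second derivative, the paper by noting $f'$ is monotone decreasing), then solve the first-order condition and select the root lying in $(0,1)$. Your version is slightly more explicit in verifying $t_+<1$ and checking boundary behavior, but the argument is the same.
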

\begin{proof}
Let
\begin{align*}
    f(t) = tx+\frac{n}{2}\ln(1-t^2),\quad t\in(0, 1).
\end{align*}
Then, we have
\begin{align*}
    \frac{df(t)}{dt}=x-\frac{nt}{1-t^2}.
\end{align*}
Letting $df(t)/dt=0$, we have exactly one solution in $(0,1)$ given by
\begin{align*}
    t=\frac{-n+\sqrt{n^2+4x^2}}{2x}.
\end{align*}
Notice that $df(t)/dt$ is monotone decreasing with respect to $t$ and thus $f(t)$ is concave on $(0,1)$. The result of this lemma thus follows.
\end{proof}
We then use the Chernoff bound to estimate $\mathbf{H}_i^T\mathbf{H}_j$ in the following lemma.
\begin{lemma}\label{le.ub_HH}
\begin{align*}
    &\prob{\mathbf{H}_i^T\mathbf{H}_j>\frac{na}{2}}\\
    &\leq \exp\left({-\frac{n}{2}\left(at+\ln\frac{2t}{a}\right)}\right),
\end{align*}
where
\begin{align*}
    t=\frac{-1+\sqrt{1+a^2}}{a}.
\end{align*}
\end{lemma}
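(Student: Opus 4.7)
The plan is a standard Chernoff-bound argument, combining the moment generating function from Lemma~\ref{le.MGF} with the optimization in Lemma~\ref{le.temp_0125_1}, so the work reduces to bookkeeping.

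First I would write $\mathbf{H}_i^T\mathbf{H}_j = \sum_{k=1}^n H_{ki}H_{kj}$, where the $2n$ entries $\{H_{ki},H_{kj}\}_{k=1}^n$ are i.i.d.\ standard Gaussians. Then, for any $t\in(0,1)$, apply the Chernoff bound
$$\prob{\mathbf{H}_i^T\mathbf{H}_j > \tfrac{na}{2}} \leq e^{-tna/2}\,\mathds{E}\!\left[e^{t\mathbf{H}_i^T\mathbf{H}_j}\right].$$
By independence across $k$ and Lemma~\ref{le.MGF}, the MGF factorizes as $\mathds{E}[e^{t\mathbf{H}_i^T\mathbf{H}_j}] = (1-t^2)^{-n/2}$, which is the only spot where the constraint $t\in(0,1)$ is needed. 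Combining these gives
$$\prob{\mathbf{H}_i^T\mathbf{H}_j > \tfrac{na}{2}} \leq \exp\!\left(-\tfrac{n}{2}\bigl(at + \ln(1-t^2)\bigr)\right).$$

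Next I would optimize the exponent by choosing $t$ to maximize $at + \ln(1-t^2)$ on $(0,1)$. Applying Lemma~\ref{le.temp_0125_1} with $x=a$ and the $n$ in that lemma specialized to $2$ (so that $\tfrac{n}{2}\ln(1-t^2)$ becomes $\ln(1-t^2)$) yields the maximizer
$$t^\star = \frac{-1 + \sqrt{1+a^2}}{a},$$
which is exactly the $t$ appearing in the statement. Finally I would simplify $\ln(1-(t^\star)^2)$: the identity $at^\star + 1 = \sqrt{1+a^2}$ gives $(at^\star)^2 + 2at^\star = a^2$, i.e., $1 - (t^\star)^2 = 2t^\star/a$, hence $\ln(1-(t^\star)^2) = \ln(2t^\star/a)$. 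Substituting back produces the claimed bound
$$\prob{\mathbf{H}_i^T\mathbf{H}_j > \tfrac{na}{2}} \leq \exp\!\left(-\tfrac{n}{2}\bigl(at^\star + \ln\tfrac{2t^\star}{a}\bigr)\right).$$

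There is no real obstacle here; the only mild care is to verify $t^\star\in(0,1)$ (which is immediate since $\sqrt{1+a^2} < 1 + a$ for $a>0$) so that the MGF formula of Lemma~\ref{le.MGF} applies, and to keep the factor-of-$2$ conventions in Lemma~\ref{le.temp_0125_1} straight when invoking it. Everything else is algebraic manipulation.
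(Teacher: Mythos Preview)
Your proposal is correct and follows essentially the same Chernoff-bound argument as the paper. The only cosmetic difference is that the paper applies Lemma~\ref{le.temp_0125_1} directly with $x=na/2$ and the ambient $n$, whereas you first factor out $n/2$ from the exponent and then invoke the lemma with $x=a$ and its parameter $n$ set to $2$; both routes yield the same optimizer $t^\star=\frac{-1+\sqrt{1+a^2}}{a}$ and the same simplification $1-(t^\star)^2=2t^\star/a$.
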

\begin{proof}
Notice that
\begin{align*}
    \mathbf{H}_i^T\mathbf{H}_j=\sum_{k=1}^n\mathbf{H}_{ik}\mathbf{H}_{jk}=\sum_{k=1}^n Z_k,
\end{align*}
where $Z_k\defeq \mathbf{H}_{ik}\mathbf{H}_{jk}$. 
Using the Chernoff bound, we have
\begin{align*}
    \prob{\mathbf{H}_i^T\mathbf{H}_j>x}\leq& \min_{t>0}e^{-tx}\prod_{k=1}^n\mathds{E}[e^{tZ_k}]
\end{align*}
Since each $Z_k$ is the product of two independent standard Gaussian variable, using Lemma \ref{le.temp_0125_1}, we have, for any $x>0$,
\begin{align*}
    \prob{\mathbf{H}_i^T\mathbf{H}_j>x}\leq&\min_{t>0}e^{-tx}(1-t^2)^{-\frac{n}{2}}\\
    =&\min_{t\in (0,1)}e^{-tx}(1-t^2)^{-\frac{n}{2}}\\
    =&\min_{t\in (0,1)}e^{-tx -\frac{n}{2}\ln(1-t^2)}\\
    =&\exp\left({-tx -\frac{n}{2}\ln(1-t^2)}\right)\bigg|_{t=\frac{-n+\sqrt{n^2+4x^2}}{2x}}\text{ (by Lemma \ref{le.temp_0125_1})}\\
    =&\exp\left({-tx -\frac{n}{2}\ln(nt/x)}\right)\bigg|_{t=\frac{-n+\sqrt{n^2+4x^2}}{2x}},
\end{align*}
where the last equality is because $t=({-n+\sqrt{n^2+4x^2}})/{2x}$ is one solution of the quadratic equation in $t$ that $xt^2+nt-x=0$ (which implies $1-t^2=nt/x$).

Letting $x=\frac{na}{2}$, we get $t=(-1+\sqrt{1+a^2})/a$, and 
\begin{align*}
    \exp\left({-tx -\frac{n}{2}\ln(nt/x)}\right)=\exp\left(-\frac{nat}{2}-\frac{n}{2}\ln \frac{2t}{a} \right)=\exp\left(-\frac{n}{2}\left(at+\ln \frac{2t}{a}\right)\right).
\end{align*}
The result of this lemma thus follows.
\end{proof}

\subsection*{Step 3: Estimating the distribution of $M$.}
Since $M$ is defined as the maximum of all $|G_{ij}|$ for $i\neq j$, we use the union bound to estimate the distribution of $M$ in the following proposition.
\begin{proposition}\label{prop.draw_M_upper}
\begin{align*}
    &\prob{M\leq 2\sqrt{6}\sqrt{\frac{\ln p}{n}\left(\frac{6\ln p}{n}+1\right)}}
    \geq 1-2e^{-\ln p}-2e^{-n/16+2\ln p}.
\end{align*}
\end{proposition}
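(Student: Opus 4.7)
The plan is to combine the per-pair tail bound Eq.~\eqref{eq.mid_result_M} from Step~1 and Lemma~\ref{le.ub_HH} from Step~2 with a union bound over the $p(p-1)$ ordered pairs $(i,j)$ with $i\neq j$. Since $M=\max_{i\neq j}|G_{ij}|$, a union bound gives
\begin{align*}
    \prob{M > a} \leq p(p-1)\cdot\prob{|G_{ij}|>a} \leq 2p^2\,\prob{\mathbf{H}_i^T\mathbf{H}_j > \tfrac{na}{2}} + 2p^2 e^{-n/16}.
\end{align*}
To match the two error terms $2e^{-\ln p}$ and $2e^{-n/16+2\ln p}$ in the statement, I need to choose $a$ so that the first probability is at most $p^{-3}$; the second term is already $2p^2 e^{-n/16}=2e^{-n/16+2\ln p}$ automatically.

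For the tail of $\mathbf{H}_i^T\mathbf{H}_j$, I would plug the specific value $a=2\sqrt{6}\sqrt{(\ln p/n)(6\ln p/n+1)}$ into Lemma~\ref{le.ub_HH}. The key algebraic observation driving the proof is that
\begin{align*}
    1+a^2 \;=\; 1 + \frac{24\ln p}{n} + \frac{144(\ln p)^2}{n^2} \;=\; \left(1+\frac{12\ln p}{n}\right)^{\!2},
\end{align*}
so $\sqrt{1+a^2}-1 = 12\ln p/n$, and therefore $t=(-1+\sqrt{1+a^2})/a = (12\ln p/n)/a$. This collapses the two quantities appearing in Lemma~\ref{le.ub_HH} into very clean forms: $at=12\ln p/n$ and $2t/a = (24\ln p/n)/a^2 = 1/(1+6\ln p/n)$.

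Substituting these into Lemma~\ref{le.ub_HH} yields
\begin{align*}
    \prob{\mathbf{H}_i^T\mathbf{H}_j > \tfrac{na}{2}} \;\leq\; \exp\!\left(-\tfrac{n}{2}\Big(\tfrac{12\ln p}{n} - \ln\big(1+\tfrac{6\ln p}{n}\big)\Big)\right) \;=\; \exp\!\left(-6\ln p + \tfrac{n}{2}\ln\big(1+\tfrac{6\ln p}{n}\big)\right).
\end{align*}
Applying the elementary inequality $\ln(1+x)\leq x$ with $x=6\ln p/n$ bounds the second term by $3\ln p$, giving a bound of $e^{-3\ln p}=p^{-3}$. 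Plugging back into the union bound produces $\prob{M>a}\leq 2p^{-1}+2e^{-n/16+2\ln p}=2e^{-\ln p}+2e^{-n/16+2\ln p}$, which is the desired conclusion. The only subtle step is recognizing the perfect-square identity that makes $\sqrt{1+a^2}$ rational in $\ln p/n$; this is the design choice behind the specific form of $a$ in the proposition, and once spotted the rest is routine bookkeeping.
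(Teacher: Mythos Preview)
Your proposal is correct and follows essentially the same approach as the paper: union bound over $p(p-1)\leq p^2$ pairs, the perfect-square identity $1+a^2=(1+12\ln p/n)^2$ yielding $at=12\ln p/n$ and $2t/a=1/(1+6\ln p/n)$, and then an elementary logarithm inequality to conclude. The only cosmetic difference is that the paper phrases the final step via the equivalent inequality $\ln x\geq 1-1/x$ (its Lemma~\ref{le.exp_bound}) rather than $\ln(1+x)\leq x$, but these are the same bound in different dress.
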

To prove Proposition \ref{prop.draw_M_upper}, we introduce a technique lemma first.
\begin{lemma}\label{le.exp_bound}
For any $x>0$, we must have
\begin{align*}
    \ln x\geq 1-\frac{1}{x}.
\end{align*}
\end{lemma}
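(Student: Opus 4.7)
The plan is to prove this standard inequality either by direct calculus or by reduction to a well-known logarithmic inequality; both routes are short.

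The cleanest approach I would take is reduction. I would start from the well-known inequality $\ln y \leq y - 1$ for all $y > 0$ (which itself follows from $e^{t} \geq 1 + t$ for all real $t$, by setting $t = y - 1$). Then I would substitute $y = 1/x$, which is positive whenever $x > 0$. This yields $\ln(1/x) \leq 1/x - 1$, i.e., $-\ln x \leq 1/x - 1$, and rearranging gives exactly $\ln x \geq 1 - 1/x$.

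As a backup (or alternative write-up), I would instead define $f(x) = \ln x - 1 + 1/x$ on $(0,\infty)$ and compute $f'(x) = 1/x - 1/x^2 = (x-1)/x^2$. Since $f'(x) < 0$ on $(0,1)$ and $f'(x) > 0$ on $(1,\infty)$, the function $f$ attains its global minimum on $(0,\infty)$ at $x = 1$. At that point $f(1) = \ln 1 - 1 + 1 = 0$, so $f(x) \geq 0$ everywhere, which is the claim.

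There is no real obstacle here; the only thing to be mindful of is writing the proof so that it is self-contained (either by citing the standard $\ln y \leq y - 1$ bound or by giving the one-line calculus argument). Given the style of the surrounding appendix, I would favor the calculus version so that no external reference is needed, since it fits in just a few lines and uses only elementary differentiation on $(0, \infty)$.
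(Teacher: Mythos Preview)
Your proposal is correct, and the calculus version you favor is essentially identical to the paper's own proof: the paper defines $f(x)=\ln x-(1-1/x)$, computes $f'(x)=(x-1)/x^2$, and concludes that the minimum value $f(1)=0$ gives the inequality.
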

\begin{proof}
We define a function
\begin{align*}
    f(x)\defeq \ln x - (1-\frac{1}{x}),\quad x>0.
\end{align*}
It suffices to show that $\min f(x)=0$. We have
\begin{align*}
    \frac{df(x)}{dx}=\frac{1}{x}-\frac{1}{x^2}=\frac{x-1}{x^2}.
\end{align*}
Thus, $f(x)$ is monotone decreasing in $(0, 1)$ and monotone increasing in $(1, \infty)$. Thus, $\min f(x)=f(1)=0$. The conclusion of this lemma thus follows.
\end{proof}
We are now ready to prove Proposition \ref{prop.draw_M_upper}.

\noindent\begin{myproof}{Proposition \ref{prop.draw_M_upper}}
Applying Lemma \ref{le.ub_HH} to Eq. (\ref{eq.mid_result_M}), we have
\begin{align}\label{eq.temp_0125_5}
    &\prob{|G_{ij}|>a} 
    \leq 2\exp\left({-\frac{n}{2}\left(at+\ln \frac{2t}{a}\right)}\right)+2e^{-n/16},
\end{align}
where
\begin{align}\label{eq.temp_0125_6}
    t=\frac{-1+\sqrt{1+a^2}}{a}.
\end{align}
Since $M=\max_{i\neq j}|G_{ij}|$, we have
\begin{align}
    &\prob{M\leq a}\nonumber\\
    =&1-\Pr\left(\bigcup_{i\neq j}\left\{|G_{ij}|> a\right\}\right)\nonumber\\
    \geq& 1-\sum_{i\neq j}\prob{|G_{ij}|>a}\text{ (by the union bound)}\nonumber\\
    =&1-p(p-1)\prob{|G_{ij}|>a}\text{ (since all $G_{ij}$ has the same distribution)}\nonumber\\
    \geq &1-e^{2\ln p}\prob{|G_{ij}|>a}\nonumber\\
    \geq &1-2e^{-n/16+2\ln p}\nonumber\\
    &-2\exp\left({-\frac{n}{2}\left(at+\ln \frac{2t}{a}-\frac{4\ln p}{n}\right)}\right)\text{ (by Eq. \eqref{eq.temp_0125_5})}.\label{eq.temp_0125_10}
\end{align}
Let
\begin{align}\label{eq.temp_0125_7}
    a =2\sqrt{6}\sqrt{\frac{\ln p}{n}\left(\frac{6\ln p}{n}+1\right)}.
\end{align}
Substituting Eq. \eqref{eq.temp_0125_7} into Eq. \eqref{eq.temp_0125_6}, we have
\begin{align}
    at&=-1+\sqrt{1+a^2}\nonumber\\
    &=-1+\sqrt{1+\frac{24\ln p}{n}+\left(\frac{12\ln p}{n}\right)^2}\nonumber\\
    &=-1+\sqrt{\left(\frac{12\ln p}{n}+1\right)^2}\nonumber\\
    &=\frac{12\ln p}{n}.\label{eq.temp_0125_8}
\end{align}
Thus, we have
\begin{align}
    \ln\frac{2t}{a}=\ln\frac{2at}{a^2}=\ln \frac{2\cdot\frac{12\ln p}{n}}{24\cdot\frac{\ln p}{n}\left(\frac{6\ln p}{n}+1\right)}&=\ln\frac{1}{\frac{6\ln p}{n}+1}
    \nonumber\\
    &\geq 1-\left(\frac{6\ln p}{n}+1\right)\text{ (by Lemma \ref{le.exp_bound})}\nonumber\\
    &=-\frac{6\ln p}{n}.\label{eq.temp_0125_9}
\end{align}
By Eq. \eqref{eq.temp_0125_8} and Eq. \eqref{eq.temp_0125_9}, we have
\begin{align*}
    -\frac{n}{2}\left(at+\ln\frac{2t}{a}-\frac{4\ln p}{n}\right)\leq -\frac{n}{2}\left(\frac{12\ln p}{n}-\frac{6\ln p}{n}-\frac{4\ln p}{n}\right)= -\ln p.
\end{align*}
Substituting into Eq. \eqref{eq.temp_0125_10}, the result of this proposition follows.
\end{myproof}

\subsection*{Step 4: Simplifying the expression in Proposition \ref{prop.draw_M_upper}.}
By the assumption of Proposition \ref{prop.M} that $p\leq \exp(n/36)$, we have
\begin{align*}
    \frac{6\ln p}{n}+1\leq \frac{7}{6}.
\end{align*}
Thus, we have
\begin{align}\label{eq.temp_0126_1}
    2\sqrt{6}\sqrt{\frac{\ln p}{n}\left(\frac{6\ln p}{n}+1\right)}\leq 2\sqrt{7}\sqrt{\frac{\ln p}{n}}.
\end{align}
We also have
\begin{align}\label{eq.temp_0126_2}
    \frac{-n}{16}+2\ln p\leq \frac{-n}{16}+2\cdot\frac{n}{36}=-\frac{n}{144}.
\end{align}
Applying Eq. \eqref{eq.temp_0126_1} and Eq. \eqref{eq.temp_0126_2} to Proposition \ref{prop.draw_M_upper}, we then get Proposition \ref{prop.M}.

\section{Lower bounds}
\label{app:lower_bounds}

In this section, we first establish a lower bound on $\|w^I\|_1$. This lower
bound now only shows that our upper bound in Prop.~\ref{prop.new_WI} is
tight (up to a constant factor), but can also be used to derive a lower
bound on $\|\wBP\|_1$. We will then use this lower bound on $\|\wBP\|_1$
to prove Prop.~\ref{prop:lower:BP} (i.e., the lower bound on
$\|\wBP\|_2$). As we
discussed in the main body of the paper, although our bounds on
$\|\wBP\|_2$ are not tight, the bounds on $\|\wBP\|_1$ are in fact tight
(up to a constant factor), which will be shown below.


\subsection{Lower bound on \texorpdfstring{$\|w^I\|_1$}{||wI||1}}

A trivial lower bound on $\|w^I\|_1$ is  $\|w^I\|_1\geq \|\etr\|_2$. To
see this, letting $w^I_{(i)}$ denote the $i$-th element of $w^I$, we have
\begin{align*}
    &\|\etr\|_2=\|\Xtr w^I\|_2=\left\|\sum_{i=1}^p w^I_{(i)}\X_i\right\|_2\\
    &\leq \sum_{i=1}^p |w^I_{(i)}|\cdot\|\X_i\|_2=\|w^I\|_1 \text{ (notice $\|\X_i\|_2=1$)}.
\end{align*}
Even by this trivial lower bound, we immediately know that our upper
bound on $\|w^I\|_1$ in Proposition \ref{prop.new_WI} is accurate when
$p\rightarrow \infty$. Still, we can do better than this trivial lower
bound, as shown in Proposition \ref{prop.lower_wI} below.

Towards this end, following the construction of Problem \eqref{prob.temp1}, it is not hard to show that $\mathbf{B}_{(1)}$, i.e., the vector that has the largest inner-product with $(-\etr)$, defines a lower bound for $\|w^I\|_1$.
\begin{lemma}\label{le.true_lower_bound_WI}
\begin{align*}
    \|w^I\|_1\geq \frac{\|\etr\|_2^2}{\mathbf{B}_{(1)}^T(-\etr)}
\end{align*}
\end{lemma}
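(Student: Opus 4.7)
The plan is a direct weak-duality argument using the dual program~\eqref{prob.dual} that was already derived during the proof of Proposition~\ref{prop.new_WI}. Since $\|w^I\|_1$ equals the optimal value of the primal LP~\eqref{prob.original}, any dual-feasible $\lambda$ immediately supplies a lower bound on $\|w^I\|_1$ through its dual objective value $\lambda^T(-\etr)$.

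First I would exhibit the candidate
\begin{align*}
    \lambda^\star \defeq \frac{-\etr}{\mathbf{B}_{(1)}^T(-\etr)},
\end{align*}
which is simply the rescaling of $-\etr$ that makes the dual constraint associated with the ``northernmost'' column $\mathbf{B}_{(1)}$ tight. Its dual objective value is
\begin{align*}
    (\lambda^\star)^T(-\etr) = \frac{\|\etr\|_2^2}{\mathbf{B}_{(1)}^T(-\etr)},
\end{align*}
which is exactly the lower bound claimed in the lemma.

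Next I would verify that $\lambda^\star$ is dual-feasible, i.e.\ $|(\lambda^\star)^T\mathbf{A}_i|\leq 1$ for every column $\mathbf{A}_i$ of $\mathbf{A}$. By the definition of $\mathbf{B}_i$ we have $|\mathbf{A}_i^T(-\etr)| = \mathbf{B}_i^T(-\etr)$, and by the defining property of $\mathbf{B}_{(1)}$ (the ordering in Eq.~\eqref{eq.sorted_B}), $\mathbf{B}_i^T(-\etr) \leq \mathbf{B}_{(1)}^T(-\etr)$ for every~$i$. Dividing through by $\mathbf{B}_{(1)}^T(-\etr)$ gives $|(\lambda^\star)^T\mathbf{A}_i| \leq 1$, and weak duality then yields $\|w^I\|_1 \geq (\lambda^\star)^T(-\etr)$, which is the lemma.

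There is no real obstacle here beyond a harmless edge case: the argument requires $\mathbf{B}_{(1)}^T(-\etr) > 0$, which holds almost surely under the Gaussian model for $\mathbf{A}$ (and if it fails, the right-hand side of the claim is $+\infty$ and the bound is vacuous). The entire proof is therefore a one-line weak-duality calculation with a one-line feasibility check, mirroring and complementing the upper-bound construction used in Proposition~\ref{prop.new_WI}.
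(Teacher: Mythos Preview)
Your proposal is correct and follows essentially the same approach as the paper: both exhibit the dual-feasible point $\lambda^\star = -\etr / \mathbf{B}_{(1)}^T(-\etr)$, verify feasibility via the maximality of $\mathbf{B}_{(1)}^T(-\etr)$, and invoke weak duality to obtain the lower bound. Your treatment of the edge case $\mathbf{B}_{(1)}^T(-\etr)>0$ is a small addition but not a substantive difference.
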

\begin{proof}
Let
\begin{align*}
    \lambda_*=\frac{(-\etr)}{\mathbf{B}_{(1)}^T(-\etr)}.
\end{align*}
By the definition of $\mathbf{B}_{(1)}$, for any $i\in\{1,2,\cdots, p-s\}$, we have
\begin{align*}
    \left|\lambda_*^T\mathbf{A}_i\right|=\frac{\left|\mathbf{A}_i^T\etr\right|}{|\mathbf{B}_{(1)}^T\etr|}\leq 1.
\end{align*}
In other words, $\lambda_*$ satisfies all constraints of the problem (\ref{prob.dual}), which implies that the optimal objective value of \eqref{prob.dual} is at least
\begin{align*}
    \lambda_*^T(-\etr)=\frac{\|\etr\|_2^2}{\mathbf{B}_{(1)}^T(-\etr)}.
\end{align*}
The result of this lemma thus follows.
\end{proof}

By bounding $\mathbf{B}_{(1)}^T(-\etr)$, we can show the following result.
\begin{proposition}\label{prop.lower_wI}
When $p\leq e^{(n-1)/16}/n$ and $n\geq 17$, then
\begin{align*}
    \frac{\|w^I\|_1}{\|\etr\|_2}\geq \sqrt{1+\frac{n}{9\ln p}}
\end{align*}
with probability at least $1-3/n$.
\end{proposition}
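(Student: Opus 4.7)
The plan is to combine Lemma~\ref{le.true_lower_bound_WI} with a high-probability upper bound on $\mathbf{B}_{(1)}^T(-\etr)$. By rotational invariance of the isotropic columns of $\mathbf{A}$ and their independence from $\etr$, I may assume $-\etr = \|\etr\|_2\, e_1$, in which case $\mathbf{B}_{(1)}^T(-\etr) = \max_{i \leq p-s} |\mathbf{A}_i^T(-\etr)| = \|\etr\|_2 \max_{i\leq p-s} |\mathbf{A}_{i,1}|$. Lemma~\ref{le.true_lower_bound_WI} then gives
\[
\frac{\|w^I\|_1}{\|\etr\|_2}\ \geq\ \frac{1}{\max_i |\mathbf{A}_{i,1}|},
\]
so it suffices to prove $\max_i |\mathbf{A}_{i,1}| \leq t$ with probability at least $1-3/n$, where $t \defeq 1/\sqrt{1+n/(9\ln p)}$. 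This $t$ is chosen because, writing $\mathbf{A}_i = A'_i/\|A'_i\|_2$ with $A'_i$ standard Gaussian in $\mathds{R}^n$ and $S_i \defeq \sum_{j=2}^n(A'_{i,j})^2$, a direct calculation gives $t^2/(1-t^2) = 9\ln p/n$, so $\{|\mathbf{A}_{i,1}|>t\}$ is equivalent to $\{(A'_{i,1})^2 > (9\ln p/n)\, S_i\}$.

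Next I bound the single-index probability $\Pr(|\mathbf{A}_{i,1}|>t)$ by splitting on the $\chi^2$-concentration event $\{S_i \geq (n-1)/2\}$. Since $S_i \sim \chi^2_{n-1}$, Lemma~\ref{le.chi_bound} with $x=(n-1)/16$ (for which $2\sqrt{(n-1)x}=(n-1)/2$) yields $\Pr(S_i < (n-1)/2) \leq e^{-(n-1)/16}$. On the complementary good event, $|\mathbf{A}_{i,1}|>t$ forces $(A'_{i,1})^2 > 9(n-1)\ln p/(2n)$, and the standard Chernoff bound $\Pr(|Z|>u) \leq 2e^{-u^2/2}$ for $Z\sim\mathcal{N}(0,1)$ gives a contribution of at most $2\exp(-9(n-1)\ln p/(4n)) \leq 2p^{-2}$, using $9(n-1)/(4n) \geq 9\cdot 16/(4\cdot 17) > 2$ for $n\geq 17$. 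Hence $\Pr(|\mathbf{A}_{i,1}|>t) \leq e^{-(n-1)/16} + 2p^{-2}$.

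A union bound over the $p-s\leq p$ columns of $\mathbf{A}$ then gives
\[
\Pr\!\Big(\max_i |\mathbf{A}_{i,1}|>t\Big) \ \leq\ p\cdot e^{-(n-1)/16} + \frac{2}{p}\ \leq\ \frac{1}{n} + \frac{2}{p} \ \leq\ \frac{3}{n},
\]
where the middle inequality uses the hypothesis $p\leq e^{(n-1)/16}/n$, and the last uses $p\geq n$, which is implicit in the overparameterized regime where $w^I$ exists (i.e., $p-s\geq n$).

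The main obstacle is calibrating the constants so that the three error contributions all collapse to $O(1/n)$ simultaneously. The ``$9$'' in $\sqrt{1+n/(9\ln p)}$ is essentially the smallest integer for which $9(n-1)/(4n)\geq 2$ holds uniformly over $n\geq 17$, which is precisely what makes the Gaussian tail collapse to $p^{-2}$ and thereby survive the union bound over $p-s$ indices; the ``$1/16$'' in the hypothesis $p\leq e^{(n-1)/16}/n$ matches the $\chi^2$-deviation threshold used to isolate a linearly-large $S_i$. Either constant could be tightened only at the cost of a smaller admissible range of $p$ or a weaker lower bound on $\|w^I\|_1$.
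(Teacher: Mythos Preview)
Your proof is correct and uses the same core ingredients as the paper: Lemma~\ref{le.true_lower_bound_WI}, the rotational reduction to $\max_i|\mathbf{A}_{i,1}|$, the $\chi^2$ concentration from Lemma~\ref{le.chi_bound} with $x=(n-1)/16$, and the Gaussian tail bound. The organization, however, is genuinely cleaner than the paper's. The paper first proves an intermediate Proposition~\ref{prop.proposition_stronger} with the messier threshold $\sqrt{1+(n-1)/(4\ln n+4\ln(p-s))}$ by lower-bounding each $\Pr(|\mathbf{A}_{i,1}|\le 1/k)$ and taking the \emph{product} over $i$ (which then requires the auxiliary inequality $\ln(1-x)\ge -x/\sqrt{1-x}$ and several lines of algebra), and only afterward simplifies to the stated form via Lemma~\ref{le.9lnp}. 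You instead upper-bound each $\Pr(|\mathbf{A}_{i,1}|>t)$ and apply a \emph{union bound}, landing directly on the target constant $9$ without any intermediate result. Your route is shorter and more transparent, at no cost in generality; the paper's product approach would in principle give a slightly tighter probability estimate, but both versions deliver the same $1-3/n$ conclusion here.
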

The proof is available in Appendix \ref{app.proof_prop_14}. Comparing
Proposition~\ref{prop.new_WI} with Proposition \ref{prop.lower_wI}, we
can see that, with high probability, the upper and lower bounds of
$\|w\|_1$ differ by at most a constant factor.

\subsection{Lower bounds on \texorpdfstring{$\|\wBP\|_1$}{||wBP||1} and \texorpdfstring{$\|\wBP\|_2$}{||wBP||2}}

Using Prop.~\ref{prop.lower_wI}, we can show the following lower bound
on $\|\wBP\|_1$. 

\begin{proposition}[lower bound on $\|\wBP\|_1$]\label{th.lower_bound_l1}
When $p\leq e^{(n-1)/16}/n$ and $n\geq 17$, then
\begin{align*}
    \|\wBP\|_1\geq \frac{1}{3}\sqrt{\frac{n}{\ln p}}\|\etr\|_2
\end{align*}
with probability at least $1-3/n$.
\end{proposition}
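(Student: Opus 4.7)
The plan is to adapt the LP-duality argument used to prove Lemma~\ref{le.true_lower_bound_WI} and Proposition~\ref{prop.lower_wI}. Since $\Xtr\betaBP=\Ytr$ and $\Ytr=\Xtr\beta+\etr$, we have $\Xtr\wBP=\etr$, so $\wBP$ is a feasible (not necessarily optimal) point of the program $\min_w\|w\|_1$ s.t.\ $\Xtr w=\etr$. Any lower bound on the optimal value of this program is therefore also a lower bound on $\|\wBP\|_1$.

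The first step is a standard weak-duality estimate: for any $\lambda\in\mathds{R}^n$ with $\max_{1\le i\le p}|\lambda^T\X_i|\le 1$, H\"older's inequality gives $|\lambda^T\etr|=|\lambda^T\Xtr\wBP|\le\max_i|\lambda^T\X_i|\cdot\|\wBP\|_1\le\|\wBP\|_1$. The second step plugs in a concrete dual. Let $\tilde M\defeq\max_{1\le i\le p}|\X_i^T\etr|$, the analogue of $\mathbf{B}_{(1)}^T(-\etr)$ from Lemma~\ref{le.true_lower_bound_WI} but with the maximum ranging over all $p$ columns of $\Xtr$ instead of just the last $p-s$. Setting $\lambda_*\defeq -\etr/\tilde M$, one has $|\lambda_*^T\X_i|=|\X_i^T\etr|/\tilde M\le 1$ for every $i$, and $|\lambda_*^T\etr|=\|\etr\|_2^2/\tilde M$, yielding the deterministic bound
$$\|\wBP\|_1\ \ge\ \frac{\|\etr\|_2^2}{\tilde M}.$$

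The third step is to show that $\tilde M\le\sqrt{9\ln p/n}\,\|\etr\|_2$ with probability at least $1-3/n$ under the hypotheses $p\le e^{(n-1)/16}/n$ and $n\ge 17$. Because $\etr$ is independent of $\Xtr$ and each column $\X_i$ is rotationally symmetric on the unit sphere, $\X_i^T\etr/\|\etr\|_2$ is distributed as the first coordinate of a uniformly random unit vector in $\mathds{R}^n$, which, via the Gaussian/chi-square representation already used in Appendix~\ref{ap.sec.noise}, is well approximated by a centered Gaussian of variance $\approx 1/n$. I would combine the Mills-ratio estimate of Lemma~\ref{le.estimate_phi_c} with a union bound over the $p$ columns, exactly as in the proof of Proposition~\ref{prop.lower_wI} in Appendix~\ref{app.proof_prop_14}. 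Plugging the resulting bound on $\tilde M$ into the previous display gives $\|\wBP\|_1\ge \|\etr\|_2/\sqrt{9\ln p/n}=(1/3)\sqrt{n/\ln p}\,\|\etr\|_2$, which is the desired conclusion.

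The main obstacle is this third step: matching both the confidence level $1-3/n$ and the clean constant $9$ inside the square root (which becomes the prefactor $1/3$ in the statement) requires careful bookkeeping of the Gaussian tail together with the union bound over all $p$ columns. The probabilistic ingredients, however, are essentially the same as those already developed for Proposition~\ref{prop.lower_wI}; the only change is that the maximum is taken over $p$ rather than $p-s$ columns, which at most inflates $\tilde M$ by a factor of $\sqrt{1+s/(p-s)}$ that is easily absorbed into the slack in the constant $9$ because we are in the overparameterized regime $p>n\ge s$. Hence no genuinely new probabilistic idea is needed beyond that of Proposition~\ref{prop.lower_wI}.
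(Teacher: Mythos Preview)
Your proposal is correct and follows essentially the same route as the paper. The paper's proof observes that $\Xtr\wBP=\etr$, so $\|\wBP\|_1\ge\|w^J\|_1$ where $w^J$ is the min-$\ell_1$ interpolant over all $p$ columns, and then simply invokes Proposition~\ref{prop.lower_wI} with $s=0$ (so that $w^I$ becomes $w^J$) as a black box; your steps unwind exactly that black box, i.e., the same LP-duality choice $\lambda_*=-\etr/\tilde M$ from Lemma~\ref{le.true_lower_bound_WI} together with the Gaussian/chi-square tail analysis of Appendix~\ref{app.proof_prop_14}, now with the maximum ranging over all $p$ columns instead of $p-s$.
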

\begin{proof}
We define $w^J$ as the solution to the following optimization problem:
\begin{align*}
    \min_w \|w\|_1,\ \st \Xtr w=\etr.
\end{align*}
By definition, $\Xtr \wBP=\etr$. Thus, we have $\|\wBP\|_1\geq \|w^J\|_1$. To get a
	lower bound on $\|w^J\|_1$, we can directly use the result in
	Proposition~\ref{prop.lower_wI} because the definitions of $w^I$
	and $w^J$ are essentially the same\footnote{Notice that the
	proof of Proposition~\ref{prop.lower_wI} does not require
	$s>0$. Therefore, we can just let $s=0$ so that $w^I$ there
	becomes $w^J$.}. We
	then have, 
\begin{align*}
    \|w^J\|_1\geq \sqrt{1+\frac{n}{9\ln p}}\|\etr\|_2\geq \frac{1}{3}\sqrt{\frac{n}{\ln p}}\|\etr\|_2
\end{align*}
with probability at least $1-3/n$. The result of this proposition thus follows.
\end{proof}

Next, we proceed to prove Proposition~\ref{prop:lower:BP}, i.e., the
lower bound on $\|\wBP\|_2$. 
Because $\|\wBP\|_0=\|\betaBP-\beta\|_0\leq \|\betaBP\|_0+\|\beta\|_0\leq n+s$, we then have the following lower bound on
$\|\wBP\|_2$ assuming $n \ge s$, 
\begin{align}\label{eq.lower_wB2}
    \|\wBP\|_2\geq \frac{\|\wBP\|_1}{\sqrt{n+s}}\geq \frac{\|\wBP\|_1}{\sqrt{2n}}.
\end{align}
Combining with Prop.~\ref{th.lower_bound_l1}, we have proved
Prop.~\ref{prop:lower:BP}.

\subsection{Tightness of the bounds on \texorpdfstring{$\|\wBP\|_1$}{||wBP||1}}

As we discussed in the main body of the paper, our upper and lower bounds on
$\|\wBP\|_2$ still have a significant gap. Interesting, our bounds on
$\|\wBP\|_1$ are tight up to a constant factor, which may be of
independent interest. To show this, we first derive the following upper
bound on $\|\wBP\|_1$. 

{
\begin{proposition}[upper bound on $\|\wBP\|_1$]
When $s\leq \sqrt{\frac{n}{7168\ln (16n)}}$, if $p\in \left[(16n)^4,\ \exp\left(\frac{n}{1792s^2}\right)\right]$, then
\begin{align*}
    \|\wBP\|_1\leq \left(4\sqrt{2}+\sqrt{\frac{1}{2\sqrt{7}}}\right)\sqrt{\frac{n}{\ln p}}\|\etr\|_2,
\end{align*}
with probability at least $1-6/p$.
\end{proposition}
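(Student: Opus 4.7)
The plan is to invoke Proposition~\ref{prop.bound_WB1_KM} directly and then substitute the same high-probability estimates on $K$, $M$, and $\|w^I\|_1$ that were already assembled in Steps~1--2 of the proof of Theorem~\ref{th.main}. Specifically, under the stated hypotheses on $n$, $p$, $s$, Propositions~\ref{prop.new_WI} and~\ref{prop.M} together imply, with probability at least $1-6/p$, the three estimates collected in Eq.~\eqref{eq.estimate_M_wI_K}:
\begin{equation*}
    \|w^I\|_1 \leq \sqrt{\tfrac{2n}{\ln p}}\,\|\etr\|_2, \qquad M \leq 2\sqrt{7}\,\sqrt{\tfrac{\ln p}{n}}, \qquad K \geq 4.
\end{equation*}
I will condition on this event throughout.

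For the first term of Proposition~\ref{prop.bound_WB1_KM}, note that the map $K \mapsto 1 + 8/K + 2/\sqrt{K}$ is decreasing on $(0,\infty)$, so $K\geq 4$ gives $1 + 8/K + 2/\sqrt{K} \leq 1 + 2 + 1 = 4$. Combined with the bound on $\|w^I\|_1$, this term contributes at most $4\sqrt{2}\sqrt{n/\ln p}\,\|\etr\|_2$, which already accounts for the $4\sqrt{2}$ constant in the target.

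For the second term $2\|\etr\|_2/\sqrt{KM}$, the main obstacle (and the only genuinely new computation) is a lower bound on $KM$. Using the definition in Eq.~\eqref{eq.def_K}, one has $KM = (1+M)/s - 4M \geq 1/s - 4M$. The sparsity hypothesis implies $s \leq \frac{1}{16\sqrt{7}}\sqrt{n/\ln p}$ (this is exactly Eq.~\eqref{eq.temp_proof_s} in the proof of Theorem~\ref{th.main}), so $1/s \geq 16\sqrt{7}\sqrt{\ln p/n}$ and hence
\begin{equation*}
    KM \;\geq\; 16\sqrt{7}\,\sqrt{\tfrac{\ln p}{n}} - 4\cdot 2\sqrt{7}\,\sqrt{\tfrac{\ln p}{n}} \;=\; 8\sqrt{7}\,\sqrt{\tfrac{\ln p}{n}}.
\end{equation*}
Therefore $2/\sqrt{KM} \leq \tfrac{1}{\sqrt{2}\,7^{1/4}}(n/\ln p)^{1/4}$. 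Since $p \leq \exp(n/1792)$ forces $n/\ln p \geq 1$, I can upper bound $(n/\ln p)^{1/4} \leq \sqrt{n/\ln p}$, so the second term contributes at most $\sqrt{1/(2\sqrt{7})}\sqrt{n/\ln p}\,\|\etr\|_2$.

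Summing the two contributions yields $\|\wBP\|_1 \leq \bigl(4\sqrt{2} + \sqrt{1/(2\sqrt{7})}\bigr)\sqrt{n/\ln p}\,\|\etr\|_2$, with the same probability $1-6/p$ inherited from the conditioning step. No new probabilistic machinery is required beyond Propositions~\ref{prop.bound_WB1_KM}, \ref{prop.new_WI}, and \ref{prop.M}; the only algebraic input that did not already appear in the proof of Theorem~\ref{th.main} is the lower bound $KM \geq 8\sqrt{7}\sqrt{\ln p/n}$, which I expect to be the main (though still elementary) step.
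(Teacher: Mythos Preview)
Your proposal is correct and follows essentially the same route as the paper's proof: both invoke Proposition~\ref{prop.bound_WB1_KM} on the event \eqref{eq.estimate_M_wI_K}, bound the first term via $K\geq 4$ and $\|w^I\|_1\leq\sqrt{2n/\ln p}\,\|\etr\|_2$, bound the second via a lower bound on $KM$, and finish using $(n/\ln p)^{1/4}\leq\sqrt{n/\ln p}$. The only cosmetic difference is that you spell out the inequality $KM\geq 1/s-4M\geq 8\sqrt{7}\sqrt{\ln p/n}$ explicitly, whereas the paper absorbs this computation into a single line.
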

\begin{proof}
Following the proof of Theorem~\ref{th.main} in Appendix~\ref{app:proof:main}, we can still get that Eq.~(\ref{eq.estimate_M_wI_K}), i.e.,
\begin{align}
    M\leq 2\sqrt{7}\sqrt{\frac{\ln p}{n}},\ 
    \|w^I\|_1\leq \sqrt{\frac{2n}{\ln p}}\|\etr\|_2,\text{ and }K\geq 4,\label{eq.temp_060601}
\end{align}
hold with probability at least $1-6/p$.
Applying Eq.~(\ref{eq.temp_060601}) and Proposition~\ref{prop.bound_WB1_KM}, we have, with probability at least $1-6/p$,
\begin{align*}
    \|\wBP\|_1\leq & 4\|w^I\|_1+\sqrt{\frac{1}{2\sqrt{7}}}\left(\frac{n}{\ln p}\right)^{1/4}\|\etr\|_2\\
    \leq& 4\sqrt{\frac{2n}{\ln p}}\|\etr\|_2+\sqrt{\frac{1}{2\sqrt{7}}}\left(\frac{n}{\ln p}\right)^{1/4}\|\etr\|_2\\
    \leq & \left(4\sqrt{2}+\sqrt{\frac{1}{2\sqrt{7}}}\right)\sqrt{\frac{n}{\ln p}}\|\etr\|_2,
\end{align*}
where the last inequality is because $\frac{n}{\ln p}>1$, and therefore $(\frac{n}{\ln p})^{1/4}\leq (\frac{n}{\ln p})^{1/2}$.
\end{proof}

Comparing with Prop.~\ref{th.lower_bound_l1}, we can see that our upper
and lower bounds on $\|\wBP\|_1$ differ by at most a constant factor. 

}

\subsection{Proof of Proposition \ref{prop.lower_wI}}\label{app.proof_prop_14}
To prove Proposition \ref{prop.lower_wI}, we will prove a slightly stronger result in Proposition \ref{prop.proposition_stronger} given below.
\begin{proposition}\label{prop.proposition_stronger}
When $(p-s)\leq e^{(n-1)/16}/n$ and $n\geq 17$, the following holds.
\begin{align}\label{eq.lower_bound_WI1}
    \frac{\|w^I\|_1}{\|\etr\|_2}\geq \sqrt{1+\frac{n-1}{4\ln n+4\ln (p-s)}},
\end{align}
with probability at least $1-3/n$.
\end{proposition}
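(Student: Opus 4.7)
The plan is to start from the dual‐based lower bound $\|w^I\|_1 \geq \|\etr\|_2^2 / \mathbf{B}_{(1)}^T(-\etr)$ already established as Lemma~\ref{le.true_lower_bound_WI}, and to produce a matching \emph{upper} bound on $\mathbf{B}_{(1)}^T(-\etr)$ that holds with probability at least $1-3/n$. Since the distribution of the columns of $\mathbf{A}=[\X_{s+1}\ \cdots\ \X_p]$ is rotationally invariant, I can (as in Appendix~\ref{app.proof_of_lemma_12}) assume $\etr=[-\|\etr\|_2,0,\ldots,0]^T$ without loss of generality. Under this choice, $\mathbf{B}_{(1)}^T(-\etr)=\|\etr\|_2\,M_1$ where $M_1\defeq \max_{1\leq i\leq p-s}|\mathbf{A}_{i1}|$, so it suffices to show that $M_1\leq k$ with probability at least $1-3/n$, where $k$ is chosen so that $1/k^2-1=(n-1)/(4\ln n+4\ln(p-s))$; this is equivalent to Eq.~\eqref{eq.lower_bound_WI1}.

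To bound $M_1$ from above, I will write $\mathbf{A}_{i1}=\mathbf{A}'_{i1}/\|\mathbf{A}'_i\|_2$ for the pre-normalized Gaussian column $\mathbf{A}'_i$, so that
\begin{align*}
\{|\mathbf{A}_{i1}|\geq k\}=\bigl\{(\mathbf{A}'_{i1})^2\geq \tfrac{k^2}{1-k^2}\textstyle\sum_{j=2}^n(\mathbf{A}'_{ij})^2\bigr\}.
\end{align*}
Setting $\alpha\defeq k^2/(1-k^2)=4\ln(n(p-s))/(n-1)$ and introducing a chi-square cut-off $c=n-1-2\sqrt{(n-1)m}$ with $m=(n-1)/16$ (so that $2\sqrt{m/(n-1)}=1/2$), the standard event split gives
\begin{align*}
\Pr[|\mathbf{A}_{i1}|\geq k]\leq \Pr[(\mathbf{A}'_{i1})^2\geq \alpha c]+\Pr\bigl[\textstyle\sum_{j=2}^n(\mathbf{A}'_{ij})^2\leq c\bigr].
\end{align*}
The second term is at most $e^{-m}=e^{-(n-1)/16}$ by Lemma~\ref{le.chi_bound}, and the first is a standard Gaussian two-sided tail bounded by $2\Phi^c(\sqrt{\alpha c})\leq 2e^{-\alpha c/2}$. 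With my choice of $m$, $\alpha c/2=\ln(n(p-s))$, so $2e^{-\alpha c/2}=2/(n(p-s))$.

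A union bound over the $p-s$ columns then yields
\begin{align*}
\Pr[M_1\geq k]\leq (p-s)\bigl(2e^{-\alpha c/2}+e^{-m}\bigr)=\tfrac{2}{n}+(p-s)e^{-(n-1)/16}\leq \tfrac{3}{n},
\end{align*}
where the last inequality uses the hypothesis $(p-s)\leq e^{(n-1)/16}/n$. Combining with the Lemma~\ref{le.true_lower_bound_WI} bound concludes the proof. The main obstacle I anticipate is purely bookkeeping: I must verify that the truncation parameter $m=(n-1)/16$ is simultaneously compatible with (i) the chi-square tail condition (requiring $c>0$, which needs $n\geq 17$), (ii) the Gaussian Chernoff bound (requiring $\sqrt{\alpha c}\geq 0$, automatic), and (iii) balancing the two error terms so that each is at most $2/n$ and $1/n$, respectively. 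The $\ln n+\ln(p-s)$ combination in the denominator of Eq.~\eqref{eq.lower_bound_WI1}, as opposed to just $\ln(p-s)$, is precisely the price paid for the union bound, and picking $m$ proportional to $n$ (rather than to $\ln(p-s)$) is what makes the constants line up, so the sparsity-free condition $(p-s)\leq e^{(n-1)/16}/n$ in the statement is exactly the requirement that $e^{-m}$-term survives the $(p-s)$-union bound.
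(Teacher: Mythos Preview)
Your proposal is correct and follows the paper's strategy almost exactly: both start from Lemma~\ref{le.true_lower_bound_WI}, rotate so that $\etr$ points along the first axis, split the event $\{|\mathbf{A}_{i1}|\geq k\}$ using the chi-square cutoff $c=n-1-2\sqrt{(n-1)m}$ with $m=(n-1)/16$, and observe that $\alpha c/2=\ln(n(p-s))$. The one genuine difference is in how the per-column bounds are aggregated. The paper lower-bounds each $\Pr[\,|\mathbf{A}_{i1}|\leq 1/k\,]$, takes the product over the $p-s$ independent columns, and then spends a page converting $(1-x)^{p-s}$ back into $1-O(1/n)$ via the auxiliary inequality $\ln(1-x)\geq -x/\sqrt{1-x}$ (Lemma~\ref{le.bound_log}); this is also where the hypothesis $n\geq 17$ is actually consumed, through estimates like $n(p-s)\geq 17$ and $e^{-(n-1)/16}\leq 1/e$. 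You instead upper-bound each $\Pr[\,|\mathbf{A}_{i1}|\geq k\,]$ and take a union bound, which reaches $3/n$ in one line and is strictly simpler. A small correction to your self-diagnosis: with $m=(n-1)/16$ one has $c=(n-1)/2$, so $c>0$ already for $n\geq 2$; your route in fact never needs $n\geq 17$, so the hypothesis is slack in your version.
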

To prove Proposition \ref{prop.proposition_stronger}, we introduce a technical lemma first.
\begin{lemma}\label{le.bound_log}
For any $x\in[0, 1)$, we have
\begin{align}\label{eq.upper_bound_WI1}
    \ln(1-x)\geq \frac{-x}{\sqrt{1-x}}.
\end{align}
\end{lemma}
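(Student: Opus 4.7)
The plan is to reduce the inequality to an elementary calculus exercise by defining the difference function $f(x) \defeq \ln(1-x) + \frac{x}{\sqrt{1-x}}$ on $[0,1)$ and showing $f(x) \geq 0$ for all $x$ in this interval. At the boundary $x = 0$, we have $f(0) = \ln 1 + 0 = 0$, so it suffices to establish that $f$ is non-decreasing on $[0,1)$.

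To do so, I would differentiate $f$ and simplify. The first term contributes $-1/(1-x)$. For the second term $x/\sqrt{1-x}$, the quotient rule (or product rule after rewriting as $x(1-x)^{-1/2}$) gives $(2-x)/[2(1-x)^{3/2}]$. Putting everything over the common denominator $2(1-x)^{3/2}$, I obtain
\begin{align*}
    f'(x) = \frac{2 - x - 2\sqrt{1-x}}{2(1-x)^{3/2}}.
\end{align*}
The denominator is positive on $[0,1)$, so the sign of $f'(x)$ is determined by the numerator $g(x) \defeq 2 - x - 2\sqrt{1-x}$.

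To prove $g(x) \geq 0$, I would make the substitution $u = \sqrt{1-x}$, so that $u \in (0,1]$ and $x = 1 - u^2$. Then
\begin{align*}
    g(x) = 2 - (1 - u^2) - 2u = 1 - 2u + u^2 = (1-u)^2 \geq 0,
\end{align*}
which holds trivially (with equality only at $u = 1$, i.e., $x = 0$). Hence $f'(x) \geq 0$ throughout $[0,1)$, which combined with $f(0)=0$ yields $f(x) \geq 0$ and thus Eq.~\eqref{eq.upper_bound_WI1}.

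I do not anticipate a real obstacle here; this is a one-variable monotonicity argument and the only mildly delicate step is the algebraic simplification of $f'(x)$ into a form where the perfect-square factorization $(1-\sqrt{1-x})^2$ becomes visible. An alternative route would be to substitute $x = 1 - e^{-2t}$ for $t \geq 0$ and reduce the claim to $2t \geq 1 - e^{-t}$ multiplied by an appropriate factor, but the direct derivative calculation above is more transparent.
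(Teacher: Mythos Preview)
Your proposal is correct and essentially identical to the paper's proof: both define $f(x)=\ln(1-x)+x/\sqrt{1-x}$, note $f(0)=0$, compute $f'(x)=\dfrac{2-x-2\sqrt{1-x}}{2(1-x)^{3/2}}$, and recognize the numerator as the perfect square $(1-\sqrt{1-x})^2$. The only cosmetic difference is that you introduce the substitution $u=\sqrt{1-x}$ to expose the square, whereas the paper factors directly.
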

\begin{proof}
Let
\begin{align*}
    f(x)=\ln(1-x)+\frac{x}{\sqrt{1-x}}.
\end{align*}
Note that $f(0)=0$. Thus, it suffices to show that $df(x)/dx\geq 0$ when $x\in[0, 1)$. Indeed, we have
\begin{align*}
    \frac{d f(x)}{d x}=&\frac{-1}{1-x}+\frac{\sqrt{1-x}-x\frac{-1}{2\sqrt{1-x}}}{1-x}\\
    =&\frac{-\sqrt{1-x}+1-x+x/2}{(1-x)^{3/2}}\\
    =&\frac{2-x-2\sqrt{1-x}}{2(1-x)^{3/2}}\\
    =&\frac{(1-\sqrt{1-x})^2}{2(1-x)^{3/2}}\\
    \geq& 0.
\end{align*}
The result of this lemma thus follows.
\end{proof}
We are now ready to prove Proposition \ref{prop.proposition_stronger}.

\noindent\begin{myproof}{Proposition \ref{prop.proposition_stronger}}
Because of Lemma \ref{le.true_lower_bound_WI}, we only need to show that
\begin{align*}
    \frac{\|\etr\|_2}{\mathbf{B}_{(1)}^T(-\etr)}\geq\sqrt{1+\frac{n-1}{4\ln n+4\ln (p-s)}},
\end{align*}
with probability at least $1-3/n$.
Similar to what we do in Appendix \ref{app.proof_of_lemma_12}, without loss of generality, we let $\etr=[-\|\etr\|_2\ \ 0\ \ \cdots\ \ 0]^T$. Thus,
\begin{align*}
    \frac{\|\etr\|_2}{\mathbf{B}_{(1)}^T(-\etr)}=\frac{1}{\max_i|\mathbf{A}_{i1}|}.
\end{align*}
We uses the following two steps in order to get an upper bound of $1/{\max_i|\mathbf{A}_{i1}|}$. Step 1: estimate the distribution of $1/{|\mathbf{A}_{i1}|}$ for any $i\in \{1,\cdots,p-s\}$. Step 2: utilizing the fact that all $\mathbf{A}_{i1}$'s are independent, we estimate $1/{\max_i|\mathbf{A}_{i1}|}$ base on the result in Step 1.

The Step 1 proceeds as following.
For any $i\in \{1,\cdots,p-s\}$ and any $k\geq 0$, we have
\begin{align*}
    &\prob{\frac{1}{|\mathbf{A}_{i1}|}\geq k}\\
    =&\prob{(\mathbf{A}'_{i1})^2\leq \frac{\sum_{j=2}^n(\mathbf{A}'_{ij})^2}{k^2-1}}\text{ (by Eq. \eqref{eq.temp_0124_2})}.
\end{align*}
Therefore, for any $m>0$, we have
\begin{align}
    &\prob{\frac{1}{|\mathbf{A}_{i1}|}\geq k}\nonumber\\
    \geq & \prob{(\mathbf{A}'_{ij})^2\leq \frac{n-1-2\sqrt{(n-1)m}}{k^2-1}}\nonumber\\
    &\cdot\prob{\sum_{j=2}^n(\mathbf{A}'_{ij})^2>n-1-2\sqrt{(n-1)m}}\text{ (because all $\mathbf{A}'_{ij}$'s are independent)}\nonumber\\
    \geq & \left(1-2\Phi^c\left(\sqrt{\frac{n-1-2\sqrt{(n-1)m}}{k^2-1}}\right)\right)\nonumber\\
    &\cdot\left(1-e^{-m}\right)\text{ (by Lemma \ref{le.chi_bound})}.\label{eq.temp_0124_4}
\end{align}
Let $m=(n-1)/16$ and define
\begin{align}\label{eq.temp_def_t}
    t\defeq\sqrt{\frac{(n-1)/2}{k^2-1}}.
\end{align}
We have
\begin{align}\label{eq.temp_0124_3}
    \sqrt{\frac{n-1-2\sqrt{(n-1)m}}{k^2-1}}=t.
\end{align}
Substituting Eq. \eqref{eq.temp_0124_3} and $m=(n-1)/16$ to Eq. \eqref{eq.temp_0124_4}, we have
\begin{align*}
    &\prob{\frac{1}{|\mathbf{A}_{i1}|}\geq k}\geq \left(1-e^{-(n-1)/16}\right)(1-2\Phi^c(t))\\
    &\geq \left(1-e^{-(n-1)/16}\right)\left(1-\frac{2\sqrt{2/\pi}e^{-t^2/2}}{t+\sqrt{t^2+\frac{8}{\pi}}}\right) \text{(by Lemma \ref{le.estimate_phi_c})}\\
    &\geq \left(1-e^{-(n-1)/16}\right)\left(1-e^{-t^2/2}\right)\text{ (since $t\geq 0\implies t+\sqrt{t^2+8/\pi}\geq 2\sqrt{2/\pi}$)}.
\end{align*}
Now, let $k$ take the value of the RHS of Eq. \eqref{eq.lower_bound_WI1}, i.e.,
\begin{align*}
    k=\sqrt{1+\frac{n-1}{4\ln n+4\ln (p-s)}}.
\end{align*}
By Eq. \eqref{eq.temp_def_t}, we have
\begin{align*}
    t^2=&\frac{(n-1)/2}{k^2-1}\\
    =&\frac{(n-1)/2}{\left(\sqrt{1+\frac{n-1}{4\ln n+4\ln (p-s)}}\right)^2-1}\text{ (substituting the value of $k$)}\\
    =&2\ln n+2\ln (p-s),
\end{align*}
which implies that
\begin{align*}
    e^{-t^2/2}=\frac{1}{n(p-s)}.
\end{align*}
Thus, we have
\begin{align}\label{eq.temp_0124_1}
    &\prob{\frac{1}{|\mathbf{A}_{i1}|}\geq k}\geq \left(1-e^{-(n-1)/16}\right)\left(1-\frac{1}{n(p-s)}\right).
\end{align}
Next, in Step 2, we use Eq. \eqref{eq.temp_0124_1} to estimate $1/\max_i|\mathbf{A}_{i1}|$. Since all $\mathbf{A}_{i1}$'s are independent, we have
\begin{align*}
    &\prob{\frac{1}{\max_i|\mathbf{A}_{i1}|}\geq k}\\
    =&\prod_{i=1}^{p-s} \prob{\frac{1}{|\mathbf{A}_{i1}|}\geq k}\text{ (since all $\mathbf{A}_{i1}$ are independent)}\\
    \geq &\left(\left(1-e^{-(n-1)/16}\right)\left(1-\frac{1}{n(p-s)}\right)\right)^{p-s}\text{ (by Eq. \eqref{eq.temp_0124_1})}\\
    =&\exp\left({(p-s)\ln(1-e^{-(n-1)/16})}\right)\\
    &\cdot\exp\left({(p-s)\ln(1-\frac{1}{n(p-s)})}\right)\\
    \geq &\exp\left({-\frac{(p-s)e^{-(n-1)/16}}{\sqrt{1-e^{-(n-1)/16}}}}\right)\exp\left({(p-s)\frac{-\frac{1}{n(p-s)}}{\sqrt{1-\frac{1}{n(p-s)}}}}\right)\\
    &\text{ (by Lemma \ref{le.bound_log})}\\
    =&\exp\left({-\frac{(p-s)e^{-(n-1)/16}}{\sqrt{1-e^{-(n-1)/16}}}}\right)\exp\left({\frac{-1}{n\sqrt{1-\frac{1}{n(p-s)}}}}\right)\\
    \geq &\left(1-\frac{(p-s)e^{-(n-1)/16}}{\sqrt{1-e^{-(n-1)/16}}}\right)\left(1-\frac{1}{n\sqrt{1-\frac{1}{n(p-s)}}}\right)\\
    &\text{ (because $e^x\geq 1+x$)}\\
    \geq &\left(1-\frac{1}{n\sqrt{1-e^{-(n-1)/16}}}\right)\left(1-\frac{1}{n\sqrt{1-1/17}}\right)\\
    &\text{ (based on the assumption of the proposition, i.e., $p-s\leq e^{(n-1)/16}/n$ and $n(p-s)\geq n \geq 17$)}\\
    \geq &\left(1-\frac{1}{n\sqrt{1-1/e}}\right)\left(1-\frac{1}{n\sqrt{1-1/17}}\right)\text{ (because $n\geq 17$)}\\
    =& 1 - \frac{1}{n\sqrt{1-1/e}} - \frac{1}{n\sqrt{1-1/17}}+\frac{1}{n\sqrt{1-1/e}}\frac{1}{n\sqrt{1-1/17}}\\
    \geq & 1-\frac{2}{\sqrt{1-1/e}}\cdot\frac{1}{n}\text{ (because $17>e$)}\\
    \geq & 1-3/n\text{ (because $e\geq 9/5$)}.
\end{align*}
The result of this proposition thus follows.
\end{myproof}

Finally, we use the following lemma to simplify the expression in Proposition \ref{prop.proposition_stronger}. The result of Proposition \ref{prop.lower_wI} thus follows.
\begin{lemma}\label{le.9lnp}
If $n\geq 17$, then
\begin{align*}
    \sqrt{1+\frac{n-1}{4\ln p + 4\ln (p-s)}}\geq \sqrt{1+\frac{n}{9\ln p}}.
\end{align*}
\end{lemma}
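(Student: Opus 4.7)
The plan is to reduce the inequality to an elementary comparison of logarithms. Both sides of the claim are of the form $\sqrt{1+x}$, so since square root is monotone on nonnegative reals it suffices to compare the fractions directly; both denominators are positive under the implicit standing assumptions $p>1$ and $p-s\geq 1$ (which hold in every context where Lemma~\ref{le.9lnp} is invoked, since Proposition~\ref{prop.proposition_stronger} requires $p-s\geq n\geq 17$). So I would first reduce the claim to
\begin{align*}
\frac{n-1}{4\ln p+4\ln(p-s)}\;\geq\;\frac{n}{9\ln p}.
\end{align*}

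Cross-multiplying (both denominators are positive) yields the equivalent inequality
\begin{align*}
9(n-1)\ln p \;\geq\; 4n\ln p + 4n\ln(p-s),
\end{align*}
which after rearrangement becomes $(5n-9)\ln p \geq 4n\ln(p-s)$. Since $p-s\leq p$ we have $\ln(p-s)\leq \ln p$, so it suffices to verify $5n-9 \geq 4n$, i.e.\ $n\geq 9$. This is implied by the hypothesis $n\geq 17$, which completes the proof.

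There is essentially no obstacle here; the lemma is a routine algebraic simplification whose only content is that, for $n\geq 17$, $9(n-1)\geq 8n$, so $9(n-1)\ln p$ dominates the combined logarithmic term $4n(\ln p + \ln(p-s))\leq 8n\ln p$. The stronger hypothesis $n\geq 17$ is inherited from Proposition~\ref{prop.proposition_stronger} rather than being necessary for this lemma itself; the argument would actually go through for any $n\geq 9$.
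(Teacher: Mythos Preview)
Your proof is correct and follows essentially the same approach as the paper: both reduce the square-root inequality to the fraction comparison, use $\ln(p-s)\leq \ln p$, and arrive at the sufficient condition $n\geq 9$ (the paper phrases this as $\tfrac{n-1}{n}\geq \tfrac{8}{9}$, which is the same thing). Your observation that $n\geq 9$ already suffices is also implicit in the paper's argument.
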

\begin{proof}
Because $n\geq 17$, we have
\begin{align*}
    \frac{n-1}{n}=1-\frac{1}{n}\geq 1-\frac{1}{17}\geq \frac{8}{9}.
\end{align*}
Therefore, we have
\begin{align*}
    &\frac{n-1}{n}\geq \frac{4}{9}+\frac{4}{9}\\
    \implies & \frac{n-1}{n}\geq \frac{4\ln p}{9\ln p}+\frac{4\ln (p-s)}{9\ln p}\\
    \implies & \frac{n-1}{n}\geq \frac{4\ln p+ 4\ln (p-s)}{9\ln p}\\
    \implies & \frac{n-1}{4\ln p+ 4\ln (p-s)}\geq \frac{n}{9\ln p}\\
    \implies & \sqrt{1+\frac{n-1}{4\ln p + 4\ln (p-s)}}\geq \sqrt{1+\frac{n}{9\ln p}}.
\end{align*}
\end{proof}

\end{document}